\newcommand\blfootnote[1]{%
  \begingroup
  \renewcommand\thefootnote{}\footnote{#1}%
  \addtocounter{footnote}{-1}%
  \endgroup
}
\newcommand{\tH}{\tilde{\cH}}
\title{Optimal Rates and Saturation for Noiseless \\ Kernel Ridge Regression}
\author{
Jihao Long\thanks{Institute for Advanced Algorithmic Research, Shanghai, \texttt{longjh1998@gmail.com}}
\and
Xiaojun Peng\thanks{University of Science and Technology of China, \texttt{pengxiaojun@mail.ustc.edu.cn}}
\and
Lei Wu\thanks{Peking University,  \texttt{leiwu@math.pku.edu.cn}}
\thanks{AI for Science Institute, Beijing}
}
\date{\today}
\begin{document}

\maketitle

\begin{abstract} 
Kernel ridge regression (KRR), also known as the least-squares support vector machine, is a fundamental method for learning functions from finite samples. While most existing analyses focus on the noisy setting with constant-level label noise, we present a comprehensive study of KRR in the {\it noiseless regime} -- a critical setting in scientific computing where data are often generated via high-fidelity numerical simulations.

We establish that, up to logarithmic factors, noiseless KRR achieves  minimax optimal convergence rates, jointly determined by the eigenvalue decay of the associated integral operator and the target function's smoothness. These rates are derived under  Sobolev-type interpolation norms, with the $L^2$ norm as a special case.
Notably, we uncover two key phenomena: an {\it extra-smoothness} effect, where the KRR solution exhibits higher smoothness than typical functions in the native reproducing kernel Hilbert space (RKHS), and a {\it saturation} effect, where the  KRR's adaptivity to the target function's smoothness plateaus beyond a certain level. Leveraging these insights, we also derive a novel error bound for noisy KRR that is \emph{noise-level aware} and achieves minimax optimality in both noiseless and noisy regimes.
As a key technical contribution, we introduce  a refined notion of degrees of freedom, which we believe has broader applicability in the analysis of kernel methods. Extensive numerical experiments validate our theoretical results and provide insights beyond existing theory.
\end{abstract}

\noindent \textbf{Keywords:} reproducing kernel Hilbert space (RKHS), kernel ridge regression, interpolation norm, saturation effect, degrees of freedom, minimax optimality 

\noindent \textbf{MSC2020 Math Subject Classifications:} 41A25, 41A46,  46E22, 62G08, 65J22

\doparttoc 
\faketableofcontents 
\part{} 

\blfootnote{All authors contributed equally, and the order follows the alphabetical convention.}
\vspace*{-5em}
\section{Introduction}
In this paper, we consider the problem of  learning an unknown target function $f^*:\cX\mapsto\RR$ from a finite set of  samples $\{(x_i, y_i = f^*(x_i) + \xi_i)\}_{i=1}^n$, where $\{x_i\}_{i=1}^n$ are input points  and $\{y_i\}_{i=1}^n$ are the potentially noisy labels with $\{\xi_i\}_{i=1}^n$ representing the label noise. The goal is to learn $f^*$ as accurately as possible, typically measured using a  norm such as the $L^2$ norm. This regression problem plays a fundamental role across  applied mathematics, statistics, and machine learning. 

Among the many regression methods, kernel ridge regression (KRR), also known as least-squares support vector machines~\cite{steinwart2008support}, stands out for its robust theoretical guarantees,  practical flexibility, and  computational efficiency. Specifically,
KRR employs a reproducing kernel Hilbert space (RKHS) \cite{aronszajn1950theory} associated with a positive  semidefinite kernel $k: \cX \times \cX \to \RR$ as its hypothesis space. Specifically, the KRR estimator $\hf_\lambda$ is obtained by   solving:
\begin{equation}\label{eqn: krr-0}
\hf_\lambda = \argmin_{f\in \cH} \frac{1}{n}\sum_{i=1}^n (f(x_i)-y_i)^2 + \lambda \|f\|_{\cH}^2,
\end{equation}  
where $\lambda$ is a regularization hyperparameter that balances the trade-off between data fitting and function smoothness. Although $\cH$ may be infinite-dimensional, the representer theorem~\cite{scholkopf2001generalized} ensures that $\hf_\lambda$ must lie in the linear span of $\{k(x_i,\cdot)\}_{i=1}^n$. Consequently, the  optimization problem in \eqref{eqn: krr-0}, though potentially infinite-dimensional, can be reduced to an $n$-dimensional quadratic program, allowing it to be solved computationally efficiently.

Over the past decades, the theoretical underpins of KRR have been extensively developed in applied mathematics~\cite{schaback2006kernel}, non-parametric statistics~\cite{gyorfi2002distribution}, and machine learning~\cites{smale2007learning,steinwart2008support} literature. A central focus has been on analyzing the convergence rates of the generalization error, typically measured by the $L^2$ norm. Assuming that the level of label noise is constant,   optimal  rates can be established under various assumptions; see, e.g., \cites{caponnetto2007optimal,steinwart2008support,steinwart2009optimal,dicker2017kernel,fischer2020sobolev,li2023asymptotic} and the references therein. In particular,  \cite{fischer2020sobolev} extended these results by measuring the generalization error in a family of  Sobolev-type  norms \(\cH^p\) for $p\in [0,1]$. Notably, the $\cH^p$ norm provides a continuous interpolation between the standard $L^2$ norm (at \(p=0\)) and the native RKHS norm (at $p=1$)~\cite{smale2007learning}, thereby enabling a more fine-grained characterization of the learning behavior of KRR. 

While these results are foundational for classical statistical learning scenarios, they rely on the premise of substantial label noise. This assumption becomes restrictive in modern applications such as \emph{scientific computing}, where data often comes from high-fidelity numerical simulations (e.g., solving partial differential equations \cites{sirignano2018dgm,han2018solving,raissi2019physics,yu2018deep,rfm-pde} or modeling molecular dynamics \cite{zhang2018deep}), in which the noise can be negligible. This mismatch raises a pivotal question: 
\begin{center}
{\it Do standard convergence guarantees and related phenomena for noisy \\ KRR remain valid in the noiseless regime?}
\end{center}
Moreover, understanding the noiseless case is crucial for characterizing how learning error evolves with varying noise levels. As we will show, existing convergence rates for noisy KRR -- derived under constant-level noise assumptions -- do not smoothly transition to the correct noiseless rate in the vanishing noise limit.

\subsection{Our Contribution}
In this paper, we present a comprehensive study of KRR in the noiseless setting. Specifically,
we consider the classical source condition~\cite{caponnetto2007optimal} on the target function:
 $f^* \in \cH^s$ with $s\geq 0$, where $\cH^s$ is a Sobolev-type interpolation space, quantifying the relative smoothness of $f^*$ (see Definition~\ref{def: Hks-space}).  A larger $s$ corresponds to a smoother $f^*$. Notably, $s > 1$ indicates that $f^*$ has smoothness exceeding that of  the RKHS (since $\cH=\cH^1$), while $s < 1$ represents a \emph{misspecified case} where $f^*$ lies outside the hypothesis space.
For the underlying kernel $k:\cX\times \cX\mapsto\RR$, we assume $\int k(x,x)\dd\rho(x)<\infty$ and let $\{\mu_j\}_{j=1}^\infty$ denote the eigenvalues of the associated integral operator, arranged in non-increasing order. Accordingly, let $\{e_j\}_{j=1}^\infty$ be the corresponding orthonormal eigenfunctions and assume they form a complete basis of $L^2(\rho)$. For more details, we refer to Section~\ref{sec: preliminary}.
We measure the learning error under the interpolation norm:
\[
\cE_{p}(n,\lambda;f^*):=\|\hf_\lambda-f^*\|_{\cH^p}^2, \qquad \cE_{p}(n;f^*)=\min_{\lambda\geq 0} \cE_{p}(n,\lambda;f^*). 
\]
Note that $\cE_{p}(n;f^*)$ represents the error achieved by optimally-tuned KRR. 


\paragraph*{\bf Optimal rates of noiseless KRR.}
We first show that as long as $p$ is sufficiently small such that $\sum_j\mu_j^{2-p}<\infty$ and $p\leq s$, then, with a high probability, the error satisfies 
\[
\cE_{p}(n;f^*)=\tO(\mu_n^{\min(s,2)-p}),
\] 
where $\tO(\cdot)$ denotes the standard big-O notation with logarithmic factors omitted.
Furthermore, if $s$ is sufficiently large  such that $\sum_j\mu_j^s<\infty$, uniform learning is achievable in the sense that
$
\sup_{f^*\in \cH^s}\cE_{p}(n;f^*)=\tO(\mu_n^{\min(s,2)-p}).
$
This indicates that the same set of $\{x_i\}_{i=1}^n$ can be used to learn effectively all functions within $\cH^s$. Additionally, we establish lower bounds that confirm the optimality of these learning rates, up to logarithmic factors. These results reveal two key insights about KRR:
\begin{itemize}
	   \item {\it The adaptability to the target function's smoothness:} In the absence of noise,   KRR can achieve optimal learning rates for target functions with smoothness up to $s=2$, mirroring the behavior observed in noisy settings.
  \item {\it The extra-smoothness effect:} The KRR solution exhibits a higher degree of smoothness than expected,  as the parameter $p$ can exceed $1$, despite KRR being designed to search solutions within $\cH^1$.
\end{itemize}

\vspace*{-.5em}
\paragraph*{Improved results for noisy KRR.} Building on insights from our noiseless results, we derive a new upper bound for KRR in the presence of noise. This bound offers two key advantages over existing results: (1) First, it accommodates the case of $p > 1$, while existing bounds are restricted to $p \in [0,1]$. This broadens the scope of problems to which KRR can be effectively applied. (2) Second, our bounds are {\it noise-level aware}, simultaneously achieving optimal rates in both noisy and noiseless settings. 

\vspace*{-.5em}
\paragraph*{Saturation effect in noiseless KRR.}  
We present a systematic theoretical analysis of the saturation effect in noiseless KRR, confirming the necessity of the assumptions made about the smoothness parameters \( s \) and \( p \).  Our analysis highlights two critical findings:
\begin{itemize}
  \item {\bf Smoothness constraints on the KRR solution.} We prove that if \( p \) is sufficiently large such that \( \sum_j \mu_j^{2-p} = \infty \), then there exists  RKHSs in which the $\cH^p$ error of KRR solutions diverges (i.e., \( \cE_{p}(n; f^*) = \infty \)) with probability $1$ for any nonzero \( f^* \in \cH^s \). This result indicates that the smoothness of a KRR solution is constrained by the threshold 
$
p_0 = \inf \{ p : \sum_j \mu_j^{2-p} < \infty \},
$
justifying the necessity of our assumption on $p$.
  \item {\bf Saturation of smoothness adaptation.} We  show that if $f^*$ has smoothness greater than $2$ (i.e., $s>2$), the performance of noiseless KRR reaches a plateau and does not improve further. This saturation stems from the representer theorem, which confines the solution in  \(\spn\{ k(x_i,\cdot): i\in [n]\} \) and hence, restricts the model's expressive power. Leveraging this insight, we establish lower bounds for dot-product kernels on the unit sphere and periodic kernels on the torus, showing that the saturation indeed occurs precisely at \( s = 2 \).
\end{itemize}

In Section~\ref{sec: experiment}, we present extensive numerical experiments that validate both the optimality of the convergence rates and the presence of the saturation effect. These experiments indicate that our theoretical findings reflect typical performance rather than merely representing worst-case scenarios.
\begin{remark}[Polynomial decay]
When \( \mu_j \lesssim j^{-\beta}(\log j)^{-\alpha} \) with \( \beta \geq 1 \) and \( \alpha \geq 1 \), the error under $\cH^p$ norm is given by \( \tilde{O}(n^{-\beta(s-p)}) \) for \( s \in [0,2], p \in [0, \min(s,2-1/\beta)] \).  A larger value of \( \beta \), corresponding to faster eigenvalue decay, permits larger values of \( p \), meaning the learned function can achieve higher-order smoothness. Notably, when the eigenvalue decay is sufficiently fast, \( p \) can be as large as $2$. 
\end{remark}

\begin{remark}[Exponential decay]
When \( \mu_j \lesssim c^j \) with \( c \in (0,1) \), KRR achieves an error rate of \( e^{-\tilde{\Omega}(n)} \). This is known as {\it spectral accuracy} in numerical analysis \cite{trefethen2000spectral}, suggesting that logarithmically many samples suffice for learning functions in the corresponding RKHS.
\end{remark}

\paragraph*{Technical innovations.} A central technical contribution of this work is the introduction of a refined notion of degrees of freedom (DoFs) (see Definition~\ref{def: degree-Fp}), which enables  our fine-grained analysis of noiseless KRR. In particular,  the refined DoFs allow us to to establish error bounds under the $\cH^p$ norm for values of $p$ exceeding the conventional range of $[0,1]$. In addition, departing from prior approaches that rely on the embedding properties~\cite{fischer2020sobolev}, we develop a symmetry-based argument to bound  DoFs, yielding a general estimate that applies to all symmetric kernels. We believe that these techniques have broader implications for the analysis of kernel methods, as evidenced by our improved results on noisy KRR discussed earlier.

\vspace*{-.3em}
\subsection{Related Work} 
\vspace*{-.1em}
To contextualize our contributions and highlight how our results advance the understanding of KRR and the learning of RKHS functions, we provide a detailed comparison with prior work.

\vspace*{-.4em}
\paragraph*{KRR in the noisy regime.}
 Pioneering works, adopting the {\it integral operator} technique, established that KRR is minimax optimal for the well-specified case where the relative smoothness of the target function satisfies $s\in [1,2]$~\cites{caponnetto2007optimal,steinwart2009optimal}. Later, by employing {\it empirical process} techniques, \cite{steinwart2009optimal} extended the  optimal rates under the $L^2$ norm to the misspecified case, where  $s$ can be smaller than $1$. However, empirical process techniques  cannot handle the convergence under the general interpolation norm, as it typically requires the loss function to be Lipschitz continuous. To address this, \cite{fischer2020sobolev} combined the integral operator technique with an  embedding property—which captures the relationship between eigenvalues and eigenfunctions—enabling  an analysis of noisy KRR in the misspecified setting under the interpolation norm \( \cH^p \) for \( p \in [0,1] \). 
However, \cite{fischer2020sobolev} imposed the additional requirement \(\|f^*\|_{\infty} < \infty\), i.e., $f^*\in \cH\cap L^\infty(\rho)$. This condition was later relaxed in \cite{zhang2023optimality} to \(f^* \in \cH^s \cap L^q(\rho)\) for some \(q > 2\). Moreover, \cite{zhang2024optimality} showed that  $\cH$ can be  continuously embedded into $L^q(\rho)$, thereby establishing the optimality of KRR with no further assumptions.  In this paper, we instead incorporate the classical integral operator technique with a newly introduced degrees of freedom (DoFs), enabling a comprehensive and precise analysis of KRR in the noiseless regime. Moreover, leveraging this new DoF, we also establish the optimality of noisy KRR under the \(\cH^p\) norm with \(p > 1\), extending beyond existing results that are limited to  $p\in [0,1]$.

For the case where \( s > 2 \), previous works such as \cites{bauer2007regularization, dicker2017kernel, li2022saturation} have shown that KRR achieves the same convergence rate as in the case where \( s = 2 \). This phenomenon, known as \emph{saturation}, implies that KRR does not benefit from smoothness beyond \( s = 2 \). 
In this paper, we establish that noiseless KRR exhibits a similar saturation effect at the same critical threshold \( s = 2 \). Furthermore, while spectral algorithms such as spectral cut-off can overcome saturation in the presence of noise~\cites{bauer2007regularization, lin2020optimal}, we show that they fail to do so in the noiseless setting. This is because the saturation in noiseless KRR arises from the representer theorem, rather than from the ridge-type regularization as in the noisy regime~\cite{li2022saturation}. 

\vspace*{-.2em}
\paragraph*{KRR in the noiseless regime.}
The analysis of noiseless KRR has recently gained attention in the context of learning curve studies in machine learning. Several works, such as \cites{spigler2020asymptotic, bordelon2020spectrum, cui2021generalization}, employed non-rigorous approaches under Gaussian design to derive convergence rates. A more rigorous analysis was provided in  
\cite{li2023asymptotic}, which removed the Gaussian assumption but considered a modified source condition assumption:
$
    f^* = \sum_{j=1}^\infty a_j j^{-\frac{1}{2}}\mu_j^{\frac{s}{2}}e_j
$
with $s>0$ and $|a_j| \le C$. 
In contrast, we focus on the standard source condition:
$
    f^* = \sum_{j=1}^\infty a_j \mu_j^{\frac{s}{2}} e_j
$
with $\sum_{j=1}^\infty a_j^2 < \infty$. 
Our contributions include a comprehensive analysis of noiseless KRR under the general \(\cH^p\) norm, along with a detailed characterization of the admissible range of \(p\). In comparison, \cite{li2023asymptotic} focused solely on the \(L^2\) error (i.e., \(p = 0\)). Technically, our approach differs significantly, enabling us to work under the standard source condition without modifications and to provide a thorough study of the range of $p$. Additionally, we establish lower bounds that reveal the fundamental origin of the saturation effect in noiseless KRR, showing that it stems from the representer theorem rather than ridge-type regularization. This  highlights a fundamental limitation of kernel-based methods for function learning in the absence of noise.

\vspace*{-.2em}
\paragraph*{Sampling number of RKHS.}  
The sample complexity of learning functions in a RKHS in the noiseless setting has been studied in approximation theory under the notion of the \emph{sampling number}; see, e.g., \cites{wasilkowski2001power, kuo2009power}. 
Recently, \cite{krieg2021function} showed that the \( L^2 \) error is of order \( O(\mu_n) \) (up to logarithmic factors) when employing a  weighted sampling strategy. \cite{dolbeault2023sharp} further demonstrated that if input points can be chosen arbitrarily, the logarithmic factors can be removed, relying on a solution to the Kadison-Singer problem. 
From the perspective of KRR, these analyses focus on the specific case of \( s = 1 \) and \( p = 0 \). In contrast, we consider the learning of the entire range \( s \in (0, \infty) \) and under a general \(\cH^p\) norm. Additionally, we also strengthen the results of \cite{krieg2021function} by showing that, for kernels with certain symmetry properties, weighted sampling is unnecessary.

   \vspace*{-.5em}
\subsection{Notations}

We use $\NN^{+}$ denote the set of positive integers. 
We shall use $\|\cdot\|$ to denote an operator norm without specifying the input and output space for brevity.  Let  $\cP(\Omega)$ denote the set of probability measures over $\Omega$.  
 For any $\gamma \in \cP(\Omega)$, we denote by $L^q(\cX, \gamma)$ the classical $L^q$ space, and often write $L^q(\gamma)$ when the underlying space $\cX$ is clear from context. For any $f, g \in L^2(\gamma)$, we write $\langle f, g \rangle_\gamma = \int f(x)g(x) , \dd \gamma(x)$ for the inner product, and use $\|\cdot\|_\gamma$ to denote the associated norm. 
 Given two Banach spaces $\cA$ and $\cB$  with $\cA\subset \cB$, we denote by $\|\cA\hookrightarrow \cB\|:=\sup_{\|x\|_{\cA}=1}\|x\|_{\cB}$ the norm of the inclusion operator. Let $H_1,H_2$ be two Hilbert spaces, and $v\in H_1, u\in H_2$. The operator $u\otimes v: H_1\mapsto H_2$ is defined as $(u\otimes v) x = \langle v,x\rangle_{H_1} u$ for $x\in H_1$. We use $\cI$ to denote a general identity operator without specifying the domain for brevity and let $I_d$ be the $d\times d$ identity matrix. For a bounded linear operator $A$  over a Hilbert space and $r\in\RR$, we use $(A+\lambda)^{r}$ as a shorthand for $(A+\lambda \cI)^{r}$. 

We use big-O notations like $O(\cdot), o(\cdot),\Omega(\cdot)$ and $\Theta(\cdot)$ to hide constants and  $\tilde{O}(\cdot)$ to further hide logarithmic factors.
We also adopt the notations $a \lesssim b$, $a\gtrsim b$, and $a\asymp b$ to denote $a  =  O(b), a =\Omega(b)$, and $a = \Theta(b)$, respectively.

\vspace*{-.5em}
\section{Preliminaries}
\label{sec: preliminary}
\vspace*{-.1em}

Let  $\cX$ be a measurable input space, 
and $\rho\in\cP(\cX)$ denote the input distribution of interest. We aim to learn an unknown target function  $f^*:\cX\mapsto \RR$ from a finite set of training samples $\{(x_i,y_i=f^*(x_i)+\xi_i)\}_{i=1}^n$. Throughout most of our discussion, we focus on the noiseless setting, where $\xi_i=0$ for all $i\in [n]$ except in Section~\ref{sub:implication_for_noisy_krr} and Appendix~\ref{sec:the_krr_solution}, where we explicitly address the noisy case.
Suppose that $x_1,\dots,x_n$ are \iid samples drawn from a potentially different distribution $\rho'$, which satisfies the following condition:
\begin{assumption}
 $\rho'$ is absolutely continuous with respect to $\rho$. Let $q=\frac{\dd \rho'}{\dd \rho}$ be the corresponding density (Radon-Nikodym derivative). Assume that
$
 q(x) > 0
$
holds $\rho$-almost surely.
\end{assumption}

\paragraph*{Kernel and RKHS.} A  function $k:\cX\times\cX \mapsto \RR$ is said to be  a (positive semidefinite) kernel if 1) it is symmetric, i.e., $k(x,x') = k(x',x), \forall x,x' \in \cX$; and 2)  $\forall n \ge 1$, $x_1,\dots,x_n \in \cX$, the kernel matrix $(k(x_i,x_j))_{i,j}\in\RR^{n\times n}$ is positive semidefinite. Given a kernel $k$, there exists a unique Hilbert space $\cH$ such that $k(x,\cdot)\in \cH, \forall x\in \cX$; and $\langle f,k(x,\cdot)\rangle_\cH = f(x), \forall f\in\cH,x\in\cX$.
The second property is known as the reproducing property and thus,  $\cH$ is often referred to as  RKHS~\cite{aronszajn1950theory}.    RKHS  can also be characterized using the associated integral operator $\cL: L^2(\cX,\rho)\mapsto L^2(\cX,\rho)$, given by
\begin{equation}\label{eqn: 2-1}
    \cL f = \int_\cX k(\cdot,x') f(x') \dd \rho(x').
\end{equation}
Throughout this paper, we assume 
$\int k(x,x)\dd\rho(x)<\infty$, which guarantees that  $\cL$ is a self-adjoint, positive, and compact operator; see \cite{steinwart2008support}*{Theorem 4.27}. By the spectral theory of compact operator, we have the following  decomposition:
\begin{equation}\label{eqn: mercer's decomposition}
\cL = \sum_{j=1}^\infty \mu_j e_j\otimes e_j,
\end{equation}
 where $\{\mu_j\}_{j=1}^\infty$ are the eigenvalues in a decreasing order and $\{e_j\}_{j=1}^\infty$ are the corresponding orthonormal eigenfunctions.
 Without loss of generality, we assume that $\{e_j\}_{j=1}^\infty$ forms a complete basis of $L^2(\cX,\rho)$, otherwise we can always focus on the subspace spanned by the eigenfunctions associated with nonzero eigenvalues.  

\paragraph*{The Sobolev-type interpolation space.}
  For any $s\in\RR$, we define $\cL^s: L^2(\cX,\rho)\mapsto L^2(\cX,\rho)$ as $\cL^s f=\sum_{j=1}^\infty \mu_j^s e_j\otimes e_j$.
 Consider the following interpolation space:
\begin{definition}[Interpolation space]\label{def: Hks-space}
For any $s\in \RR$ and $f\in L^2(\cX,\rho)$, define
$
\|f\|_{\cH^s}=\|\cL^{-\frac{s}{2}}f\|_\rho.
$  
\end{definition}
By the spectral decomposition~\eqref{eqn: mercer's decomposition},  $\cH^s$ can  be  written in the following form
\begin{align}\label{eqn: Hks-space2}
\cH^s=\left\{\sum_{j=1}^\infty a_j\mu_j^{s/2}e_j: \sum_{j=1}^\infty a_j^2<\infty\right\},\quad \left\|\sum_{j=1}^\infty a_j\mu_j^{s/2}e_j\right\|_{\cH^s}^2=\sum_{j=1}^\infty a_j^2.
\end{align}
Note that 
$\cH^1=\cH$, $\cH^0=L^2(\cX,\rho)$, and $\cH^s$  provides a natural interpolation between them, and beyond (i.e., $s>1$ or $s<0$).   For any $s \ge 0$, $\cH^s\subset L^2(\cX,\rho)$ is a well-defined separable Hilbert space. 
When $s<0$, $\cH^s$ should  be viewed as the space of the bounded linear functional on $\cH^{-s}$. For the relationship between this interpolation space and classical Sobolev spaces, we refer the reader to \cite{steinwart2012mercer}.

Throughout this paper, we make the following assumption on the target function:
\begin{assumption}[Source condition]
$f^*\in \cH^s$ with $s\geq 0$.
\end{assumption}
The index $s$ quantifies the smoothness  of target function; larger values of $s$ correspond to  smoother target functions. This standard source condition has been widely used in the analysis of  kernel-based methods~\cites{caponnetto2007optimal,steinwart2012mercer,fischer2020sobolev,bach2017equivalence}, particularly for studying how the  smoothness of target functions affects the learnability. In the literature, the case of $s\geq 1$ is often referred to as the {\em well-specified case}, while $s<1$ is called the {\em misspecified case} or hard learning scenario. This distinction arises  due to when $s<1$, the target function $f^*\not\in \cH$, making the analysis of the corresponding learning problem more challenging.

For further discussion on kernels and RKHSs, we refer readers to \cite{wainwright2019high}*{Section 12} and \cite{steinwart2008support}*{Section 4}. A detailed treatment of interpolation spaces can be found in \cite{steinwart2012mercer}. We provide a concise overview of bounded linear operators on Hilbert spaces in Appendix~\ref{sec: bounded-operator}.

\paragraph*{The embedding property.} To obtain optimal rates for KRR in the misspecified setting, prior work invokes an additional \emph{embedding property} of the RKHS. Specifically, we say that $\cH$ has the embedding property of order $\alpha$, if
\begin{equation}\label{eqn: emb}
	A_\alpha :=\|\cH^\alpha\hookrightarrow L^\infty(\cX,\rho)\|\stackrel{(i)}{=}\sup_{x\in\cX}\sum_{j} \mu_j^\alpha e_j(x)^2 <\infty,
\end{equation}
where $(i)$ is due to \cite{fischer2020sobolev}*{Theorem 9}.
Noting $A_1=\sup_{x\in\cX}\sum_{j} \mu_j e_j(x)^2=\sup_{x}k(x,x)$, so the embedding property automatically holds for any $\alpha\geq 1$ as long as $\sup_x k(x,x)<\infty$. Additionally, the embedding index $\alpha$ must be large enough such that $(\mu_j)_{j\geq 1}\in \ell^\alpha$, as 
\[
\sum_j \mu_j^\alpha =\int \sum_j\mu_j^\alpha e_j(x)^2 \dd \rho(x) \leq \sup_{x\in\cX}\sum_{j} \mu_j^\alpha e_j(x)^2=A_\alpha<\infty.
\] 
If the  eigenvalues follow a power-law decay $\mu_j\asymp j^{-\beta}$ with $t>1$, then we must have $\alpha>1/\beta$. If, in addition, the eigenfunctions are uniformly bounded ($\sup_{j\in\NN^+}\|e_j\|_{L^\infty(\rho)}^2<\infty$), then the embedding index $\alpha$ can approach $1/\beta$, but the endpoint case $\alpha=1/\beta$ is not admissible. For more discussions, we refer the reader to \cite{fischer2020sobolev}.

\paragraph*{Weighted KRR.}
In this paper, we consider the following weighted KRR:
\begin{equation}\label{eqn: krr-def}
    \hat{f}_\lambda =\argmin_{f \in \cH} \left[\frac{1}{n}\sum_{i=1}^n\frac{1}{q(x_i)} (f(x_i) - y_i)^2 + \lambda \|f\|_{\cH}^2\right].
\end{equation}
When $\lambda\to 0^{+}$, $\hf_\lambda$ converges to the  minimum-norm solution:
$
 \hf_0  = \argmin_{f \in \cH, f(x_i) = y_i, 1 \le i \le n}\|f\|_{\cH}
 $
(see Lemma \ref{lemma: minimum-norm}).
Note that the weighted empirical risk is an unbiased estimate of the true risk  and standard KRR is recovered by setting $q(x)\equiv 1$. The idea of weighted sampling is inspired by \cite{bach2017equivalence}, which examines the approximation power of weighted random features. As we will elaborate in  Section~\ref{sec:the_modified_degree_of_freedom}, weighted sampling allows for a unified analysis of KRR.

By the  representer theorem \cite{scholkopf2001generalized}, $\hf_\lambda$ and $\hf_0$ must lie in $\mathrm{span}\{k(x_1,\cdot),\dots,k(x_n,\cdot)\}$. Let $\hk_n:\cX\mapsto \RR^n$ with $(\hat{k}_n(\cdot))_i = \frac{1}{\sqrt{n q(x_i)}}k(x_i,\cdot)$,
$\hy\in\RR^n$ with $\hy_i = \frac{y_i}{\sqrt{n q(x_i)}}$, and $K_n=\left(\frac{1}{n\sqrt{q(x_i)q(x_j)}}k(x_i,x_j)\right)_{1\leq i,j\leq n}\in\RR^{n\times n}$ be the kernel matrix. Then,
the KRR solution can be expressed as:  
$\hf_\lambda  = \hk_n(\cdot)^\top \hat{b}_\lambda$ for $\lambda>0$ 
and $\hf_0=\hk_n(\cdot)^\top \hat{b}_0$
with 
\begin{equation}\label{eqn: krr-a}
\hat{b}_\lambda = (K_n + \lambda I_n )^{-1} \hy,\quad \hat{b}_0 = (K_n)^\dagger \hy, 
\end{equation}
where $(\cdot)^\dagger$ denotes the Moore-Penrose inverse of a matrix \cite{ben2003generalized}.

\section{The Refined Degrees of Freedom}
\label{sec:the_modified_degree_of_freedom}

To study the sample complexity of learning within a RKHS, the commonly used notion of complexity is the degree of freedom (DoF), defined as $N_1(\lambda)=\tr[\cL(\cL+\lambda)^{-1}]=\sum_{j=1}^\infty \frac{\mu_j}{\mu_j+\lambda}$ (see, e.g., \cites{zhang2005learning,caponnetto2007optimal,blanchard2018optimal,lin2020optimal}). Let $m(\lambda)=\#\{j: \mu_j\geq \lambda\}$. Then, we have
\begin{equation}\label{eqn: average-dof-1}
\begin{aligned}
	N_1(\lambda)&\geq \sum_{\mu_j\geq \lambda}\frac{\mu_j}{\mu_j+\lambda}\geq \half m(\lambda)\\ 
	N_1(\lambda)&\leq \sum_{\mu_j\geq \lambda} 1 + \sum_{\mu_j< \lambda}\frac{\mu_j}{2\lambda} = m(\lambda) + \frac{1}{2\lambda}\sum_{j=1}^\infty \mu_j.
\end{aligned}
\end{equation}
Since $\sum_{j=1}^\infty \mu_j=\int k(x,x)\dd\rho(x)<\infty$ and $m(\lambda)$ decays typically slower than $\lambda^{-1}$, the DoF $N_1(\lambda)$ effectively counts how many eigenvalues exceed the threshold $\lambda$. 
In addition to $N_1$,  establishing high-probability bounds  often involves to control the following max DoF:  
\begin{equation}\label{eqn: max-dof-original}
	F_1(\lambda) := \esssup_{x\in\cX}\big\|(\cL+\lambda )^{-\frac{1}{2}}k(x,\cdot)\big\|_\cH^2 = \esssup_{x\in \cX}\sum_{j=1}^\infty \frac{\mu_j}{\mu_j+\lambda} |e_j(x)|^2,
\end{equation}
where the second step follows from the spectral decomposition~\eqref{eqn: mercer's decomposition}. Controlling $F_1(\lambda)$  plays a critical role in obtaining sharp rates for kernel methods~\cite{bach2017equivalence,fischer2020sobolev,zhang2023optimality,li2024towards}.

Our main technical contribution is the introduction of the following refined DoFs:
\begin{definition}[$\gamma$-DoFs]\label{def: degree-Fp}
Suppose $\gamma>0$ satisfies  $\sum_{j=1}^\infty \mu_j^\gamma < \infty$. Define
\begin{align*}
N_\gamma(\lambda)&=\tr[(\cL+\lambda )^{-\gamma}\cL^\gamma]\\
F_\gamma(\lambda) &= \esssup_{x \in \cX}\frac{1}{q(x)}\big\|\cL^{\frac{\gamma-1}{2}}(\cL+\lambda )^{-\frac{\gamma}{2}}k(x,\cdot)\big\|_\cH^2,
\end{align*}
where the weighting factor $q(x)$ is introduced to accommodate the weighted KRR~\eqref{eqn: krr-def}
\end{definition}
By setting $\gamma=1, q(x)\equiv 1$, we recover the classical DoFs $N_1(\cdot)$ and $F_1(\cdot)$. It is evident that  if $\gamma_2\geq \gamma_1>0$, then $\cF_{\gamma_2}(\lambda)\leq \cF_{\gamma_1}(\lambda)$ for any $\lambda>0$. 
We will demonstrate that  
 $\gamma$-DoFs not only  provide a  sharper characterization of noiseless KRR in the misspecified setting but also allow us to control over generalization error in the $\cH^p$ norm for $p>1$. Specifically,  bounding $\|\hf_\lambda-f^*\|_{\cH^p}$ with $p>1$ depends on the $\gamma$-DoFs with $\gamma=2-p$; see Theorem~\ref{thm:krr_error} and \ref{thm: noisy-krr}. 

\begin{remark}  
The $\gamma$-DoFs are motivated by the observation that for any $x\in \cX$, the kernel function \( k(x, \cdot) \)  can be smoother than typical functions in \( \mathcal{H} \). Specifically, \( k(x, \cdot) \in \mathcal{H}^p \) for any \( p \) such that \( \sum_j \mu_j^{2 - p} < \infty \), provided the eigenfunctions satisfy certain benign conditions. Hence, when $\mu_j$ decays faster than $O(j^{-1})$, $p$ can exceed $1$, indicating extra smoothness. This benign property of kernel functions is elaborated in Section~\ref{sub:the_extra_smoothness_effect}.
Our refined DoFs are  designed to leverage this extra smoothness, enabling tighter error bounds and improved analysis. 
\end{remark}

Applying the spectral decomposition \eqref{eqn: mercer's decomposition} yields 
\begin{equation}\label{eqn: F-def}
     N_\gamma(\lambda)=\sum_{j=1}^\infty \left(\frac{\mu_j}{\mu_j+\lambda}\right)^\gamma,\quad F_\gamma(\lambda) 
     = \esssup_{x \in \cX}\frac{1}{q(x)}\sum_{j=1}^{\infty}\left(\frac{\mu_j}{\mu_j+\lambda}\right)^\gamma|e_j(x)|^2.
\end{equation}
We immediately have 
\begin{align}\label{eqn: F_bound} 
\notag    F_\gamma(\lambda)& \ge \int_{\cX}\frac{1}{q(x)}\sum_{j=1}^{\infty}\frac{\mu_j^\gamma}{(\mu_j+\lambda)^\gamma}|e_i(x)|^2\dd \rho'(x) 
     = \sum_{j=1}^\infty \frac{\mu_j^\gamma}{(\mu_j+\lambda)^\gamma} = N_\gamma(\lambda).
\end{align}
Since $N_\gamma(\lambda)\geq N_\gamma(\mu_1)\geq (2\mu_1)^{-1}\sum_j \mu_j^\gamma$ for $\lambda\in (0, \mu_1]$, the condition $\sum_{i=1}^\infty \mu_j^\gamma<\infty$ in Definition \ref{def: degree-Fp} is necessary to ensure  that $F_\gamma(\lambda)$ is finite. Additionally,   
the behavior of $N_\gamma(\cdot)$ can be explicitly characterized using the eigenvalue decay:
\begin{lemma}\label{lemma: dof}
If $\mu_j\asymp j^{-\beta}$ with $t>1$, then $N_\gamma(\lambda)\asymp \lambda^{-1/\beta}$ for any $\gamma>1/\beta$. If $\mu_j\asymp  c^{-j}$ with $c\in (0,1)$, then $N_\gamma(\lambda)\asymp  \log(1/\lambda)$ for any $\gamma>0$.
\end{lemma}
The proof parallels the argument in \eqref{eqn: average-dof-1} and is deferred to Appendix~\ref{sec: proof-dof}. Note that the case $\gamma = 1$ has been widely used in the prior analysis of kernel methods~\cite{smale2007learning,fischer2020sobolev,sriperumbudur2022approximate}.

\subsection{Controlling $F_\gamma(\lambda)$}
As we will detail in our proofs, obtaining a tight bound on \(F_\gamma(\lambda)\) is technically crucial for showing that KRR can achieve optimal rates. Specifically, if \(F_\gamma(\lambda)\) is on the same order as \(N_\gamma(\lambda)\) for sufficiently small \(\lambda\), the optimal rates can be realized. However, this step is challenging because \(F_\gamma(\lambda)\)  depends on the specific properties of the eigenfunctions; see Eq.~\eqref{eqn: F-def}. In the worst case -- when the eigenfunctions are arbitrary \(L^2\) functions -- \(F_\gamma(\cdot)\) may behave poorly. To address this issue, we can consider two approaches: 
\begin{itemize}
\item \textbf{Benign kernels.} Many practical kernels (and their associated eigenfunctions) may exhibit properties that ensure \(F_\gamma(\lambda)\) and \(N_\gamma(\lambda)\) are of the same order. What remains is to identify suitable, easily verifiable conditions that yield the necessary control over \(F_\gamma(\lambda)\). 

\item \textbf{Weighted sampling.} For more general kernels lacking such benign properties, we show that a carefully chosen weighted sampling scheme can restore the desired control on \(F_\gamma(\lambda)\).
\end{itemize}

\paragraph*{Uniform boundedness of eigenfunctions.} A straightforward scenario arises when there is a constant $C$ such that $\sup_{j\in\NN}\|e_j\|_{L^\infty(\rho)}\leq C$. In the unweighted case where $\rho'=\rho$, it follows from \eqref{eqn: F-def}
 that $F_\gamma(\lambda)\leq C^2 N_\gamma(\lambda)$ for all $\lambda>0$ .
A concrete example is the \emph{periodic kernel}
$k(x,x')=\kappa(x-x')$ on the  torus $\TT^d:=[0,1)^d$, for which the eigenfunctions are the Fourier basis functions $x\mapsto e^{2\pi \ii j x}$ with $j\in \ZZ$, each bounded by $1$.

Despite its simplicity,  many widely used kernels do not admit uniformly bounded eigenfunctions. A crucial example is the dot-product kernels on the unit sphere, whose
eigenfunctions  are spherical harmonics~\cite{dot-product-kernel}; these functions are not uniformly bounded~\cite{dong2023toward}. Consequently, other conditions are needed to handle such kernels.

\paragraph*{The embedding property.}
Following \cite{fischer2020sobolev}, one can control $F_\gamma(\lambda)$ by utilizing the embedding property~\eqref{eqn: emb}. Indeed, 
\[
	F_\gamma(\lambda) = \sum_{j=1}^\infty \frac{\mu_j^{\gamma-\alpha}}{(\mu_j+\lambda)^\gamma} \mu_j^{\alpha}e_j(x)^2\leq A_\alpha \sup_{j\in \NN^+} \frac{\mu_j^{\gamma-\alpha}}{(\mu_j+\lambda)^\gamma} \leq A_\alpha \lambda^{-\alpha},
\]
where the second step is due to \eqref{eqn: emb} and the last step follows from  Lemma~\ref{lemma: f_lambda}. 
To assess if this control of $F_\gamma(\lambda)$ can yield optimal rates for KRR, we consider the power-decay case $\mu_j\asymp j^{-\beta}$. In such case, Lemma~\ref{lemma: dof} shows that $N_\gamma(\lambda)\asymp \lambda^{-1/\beta}$ and consequently, obtaining optimal rates requires the embedding index $\alpha=1/\beta$, for which $F_\gamma(\lambda)\leq A_\alpha \lambda^{-1/\beta}\asymp N_\gamma(\lambda)$. However,  it is unclear whether this requirement  holds for  commonly-used kernels.  Even for kernels with uniformly bounded eigenfunctions—where \(\alpha\) can be chosen arbitrarily close to \(1/\beta\)—the embedding property may still fail to hold exactly at \(\alpha = 1/\beta\). This is because \(\sum_j \mu_j^{1/\beta} \asymp \sum_j (j^{-\beta})^{1/\beta} = \infty\). As a result, applying the embedding property yields only a \emph{nearly} optimal rate of \(O(n^{-t + \delta})\) for any \(\delta \in (0,1)\), rather than the exact optimal rate \(O(n^{-t})\).

\paragraph*{Symmetric kernels.}  In this paper, 
we instead propose an entirely different condition that can overcome the limitation of embedding technique discussed above.  Specifically, we show that when the kernel possesses  certain symmetries, the desired control of $F_\gamma(\lambda)$ arises naturally. 
Let  $k:\cX\times \cX\mapsto\RR$, $\rho\in \cP(\cX)$, and $\cG$ be a group of transforms  acting on $\cX$. We say that $(k, \cX, \rho)$ is $\cG$-invariant if the following holds:
    \begin{itemize}
        \item {\it Transitivity:}  For any $x,x' \in \cX$, there exists a  $A\in\cG$ such that $Ax = x'$.
         \item {\it Measure preservation:} For any measurable set $E$ and $A\in\cG$, $\rho(\{x: Ax \in E\}) = \rho(E)$.
        \item {\it Kernel invariance:} For any $x,x' \in \cX$ and $A \in \cG$, $k(Ax,Ax') = k(x,x')$.
    \end{itemize}

\begin{lemma}[]\label{lem:symmetry}
Given a kernel $k:\cX\times \cX\mapsto\RR$, if there exist a group $\cG$  such that
$(k,\cX,\rho)$ is $\cG$-invariant, 
    then $F_\gamma(\lambda) = N_\gamma(\lambda)$ for all $\lambda>0$ when $q(x) \equiv 1$.
\end{lemma}
The proof relies on the observation that  $x\mapsto \|\cL^{\frac{\gamma-1}{2}}(\cL+\lambda )^{-\frac{\gamma}{2}}k(x,\cdot)\big\|_{\cH}$ is constant when $k$ is symmetric and the details can be found in Appendix~\ref{sec:proof_of_lemma_ref_lem_symmetry}. Examples of such kernels include  the popular dot-product kernels of the form $k(x,x'):=\kappa(x^\top x')$ with $\cX=\SS^{d-1}$ and $\kappa: [-1,1]\mapsto\RR$, equipped with $\rho=\mathrm{Unif}(\SS^{d-1})$. Obviously, dot-product kernels are invariant under rotation transformations. Another important example is translation-invariant kernels $k(x,x)=\kappa(x-x')$ with $\cX=\TT^d$ and $\rho=\mathrm{Unif}(\cX)$. These  kernels are invariant under translations, making them symmetric as well.

\paragraph*{Weighted sampling.}
For the general case,  one can achieve the equality $F_\gamma(\lambda)=N_\gamma(\lambda)$  through a suitable weighted sampling, without making additional assumption on the kernels. 
\begin{lemma}
For any $\lambda> 0$, let $\nu_\lambda(x)=\sum_{j=1}^{\infty}\frac{\mu_j^\gamma}{(\mu_j+\lambda)^\gamma}|e_j(x)|^2/N_{\gamma}(\lambda)$. Then, taking $q=\nu_\lambda$ yields $F_\gamma(\lambda)=N_\gamma(\lambda)$.
\end{lemma}
\begin{proof}
Let $q_\gamma(x,\lambda)=\sum_{j=1}^{\infty}\frac{\mu_j^\gamma}{(\mu_j+\lambda)^\gamma}|e_j(x)|^2$. Noting $\int_{\cX} q_\gamma(x,\lambda)\dd\rho(x)=N_\gamma(\lambda)$,  $\nu_\lambda=q(\cdot,\lambda)/\cN_\gamma(\lambda)$ is a probability distribution. By definition, $F_\gamma(\lambda)=q_\gamma(x,\lambda)/q(x)=\cN_\gamma(\lambda)$.
\end{proof}
We remark that a similar but different weighted sampling approach was also employed in \cite{krieg2021function,krieg2021function2} to obtain the optimal $L^2$-sampling numbers for RKHS.

\section{Main Results}
\subsection{Lower Bounds}
In this section, we first establish information-theoretical lower bounds for learning functions within $\cH^s$.
Let $S_n=(x_1,x_2,\dots,x_n)^\top\in\cX^n$  and $f(S_n)=(f(x_1),f(x_2),\dots, f(x_n))^\top\in\RR^n$. We define the space of estimators that use only the finite clean samples $(S_n, f(S_n))$ as follows
\begin{equation*}
	\cM_n = \left\{\cQ\,| \,\cQ f = G(S_n, f(S_n)), \, G:\cX^{n}\times\RR^n\mapsto \RR^{\cX} \right\}.
\end{equation*}
We have the following lower bound for learning $\cH^s$ functions:
\begin{theorem}\label{thm:lower_bound}
For any $0\leq p \le s<\infty$, we have that
\begin{align}\label{eq:lower_bound_1}
    &\inf_{\cQ\in \cM_n}\sup_{\|f\|_{\cH^s} \le 1}\left\|\cQ f - f\right\|^2_{\cH^p}\ge \mu_{n+1}^{s-p}.
\end{align}
\end{theorem}
A similar lower bound has been established under the $L^2$ norm in works such as \cite{li2023asymptotic}, albeit for a modified interpolation space. In contrast, our result pertains to the standard interpolation space and is established under a general interpolation norm. The primary purpose of these lower bounds is to justify the optimality of our upper bounds presented in Section~\ref{sec:upper_bounds}.

\subsection{Upper Bounds}
\label{sec:upper_bounds}

To streamline the presentation  of our upper bounds, we introduce a concept of critical penality: 
\begin{definition}[Critical penalty]\label{def: critical-penality}
For any $n\in\NN^+$ and $\delta\in (0,1)$, let 
\[
\Lambda_\gamma(n,\delta):=\inf\left\{\lambda\in\RR_{\geq 0}\,:\, n \ge 5 F_\gamma(\lambda)\max\left(1,\log(14 F_\gamma(\lambda)/\delta)\right)\right\}.
\]
\end{definition}
This quantity  denotes the threshold beyond which $(\hT_n+\lambda)^{-1}$ effectively concentrates around $(\cT+\lambda)^{-1}$ in certain sense; see Theorem~\ref{thm: concentration} for details.

We first establish upper bounds for uniform learning over the entire function space. 
\begin{theorem}[Uniform learnability]\label{thm:krr_error}
Let  $\delta \in (0,1)$, $n\in \NN^+$, and $0\leq p\leq s\leq 2$.
\begin{enumerate}
    \item {\bf The well-specified case where $s\in [1,2]$:}
\begin{itemize}
\item If $p\in [0,1]$, then  for any $\lambda\in [0, \lambda_n]$ with $\lambda_n=\Lambda_1(n,\delta)$, it holds with probability at least $1-\delta$ that
\begin{equation*}
    \sup_{\|f^*\|_{\cH^s}\leq 1}\|\hat{f}_\lambda - f^*\|_{\cH^p}^2 \le 16 \lambda_n^{s-p}.
\end{equation*}
\item If $p>1$ and $\sum_{j=1}^\infty \mu_j^{2-p}<\infty$, then for any $\lambda\geq \lambda_n:=\Lambda_{2-p}(n,\delta)$, it holds with probability at least  $1-\delta$ that
\begin{equation*}
    \sup_{\|f^*\|_{\cH^s}\leq 1}\|\hat{f}_{\lambda} - f^*\|^2_{\cH^p}\le 16 \lambda^{s-p}.
\end{equation*}
\end{itemize}

\item {\bf The misspecified case where $s<1$ and $\sum_{j}\mu_j^s<\infty$:}  For any $\lambda\geq\lambda_n:= \Lambda_{s}(n,\delta)$, it holds  with probability at least $1-\delta$ that
\begin{equation*}
    \sup_{\|f^*\|_{\cH^s}\leq 1}\|\hat{f}_{\lambda} - f^*\|_{\cH^p}^2 \le 16 \lambda^{s-p}.
\end{equation*}
\end{enumerate}
\end{theorem}

For the misspecified case, the following lemma  demonstrates that the condition $\sum_{j}\mu_j^{s}<\infty$ is necessary for obtaining uniform learnability, whose proof can be found in Appendix~\ref{sec:proof_of_lemma_ref_lemma_uniform_necessity}.
\begin{lemma}\label{lemma: uniform-necessity}
Let $k$ be a  kernel satisfying the condition in Lemma~\ref{lem:symmetry} and $\rho'=\rho$. If  $\sup_{\|f^*\|_{\cH^s}\leq 1}\|\hat{f}_{\lambda} - f^*\|_{\cH^p}<\infty$ holds almost surely for some $p\in (0,s)$, then $\sum_j \mu_j^s<\infty$.
\end{lemma}

The next theorem further shows that when $\sum_{j=1}^\infty \mu_j^s =\infty$, although  a uniform learning within  $\cH^s$ is not possible, KRR can still achieves  optimal learning for each  single target function.

\begin{theorem}[Single-target learnability]\label{thm:krr_error-one}
    Let $n \in \NN^+$, $\delta_1,\delta_2 \in (0,1)$, and $0\leq p\leq s\leq 1$. Pick any $\gamma\in (s,1]$ such that
    $\sum_j \mu_j^\gamma<\infty$ and consider KRR with  $\lambda\geq \lambda_n:=\Lambda_{\gamma}(n,\delta_1)$. 
    Then for a given $f^*$ with $\|f^*\|_{\cH^s} \le 1$, it holds with probability at least $1-\delta_1-\delta_2$ that
    \begin{equation}\label{eq:f_upper_2}
        \|\hat{f}_{\lambda} - f^*\|_{\cH^p}^2 \le \left(5 + C_{s,\gamma}\delta_2^{-\frac{\gamma-s}{2\gamma}}\right)^2 \lambda^{s-p},
    \end{equation}
    where $C_{s,\gamma}$ is a positive constant only depending on $s$ and $\gamma$.
\end{theorem}
When $s$ approaches $0$, the dependence on $\delta_2$ deteriorates to $\delta_2^{-1/2}$. If $s$ nears $s_0=\inf\{s: \sum_j\mu_j^s<\infty\}$, one can choose $\gamma$ close to $s_0$ such that the dependence becomes $\delta_2^{-\tau}$ with rather small $\tau$.  
This fine-grained characterization of $\delta_2$-dependence  is achieved by leveraging the fact that \( \cH \)  embeds continuously into \( L^q \) for some \( q \) significantly larger than \( 2 \); for details, we refer to Appendix~\ref{sec:the_l_q_embedding_of_ch_s_}.

\paragraph*{Comparison with existing results on noiseless KRR.}
Prior analyses of noiseless KRR, such as \cite{barzilai2023generalization, li2023asymptotic}, focused on the convergence of $L^2$-error.  In contrast, we establish the optimal rates  under the general  $\cH^p$ norm and, in particular,  precisely identify the range of $p$ that can be used to assess KRR solutions.  
Specializing to the \( L^2 \)-error, our results also improve upon those  by refining the \( \delta_2 \)-dependence from \( \delta_2^{-1/2} \) to \( \delta_2^{-(\gamma-s)/(2\gamma)} \), which is especially significant  when \( s \) is close to $s_0$. 
Moreover,  when $s$ is large enough to ensure \( \sum_j \mu_j^s < \infty \),  Theorem~\ref{thm:krr_error} shows that the dependence on $\delta_2$ can be further improved to \( \log(1/\delta_2) \), marking a substantial improvement over \cite{barzilai2023generalization,li2023asymptotic}. Finally, \cite{li2023asymptotic} employed  a modified source condition, leading to  less tight bounds when applied to the classical source condition. In contrast, our analysis directly addresses the standard source condition, yielding more accurate error estimates.

\subsection{Improved Results for Noisy KRR}
\label{sub:implication_for_noisy_krr}

The above results for noiseless KRR suggest that two aspects of  existing optimal rates for noisy KRR may be ``suboptimal'':
(1) the established rates are optimal only under a fixed  noise level and fail to  recover the noiseless rate as the noise magnitude diminishes to zero; 
(2) the interpolation norm used to measure learning error is restricted to  $p \in [0,1]$, whereas our noiseless analysis indicates that the valid range of $p$ can be significantly larger. These motivate us to adapt techniques from the noiseless setting to derive sharper bounds for noisy KRR.

\begin{assumption}\label{assumption: noise}
Assume that the noise $\{\xi_i\}_{i=1}^n$ are \iid, mean-zero, $\sigma$-sub-Gaussian (i.e., $\EE[e^{\xi_i^2/\sigma^2}]\leq 2$ for all $i\in [n]$), and are independent of the inputs $\{x_i\}_{i=1}^n$.
\end{assumption}

\begin{assumption}\label{assumption: weighted sampling}
	Assume $q(x)\geq 1/2$ almost surely.
\end{assumption}
This assumption essentially requires that the density of the weighted sampling distribution is bounded from below. The specific threshold of \(1/2\) is chosen purely for simplicity. In addition, we argue that such a weighted sampling distribution always exists. For any density \( q \) with the associated maximal \(\gamma\)-DoF denoted by \( F_\gamma(\lambda) \), consider the modified distribution \( q'(x) := \frac{q(x) + 1}{2} \). Let \( F'_\gamma(\lambda) \) denote the maximal \(\gamma\)-DoF under \( q' \). Then, we have
\[
F'_\gamma(\lambda) = \esssup_{x \in \cX} \frac{1}{q'(x)} \sum_{j=1}^\infty \left( \frac{\mu_j}{\mu_j + \lambda} \right)^\gamma e_j(x)^2 
\leq \esssup_{x \in \cX} \frac{2}{q(x)} \sum_{j=1}^\infty \left( \frac{\mu_j}{\mu_j + \lambda} \right)^\gamma e_j(x)^2 = 2F_\gamma(\lambda).
\]
Thus, replacing \( q \) with \( q' \) preserves the maximal \(\gamma\)-DoF up to a constant factor, while ensuring that Assumption~\ref{assumption: weighted sampling} is satisfied.

\begin{theorem}\label{thm: noisy-krr}
  Suppose Assumptions~\ref{assumption: noise} and~\ref{assumption: weighted sampling} hold. Let \(s \in [1,2]\) and \(0 \le p < s\) be such that \(\sum_{j=1}^\infty \mu_j^{2-p} < \infty\). For any \(\delta \in (0,1/2)\), if \(\lambda \ge \lambda_n := \Lambda_{\min(2-p,1)}(n,\delta)\), then \wp at least \(1 - \delta\), we have
  \[
    \|\hat{f}_\lambda - f^*\|_{\cH^p}^2 \,\lesssim\, \bigl(1 + \log\bigl(\tfrac{1}{\delta}\bigr)\bigr)
    \left(\,\lambda^{\,s - p} \;+\; \frac{\sigma^2 F_{2-p}(\lambda)}{n\,\lambda^p}\right).
  \]
\end{theorem}
The proof of this theorem is deferred to Appendix~\ref{sec:proof_of_theorem_ref_thm_noisy_krr}. Specifically,
 we provide improved bounds on the learning error of noisy KRR for \(s \in [1,2]\) by combining a new \(\cH^p\) bound  on the variance term (see Proposition~\ref{pro:variance_estiamte}) with the well-specified result of Theorem~\ref{thm:krr_error}. For the misspecified case, similar bounds can be obtained by combining our bound of variance term with the misspecified result in Theorems~\ref{thm:krr_error} or~\ref{thm:krr_error-one}. 

Compared to existing upper bounds for noisy KRR, Theorem~\ref{thm: noisy-krr} is \textit{noise-level aware} and holds for general \(\cH^p\) norms, allowing \(p > 1\). Notably, as \(\sigma \to 0\), our bound recovers the optimal noiseless rate. In the regime where \(\sigma \asymp 1\) and the eigenvalues satisfy \(\mu_j \asymp j^{-\beta}\), if the weighted sampling distribution is chosen such that \(F_{2-p}(\lambda) \leq C N(\lambda)\), the optimal rate is given by
\begin{align}\label{eqn: noisy-rate}
  \inf_{\lambda \geq \lambda_n} \left( \lambda^{s - p} + \frac{F_{2 - p}(\lambda)}{n \lambda^p} \right) 
  = \inf_{\lambda \geq \lambda_n} \left( \lambda^{s - p} + \frac{\lambda^{-1/\beta}}{n \lambda^p} \right) 
  \asymp \left( \frac{1}{n} \right)^{\frac{\beta(s - p)}{s\beta + 1}},
\end{align}
where the second step follows from Lemma~\ref{lemma: dof}. This matches the rates in \cite{fischer2020sobolev} while extending the valid range of \(p\) beyond \(1\). Moreover, the optimal choice of regularization parameter is \(\lambda_{\mathrm{op}} \asymp n^{-s / (s\beta + 1)}\), which notably does not depend on \(p\).

\subsection{The Extra-Smoothness Effect}
\label{sub:the_extra_smoothness_effect} 

In Theorems~\ref{thm:krr_error}, \ref{thm:krr_error-one}, and \ref{thm: noisy-krr}, we show that the parameter \( p \) can be chosen arbitrarily close to \( p_0 = \inf\{p : \sum_j \mu_j^{2-p} < \infty\} \). Therefore,  \( p \) can exceed $1$ whenever the eigenvalues \( \mu_j \) decay faster than \( j^{-1} \).
This finding is surprising because KRR  seeks solutions within \( \cH \), where functions typically  only have smoothness of order $1$. Indeed, all previous interpolation-norm bounds, such as those in \cite{fischer2020sobolev,li2024towards} assume \( p \in [0,1] \). Technically, our improvement arises from leveraging the newly introduced $\gamma$-DoFs defined in Definition~\ref{def: degree-Fp}. To shed further light on this unexpected phenomenon, we provide an intuitive explanation below. 

The ability to choose \( p > 1 \) originates from the inherent properties of the kernel function. Specifically, for almost every \( x \in \cX \), the kernel function \( k(x, \cdot) \) in fact resides in the smoother space \( \cH^p \), whenever \( \sum_j \mu_j^{2-p} < \infty \). To see why, consider the norm of \( k(x, \cdot) \) in \( \cH^p \):
\[
\|k(x, \cdot)\|_{\cH^p}^2 = \big\| \sum_{j} \mu_j e_j(x) e_j \big\|_{\cH^p}^2 = \sum_j \mu_j^{2-p} e_j^2(x).
\]
Assuming the eigenfunctions are uniformly bounded, i.e., \( \sup_{j\in \NN^{+}}\|e_j\|_{L^\infty(\rho)} \leq C \) for some constant \( C \), the norm can be bounded as follows:
\[
\|k(x, \cdot)\|_{\cH^p}^2 \leq C^2 \sum_j \mu_j^{2-p}.
\]
Hence, it follows from $\sum_j\mu_j^{2-p}<\infty$  that $k(x, \cdot)\in \cH^p$. Furthermore, the representer theorem  ensures \( \hat{f}_\lambda\in \spn\{ k(x_i, \cdot) : i \in [n] \} \). Because each \( k(x_i, \cdot) \in \cH^p \), we deduce that \( \hat{f}_\lambda \in \cH^p \).

In essence, the extra smoothness of \( \hat{f}_\lambda \) arises from two factors. First, the joint properties of the kernel’s eigenfunctions and eigenvalues ensure that \( k(x, \cdot) \) inhabits a smoother space. Second, the representer theorem imposes a structural constraint, confining the solution in the span of kernel evaluations at the training points. The synergy of these two factors enables KRR to produce solutions whose smoothness surpasses that of the native RKHS. 

\section{The Saturation Effects}
\label{sec: saturation}
\subsection{The Smoothness of KRR Solution}
We start by  examining the saturation in $p$, which reflects the inherent smoothness of the KRR solution. Previously, we  assume  $p$ to be sufficiently small to ensure the summability condition $ \sum_{j=1}^\infty \mu_j^{2-p} < \infty$. We now show that this condition is, in fact,  necessary.

\begin{theorem}\label{thm: saturation-1}
    Assume that $\rho' = \rho$ and $\sum_{j=1}^\infty \mu_j^{2-p} = \infty$. Suppose
    \begin{equation}\label{eqn: condition-eigenfunction}
    \limsup_{m\rightarrow\infty} \inf_{x \in \cX}(\sum_{j=1}^{m}\mu_j^{2-p}|e_j(x)|^2)/(\sum_{j = 1}^{m}\mu_j^{2-p})>0.
   \end{equation}
    Then, for any $f^* \in \cH^s$, with probability $1$, $\|\hat{f}_\lambda-f^*\|_{\cH^p} = \infty$ unless $\hat{f}_\lambda = 0$.
\end{theorem}
Let us elaborate a little on what this theorem is saying. 
Consider kernels with a  spectrum that follows a   power-law decay, $\mu_j\asymp j^{-\beta}$. In this case, the series $\sum_{j}\mu_j^{2-p}$ converges if and only if $t(2-p)>1$, which yields the constraint $p<2-1/\beta$. Consequently, Theorem~\ref{thm: saturation-1} indicates that the corresponding KRR solution can have at most $(2-1/\beta)$-order smoothness, even when the underlying data-generating function is significantly smoother, i.e., $s\geq 2$. 

The condition \eqref{eqn: condition-eigenfunction} is satisfied for many popular kernels, including  dot-product kernels on the hypersphere, periodic kernels on the hypercube, and kernels on the Hamming cube $\{-1,1\}^d$. For a detailed verification of why these kernels satisfy the condition~\eqref{eqn: condition-eigenfunction}, we refer the reader to  Appendices G and H in \cite{barzilai2023generalization}.

When $p<0$, $\cH^p$ is a norm weaker than $L^2(\rho)$. This weaker norm is particularly useful in the analysis of partial differential equations (PDEs) and variational problems. For example, convergence in the $\cH^{-1}$ norm is often of interest in applications of analyzing  elliptic and parabolic problems.
However, in Theorems~\ref{thm:krr_error} and \ref{thm:krr_error-one}, we have assumed $p\geq 0$. One might intuitively expect that the convergence rates could be improved when measured in $H^p$ norms with $p<0$, analogous to the behavior observed for $p\geq 0$. Contrary to this intuition, we demonstrate that this is not the case.
By Lemma~\ref{lemma: uniform-operaotr-bound}, we have 
\begin{align*}
    \sup_{\|f^*\|_{\cH^s}\le 1}\|f^* - \hat{f}_\lambda\|_{\cH^p} &= \lambda \|\cT^{\frac{1-p}{2}}(\hT_n + \lambda)^{-1}\cT^{\frac{s-1}{2}}\| \\ 
    &= \lambda\|\cT^{\frac{s-1}{2}}(\hT_n + \lambda)^{-1}\cT^{\frac{1-p}{2}}\| = \sup_{\|f^*\|_{\cH^{2-p}} \le 1}\|f^* - \hat{f}_\lambda\|_{\cH^{2-s}}. 
\end{align*}
This means that the saturation in $ p < 0$ can be transferred into the saturation effect in $ s > 2$, as discussed later in Section~\ref{sec:the_saturation_of_smoothness_adaptivity}. However, it is worth noting that this uniform-type argument does not rule out the possibility that results analogous to  Theorem~\ref{thm:krr_error} may exist. To further investigate, we will conduct numerical experiments to demonstrate that the saturation effect for  $ p < 0 $ persists  even for a single target function.

\subsection{The Adaptivity to the Smoothness of Target Functions}
\label{sec:the_saturation_of_smoothness_adaptivity}
In Theorems~\ref{thm:krr_error} and \ref{thm:krr_error-one}, we have demonstrated that noiseless KRR can adapt to the target function's smoothness up to $s=2$. Here,  we further show that additional smoothness ($s>2$) does not yield further improvement, indicating that a saturation occurs at precisely $s=2$. We argue that  this phenomenon arises due to the constraints  imposed by the representer theorem.
Specifically, the solution of KRR  must lie in $\spn\{k(x_i,\cdot): i \in [n]\}$, implying that   the number of kernel functions can be leveraged by KRR is  at most $n$. This leads to  the following lower bound:
\begin{align}\label{eqn: app-lower-bound-saturation}
\|\hf_\lambda - f^*\|_{\cH^p} \geq \sup_{a\in\RR^n}\Big\|\sum_{i=1}^n a_i k(x_i,\cdot) -f^*\Big\|_{\cH^p}.
\end{align}
When the kernel  function $k(x,\cdot)$ is less smooth, it is not surprising that $\hf_\lambda$ is unable to exploit the higher-order smoothness of $f^*$.

To clarify how the above argument can explain why the saturation occurs precisely at $s=2$, we consider the specific Brownian kernel
\begin{equation}\label{eqn: min-kernel}
    k_B(x,x' ) = \min(x,x'), \quad x,x'\in [0,1].
\end{equation}
In this case, $\sum_{i=1}^n a_i k(x_i,\cdot)$ must lie in $\mathrm{CPWL}_n([0,1])$, the space of  continuous piecewise linear (CPWL) functions on $[0,1]$ with at most $n$ pieces. The approximation properties of CPWL models has
been well-studied~\cite{devore1993constructive,devore2021neural}. It is well-known that 
$
    \inf_{f\in \mathrm{CPWL}_m([0,1])}\|f-f^*\|_{L^2}\asymp  m^{-s},
$
if $f^*$ has $s$-order smoothness with $s\in (0,2]$ and is not CPWL. This   saturation for CPWL models occurs at $s=2$ since we are approximating locally by linear functions. A function with smoothness of order $s>2$ would need locally polynomial of degree higher than $1$ to improve the local error (think of Taylor expansion). Hence, when $f$ has smoothness of order $s>2$, CPWL approximation does not improve the rate. Analogously, this provides an explanation for the saturation of KRR for the specific kernel~\eqref{eqn: min-kernel}.

We next show that the argument behind \eqref{eqn: app-lower-bound-saturation} can establish that saturation  occurs precisely at  $s=2$ for a family of  dot-product kernels and periodic kernels. 
\begin{theorem}\label{thm: target-saturation}
Let  $k$ be a dot-product kernel over $\SS^{d-1}$ or periodic kernel over  $\TT$. Then, there exists a positive constant $C$ which only depends on $d$, such that for any  $x_1,x_2,\dots,x_n\in\cX$, $0\leq p\leq 2$, and $s>2$, it holds for any $\lambda\geq 0$  that
$\sup_{f^*\in \cH^s}\|\hf_\lambda-f^*\|_{\cH^p}^2\geq \frac{1}{2}\mu_{Cn}^{2-p}$.
\end{theorem}
The proof can be found in Appendix~\ref{sec:proof_of_theorem_ref_thm_target_saturation}. It is worth noting that this theorem is somehow stronger than what we expect since  the saturation occurs for any $x_1,x_2,\dots,x_n$. 

\begin{remark} 
It is instructive to compare the above result with the well-known saturation that arises in the presence of noise. In that case, the saturation is  caused by the ridge-type regularization~\cite{li2022saturation} and can  be resolved by employing spectral algorithms, such as gradient descent or spectral cut-off~\cite{bauer2007regularization}. In contrast, in the noiseless setting considered here, such spectral algorithms cannot mitigate the saturation as their solutions still satisfy the representer theorem and, as explained above, remain subject to the same structural restriction. 
\end{remark}

\vspace*{-.5em}
\section{Numerical Experiments} 
\vspace*{-.5em}

\label{sec: experiment}
In this section, we present numerical experiments to validate  our theoretical findings. Specifically, we consider the Brownian kernel given by \eqref{eqn: min-kernel}
and the uniform distribution $\rho=\text{Unif}([0,1])$. The corresponding eigenfunctions and eigenvalues  are given by 
\[
	e_j(x)=\sqrt{2}\sin\left(\frac{2j-1}{2}\pi x\right),\quad \mu_j = \left(\frac{2}{\pi(2j-1)}\right)^2, \text{ for } j=1,2,\dots.
\]
Notably, $\mu_j\asymp j^{-\beta}$ with $t=2$.
We refer to \cite{wainwright2019high}*{Section 12} for further details of this kernel and the associated RKHS. In this case, the eigenfunctions are uniformly bounded, and thus we  can consider the unweighted sampling case where $\rho'=\rho$.
We examine the learning of target functions with varying orders of smoothness, given by
\begin{equation} \label{old_target}
    F^*_s=\sum_{j=1}^\infty j^{-(0.5 + s)}e_j \text{ for } s\in(0,\infty), \text{ and } 
    F^*_\infty = e_1,
\end{equation}
where, for $s<\infty$, the choice of coefficients ensures $f^* \in \cH^{s-\epsilon}$ for any $\epsilon\in (0,1)$. 
For a given target function $f^*=\sum_{j=1}^\infty f^*_j e_j$ and a solution $f_a = \sumin a_i k(x_i,\cdot)$ with $a\in\RR^n$, the $\cH^p$-error can be computed as follows
\begin{align*}
\left\|f_a - f^*\right\|_{\cH^p}^2 &= \left\|\sum_{i=1}^na_i\sum_{j=1}^\infty \mu_je_j(x_i)e_j - f^*\right\|_{\cH^p}^2 =\left\|\sum_{j=1}^\infty \mu_j \sum_{i=1}^na_ie_j(x_i)e_j - \sum_{j=1}^\infty f_j^* e_j\right\|_{\cH^p}^2 \\ 
&= \sum_{j=1}^\infty \left(\mu_j \sum_{i=1}^na_ie_j(x_i) -  f_j^*\right)^2 \mu_j^{-p}= \sum_{j=1}^\infty \left(a^\top \hat{e}_j  -  f_j^*/\mu_j\right)^2 \mu_j^{2-p},
\end{align*}
where $\hat{e}_j = (e_j(x_1),\dots, e_j(x_n))^\top\in \mathbb{R}^n$. In practice, we approximate this  series by truncation:
\begin{align}
\left\|f_a - f^*\right\|_{\cH^p}^2 &\approx \sum_{j=1}^m \left(a^\top \hat{e}_j  -  f_j^*/\mu_j\right)^2 \mu_j^{2-p}.
\end{align}
Note that the truncation parameter $m$ is chosen based on the decay of the summands; specifically,  smaller $s$ and larger $p$ requires a larger $m$ to ensure a reliable approximation.

We first investigate  how the regularization hyperparameter $\lambda$ affects the performance of noiseless KRR, with the result presented in Figure~\ref{fig: lambda}. Theorem~\ref{thm:krr_error} guarantees that, in the well-specified case with $p\in [0,1]$,  noiseless KRR achieves optimal rates for any  $\lambda\in [0,\lambda_n]$ with $\lambda_n=\Lambda_1(n,\delta)$.  Figure~\ref{fig: lambda}  further demonstrates  that for both misspecified and well-specified cases, across a wide range of $p$, the performance of noiseless KRR actually improves {\it monotonically} as $\lambda$ decreases towards zero. Therefore, unlike the noisy case where $\lambda$ requires careful tuning,   the performance of noiseless KRR is highly robust to the choice of $\lambda$ and it is advised to set $\lambda$ as small as possible.

\begin{figure}[!th]
\centering
\includegraphics[width=0.4\textwidth]{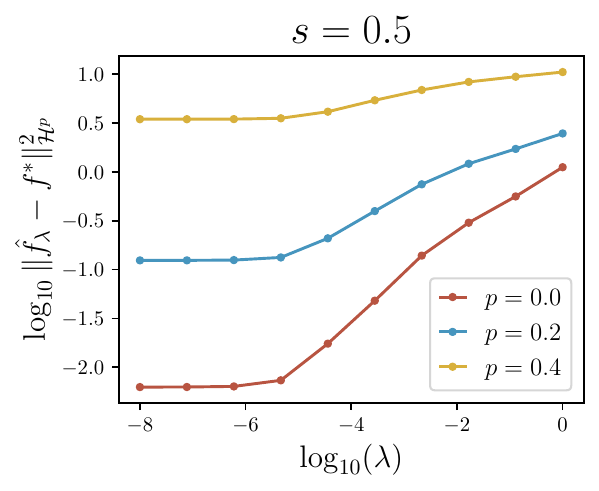}
\includegraphics[width=0.4\textwidth]{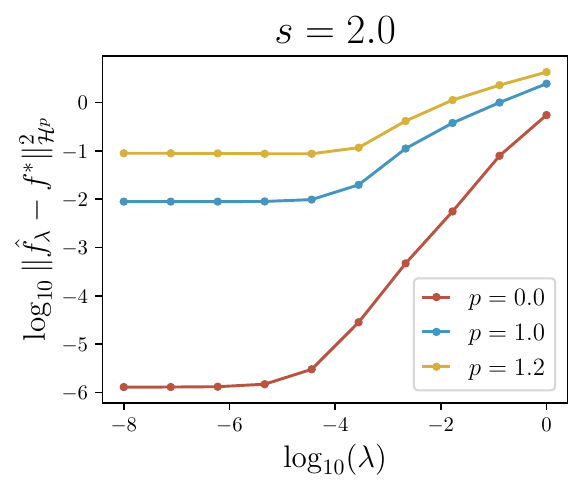}
\vspace*{-1em}
\caption{The performance of noiseless KRR improves {\it monotonically} as $\lambda$ decreases. The target functions are $F_s^*$ with $s=0.5$ (left) and $s=2$ (right), and the  sample size $n=100$.}
\label{fig: lambda}
\end{figure}

We next investigate how the actual convergence rate of KRR depends on \(s\) (the smoothness of the target function) and \(p\) (the order of the norm used to measure performance), with the results presented in Figure~\ref{fig: noiseless-krr}. Overall, the empirical results for all examined \(s\) and \(p\) pairs are in perfect agreement with our theoretical predictions. In Figure~\ref{fig: noiseless-krr} (left), we observe that the noiseless KRR solution exhibits smoothness of order greater than $1$, and reducing \(p\) below $0$ does not further enhance the convergence rate. In Figure~\ref{fig: noiseless-krr} (middle), we see that noiseless KRR adapts to the target function's smoothness as predicted by Theorems~\ref{thm:krr_error} and \ref{thm:krr_error-one}; notably, this adaptation saturates precisely at \(s=2\), thereby validating our theoretical analysis in Section~\ref{sec:the_saturation_of_smoothness_adaptivity}. Finally, Figure~\ref{fig: noiseless-krr} (right) shows that the solution of noisy KRR also exhibits smoothness of order greater than $1$ and  the corresponding  convergence rate aligns perfectly with the predictions of Theorem~\ref{thm: noisy-krr}.

\begin{figure}[!ht]
\centering
\includegraphics[width=0.333\textwidth]{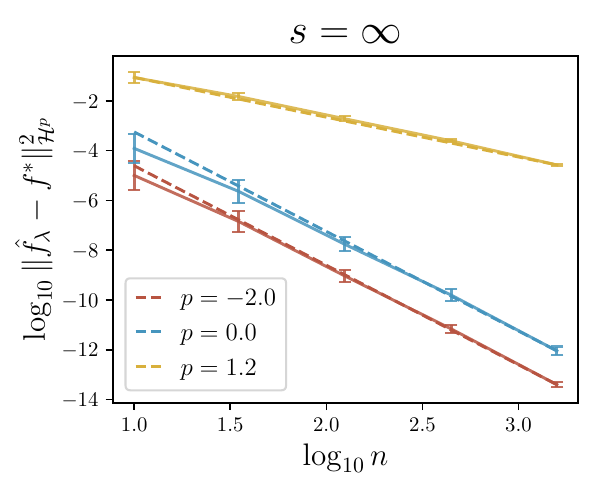}
\hspace*{-1em}
\includegraphics[width=0.333\textwidth]{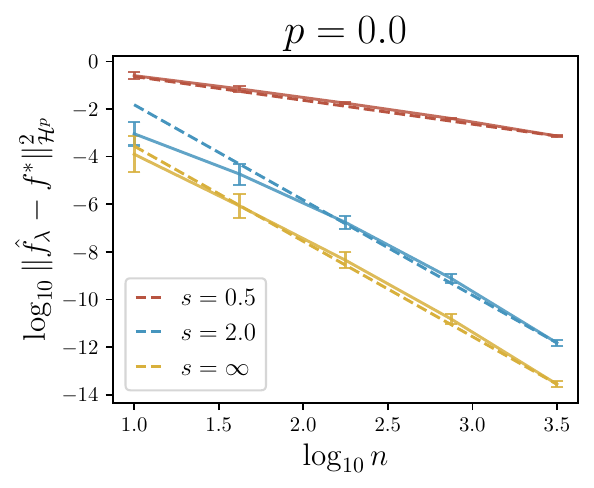}
\hspace*{-1em}
\includegraphics[width=0.333\textwidth]{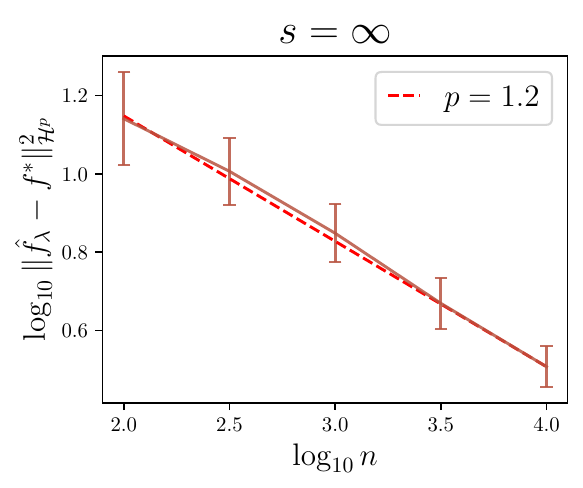}
\vspace*{-1em}
\caption{The observed convergence rates align perfectly with our theoretical predictions for all examined pairs of \(s\) and \(p\). 
{{\bf Left:}} Convergence rates of noiseless KRR for various values of $p$ when the target function is \(F_\infty^*\). {{\bf Middle:}} Convergence rates of noiseless KRR for various values of \(s\), measured in \(L^2\) norm (i.e., \(p=0\)). {{\bf Right:}} The solution of noisy KRR for the target function $F_\infty^*$ (with  noise level $\sigma=1$) exhibits smoothness of order strictly greater than $1$.
In all panels, the dashed line represents the theoretical predictions, and each experiment is repeated 20 times with error bars indicating the standard deviation across runs. For the noiseless experiments (left and middle), we set $\lambda=10^{-20}$,  sufficiently small following the guidance from Figure~\ref{fig: lambda}. For the the noisy experiment (right), we set $\lambda=0.05n^{-\min(s,2)/(\min(s,2)\beta+1)}$, following the theoretical prescription given  by the analysis in Eq.~\eqref{eqn: noisy-rate}, with   $s=\infty$ and $t=2$.
}
\label{fig: noiseless-krr}
\end{figure}

\vspace*{-1em}
\section{Conclusion and Discussion}
\label{sec:concluding_remark}
\vspace*{-.1em}

In this work, we present a comprehensive analysis of KRR in the noiseless regime. We show that KRR, up to logarithmic factors, achieves minimax optimal rates under a broad class of Sobolev-type interpolation norms for general source conditions. This characterization not only recovers classical $L^2$-error bounds as a special case but also reveals how the interplay between the eigenvalue decay of the associated integral operator and the smoothness of the target function governs the convergence of learning error.
A central contribution of our work is the identification and rigorous characterization of the extra-smoothness and saturation effects. The extra-smoothness effect reveals that KRR solutions actually exhibit higher-order smoothness than previously expected. The 
the saturation effect highlights an intrinsic limitation of KRR in adapting to increasingly smooth target functions in the absence of noise. 

To enable these results, we introduced a refined notion of DoFs, which exploits the extra smoothness of kernel functions and has played a key role in deriving both our upper and lower bounds. This novel complexity measure also yield improved theoretical guarantees for KRR in the noisy regime, where we establish new error bounds that are explicitly aware of the noise level, thereby bridging the gap between the noiseless and noisy regimes.

Looking ahead, an intriguing direction is to extend our techniques to analyze other kernel-based methods. A particularly relevant case is to study the efficiency of random feature approximations, which are widely adopted in practice to accelerate KRR~\cite{rahimi2007random}. Existing analyses   are limited to the noisy regime \cite{rudi2017generalization,liu2021random}, where the number of features is determined to maintain a convergence rate of at most $O(n^{-1})$ -- the fastest rate one can expect for learning in the noisy regime. In contrast, the noiseless regime, where significantly faster rates are attainable as demonstrated in Theorems~\ref{thm:krr_error} and~\ref{thm:krr_error-one}, remains largely unexplored in this context. This gap raises a natural and important question:
{\it 
	Can random feature approximations   preserve the optimal   noiseless rates  without compromising computational efficiency?
}

Interestingly, our analysis offers a starting point toward this question. In our proofs, we develop a random feature characterization of the interpolation spaces (Proposition~\ref{pro: feature-coeff} ) and establish  an equivalence between estimation in noiseless KRR and approximation using random features (Proposition~\ref{pro: equivalence}). This connection provides a promising bridge between the statistical and computational aspects of random feature approximation, and may serve as a foundation for tackling this open problem.

\subsection*{Acknowledgement} 
This work was supported by the National Key R\&D Program of China (No.~2022YFA1008200) and National Natural Science Foundation of China (No.~12288101).
We thank Zixun Huang and Dr.~Zhu Li for helpful discussions.
\bibliography{ref}

\newpage
\appendix
\addcontentsline{toc}{section}{Appendix} 
\part{Appendix} 
\parttoc 

\section{Mathematical Preliminaries}
\subsection{Bounded Linear Operators in Hilbert Spaces}
\label{sec: bounded-operator}
Given two Hilbert spaces $\cH_1$ and $\cH_2$, we consider a linear operator $A:\cH_1\mapsto\cH_2$. The operator norm is defined as 
$
    \|A\| = \sup_{ x \in \cH_1, x \neq 0}\frac{\|A x\|_{\cH_2}}{\|x\|_{\cH_1}},
$
where  we omit the dependence  on $\cH_1$ and $\cH_2$ in $\|A\|$ for simplicity. 

{\bf Adjoint operator.~} The adjoint operator $A^*:\cH_2\mapsto \cH_1$ is defined through:
\begin{equation*}
    \langle y,A x\rangle_{\cH_2} = \langle A^* y, x\rangle_{\cH_1}, \, \forall x \in \cH_1 \text{ and }y \in \cH_2,
\end{equation*}
for which it holds that $\|A\|^2 = \|A^*\|^2 = \|A^*A\| = \|AA^*\|$. We call a bounded operator $A$ self-adjoint, if $\cH_1 = \cH_2$ and $A^* = A$. 
A self-adjoint operator $B$ is said to be positive , if for any $x \in \cH$, $\langle x, B x \rangle_\cH \ge 0$. For two self-adjoint operator $B$ and $B'$, we will use $B \lem B'$ to mean $B'-B$ is positive. 

{\bf Outer product.~} 
Let $v\in\cH_1$ and $u\in \cH_2$, we define the outer product $u\otimes v$ as a linear operator $\cH_1\mapsto\cH_2$ given by $(u\otimes v)x = \langle v,x\rangle_{\cH_1}u$ for any $x\in\cH_1$. Moreover, $\|(u\otimes v)\|=\sup_{\|x\|_{\cH_1}=1}\|\langle v,x\rangle_{\cH_1}u\|_{\cH_2}=\|u\|_{\cH_2}\|v\|_{\cH_1}$.

{\bf Eigenvalue decomposition.~}
An operator $A$ is said to be compact, if it maps the bounded subset of $\cH_1$ to a relatively compact subset of $\cH_2$. If a  self-adjoint operator $B$ is compact, then it has the following spectral decomposition:
\begin{equation*}
    B  = \sum_{j=1}^{\infty} \mu_j \psi_j\otimes \psi_j, 
\end{equation*}
where $\{\mu_j\}_{j=1}^{\infty}$ are the eigenvalues in a decreasing order and $\{\psi_j\}_{j=1}^{\infty}$ are the corresponding orthonormal eigenfunctions satisfying $\langle \psi_i,\psi_j\rangle_\cH = \delta_{ij}$ and  $B \psi_j = \mu_j \psi_j$. If $B$ is further positive, then all $\mu_j$'s are non-negative. In addition, we define the trace of a self-adjoint operator $B$ by $\tr(B) = \sum_{j=1}^\infty \mu_j$. Let $N = \sup\{j \in \bN^+,\, \mu_j > 0\}$ (possibly infinity), we  define $B^r$ for any $r \in \RR$:
\begin{equation}\label{eqn: operator-power}
    B^r  = \sum_{j=1}^{N} \mu_j^r \psi_j\otimes \psi_j.
\end{equation}
Noticing that when $r < 0$, $B^r$ can only define on the range of $B^{-r}$. 

{\bf Singular value decomposition (SVD).~}
For a compact operator $A:\cH_1\mapsto\cH_2$, both $AA^*$ and $A^*A$ are compact, self-adjoint and positive operators. 
Moreover, $AA^*$ and $A^*A$ share the same non-zero eigenvalues and the singular value decomposition of $A$ is given by
\begin{equation*}
    A  = \sum_{j=1}^{\infty} \mu_j^{\half} \psi'_j\otimes \psi_j,
\end{equation*}
where $\{\mu_j\}_{j=1}^\infty$ are the eigenvalues of $AA^*$ in a decreasing order, $\{\psi_j\}_{j=1}^\infty$ and $\{\psi'_j\}_{j=1}^\infty$ are the corresponding orthonormal eigenfunctions of $A^*A$ and $AA^*$, respectively.  Through the singular value decomposition, it is easy to prove that for any $r \in \RR,\lambda>0$, it holds that
\begin{equation}\label{matrix_inverse_lemma}
    (A^*A + \lambda)^{r}A^* = A^*(AA^*+\lambda)^{r}.
\end{equation}
In our proof, we will frequently use this equality.

For more information of bounded linear operators in Hilbert spaces, we refer to \cite{kato2013perturbation}.

\subsection{Useful Operator Inequalities}
\label{sec: useful-inequalties}
\begin{lemma}[Cordes inequality, \cite{fujii1993norm}]\label{lemma: Cordes}
Given two compact, self-adjoint and positive operators $A$ and $B$, we have for all $r \in [0, 1]$ that 
\begin{equation}\label{ineq:cordes}
    \|A^r B^r\| \le \|A B\|^r.
\end{equation}
\end{lemma}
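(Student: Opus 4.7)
The plan is to derive the Cordes inequality from the L\"owner--Heinz operator-monotonicity theorem, which states that whenever $0 \le X \le Y$ are bounded self-adjoint operators, one has $X^r \le Y^r$ for every $r \in [0,1]$. Setting $M := \|B B'\|$, I would first rewrite the target inequality $\|B^r (B')^r\| \le M^r$ as an operator inequality using the $C^*$-identity $\|A\|^2 = \|A^* A\|$. Since $(B^r (B')^r)^* = (B')^r B^r$, one has $\|B^r (B')^r\|^2 = \|(B')^r B^{2r} (B')^r\|$, so the goal reduces to proving the operator inequality $(B')^r B^{2r} (B')^r \le M^{2r}\, \cI$.

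To produce this inequality, I would start from the same identity at $r = 1$: $\|B B'\|^2 = \|B' B^2 B'\| = M^2$, which gives $B' B^2 B' \le M^2\, \cI$. Assuming for the moment that $B'$ is invertible, conjugating both sides by $(B')^{-1}$ yields $B^2 \le M^2 (B')^{-2}$. L\"owner--Heinz then promotes this to $B^{2r} \le M^{2r} (B')^{-2r}$ for every $r \in [0,1]$, and sandwiching both sides between $(B')^r$ gives exactly $(B')^r B^{2r} (B')^r \le M^{2r}\, \cI$, completing the argument in the invertible case.

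To remove invertibility, I would replace $B'$ by $B' + \epsilon\, \cI$ (self-adjoint, positive, and strictly positive hence invertible), apply the previous step to obtain $\|B^r (B' + \epsilon \cI)^r\| \le \|B(B' + \epsilon \cI)\|^r$, and pass to the limit $\epsilon \to 0^+$ using norm-continuity of the map $X \mapsto X^r$ on bounded positive operators for $r \in [0,1]$, which is itself a standard consequence of L\"owner--Heinz (two-sided comparison $X - \epsilon \cI \le X \le X + \epsilon \cI$ yields $\|X^r - (X+\epsilon \cI)^r\| \to 0$). The main subtlety I expect is this last approximation step, since $B' + \epsilon \cI$ is no longer compact; however, its fractional power is still well-defined through the Borel functional calculus, and compactness plays no role in the operator-inequality manipulations above — it enters the paper's setup only to make the spectral form \eqref{eqn: operator-power} available — so the argument goes through verbatim for arbitrary bounded positive $B, B'$.
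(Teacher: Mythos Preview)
Your argument is correct. The paper itself does not prove this lemma: it simply records the statement with a citation to \cite{fujii1993norm} and uses it as a black box. Your route---reduce to the operator inequality $(B')^r B^{2r}(B')^r \lem M^{2r}\cI$, derive it from $B^2 \lem M^2 (B')^{-2}$ via L\"owner--Heinz when $B'$ is invertible, then pass to the limit $B' + \epsilon\cI \to B'$---is exactly the standard proof in the literature, and each step checks out. In particular, the limiting step is fine: since $B'$ and $B'+\epsilon\cI$ commute, functional calculus gives $\|(B'+\epsilon\cI)^r-(B')^r\| = \sup_{t\in\sigma(B')}\big[(t+\epsilon)^r - t^r\big] \le \epsilon^r \to 0$, which is cleaner than the two-sided L\"owner--Heinz justification you sketched. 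Your remark that compactness is irrelevant to the inequality is also correct; the paper imposes it only so that the eigen-expansion \eqref{eqn: operator-power} is available elsewhere.
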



\begin{theorem}[Theorem 3.1 in \cite{minsker2017some}]\label{thm: concentration}
Let $X_1, \ldots, X_n $ be a sequence of independent self-adjoint random operators on Hilbert space $\cH$ such that $\mathbb{E} X_i=0$ for $i=1, \ldots, n$ and $\left\|\sum_{i=1}^n \mathbb{E} X_i^2\right\|\leq \sigma^2$. Assume that $\left\|X_i\right\| \leq U$ almost surely for all $1 \leq i \leq n$ and some positive $U \in \mathbb{R}$. Then, for any $t \ge \frac{1}{6}\left(U+\sqrt{U^2+36 \sigma^2}\right)$,
\begin{equation}\label{eqn: 111}
\mathbb{P}\left(\left\|\sum_{i=1}^n X_i\right\|>t\right) \leq 14 \frac{\tr(\sum_{i=1}^n \mathbb{E} X_i^2)}{\sigma^2} \exp \left[-\frac{t^2 / 2}{\sigma^2+t U / 3}\right].
\end{equation}
\end{theorem}
We remark that the above theorem is slightly tighter than the version stated in \cite{minsker2017some}*{Theorem 3.1}, where $\frac{\tr(\sum_{i=1}^n \mathbb{E} X_i^2)}{\sigma^2}$ in \eqref{eqn: 111} is replaced by $\frac{\tr(\sum_{i=1}^n \mathbb{E} X_i^2)}{\left\|\sum_{i=1}^n \mathbb{E} X_i^2\right\|}$. However,  \cite{minsker2017some} in fact has proved the tighter presented here; we refer to the paragraphs below Eq.~(3.9) in \cite{minsker2017some} for more details.
It is worth noting that a larger $\sigma$ imposes a stricter condition on $t$, which limits the size of deviation for which we can provide concentration guarantees. 

\subsection{$K$-Functionals and Interpolation Spaces}\label{sec:interpolation_space}
In this section, we let $X,Y$ be two Banach spaces that are continuously embedded in a common topological vector space (i.e., they are compatible). We denote 
$$
X+Y=\{a+b: a\in X, b\in Y\}.
$$
\begin{definition}[$K$-functional]
    For $t\geq 0$ and $z\in X+Y$, the $K$-functional is defined by 
    \[
    K_{X,Y}(t,z) = \inf_{z = a+b}(\|a\|_X+t\|b\|_Y).
    \]
\end{definition}
The $K$-method of real interpolation defines a family of interpolation spaces as follows:
\begin{definition}[Interpolation space] 
For any $0 < \theta < 1$ and $q \in [1,\infty]$, the real interpolation space $(X,Y)_{\theta,q}\subset X+Y$ is defined as
\begin{equation*}
    (X,Y)_{\theta,q} = \left\{z \in X+Y : \, t\mapsto t^{-\theta} K_{X,Y}(t,z) \in L^q(\RR^+; \dd t/t)\right\},
\end{equation*}
equipped with the norm
\[
	\|z\|_{(X,Y)_{\theta,q}}=\left(\int_0^\infty \left(t^{-\theta} K_{X,Y}(t,z)\right)^q\frac{\dd t}{t}\right)^{\frac{1}{q}}.
\] 
\end{definition}
\begin{proposition}[Proposition 1.3 in \cite{lunardi2018interpolation}]\label{pro: embedding-interpolation-space-Kmethod}
For any $0 < \theta < 1$ and $1 \le q_1 \le q_2 \le \infty$, it holds that
\begin{equation}\label{continous_embeding}
    (X,Y)_{\theta,q_1} \subset (X,Y)_{\theta,q_2} \quad \text{ with continuous embedding.}
\end{equation}
\end{proposition}
Our subsequent analysis needs to introduce the Lorentz space $L^{p,q}(\rho)$  with $1\le p,q\le \infty$:
\[
    \|f\|_{L^{p,q}(\rho)} = \left(\int_0^\infty \big(t\rho\{x: |f(x)|^p\geq t\}\big)^{q/p}\frac{\dd t}{t}\right)^{1/q}.
\]
The Lorentz space refines the $L^p(\rho)$ space by providing a finer scale of integrability and rearrangement properties. Particularly, $L^{p,p}(\rho)=L^p(\rho)$ and $L^{p,q}(\rho)$ is an interpolation space:
\begin{proposition}[\cite{tartar2007introduction}]
For any $1\leq p,q\leq \infty$, it holds that 
$L^{p,q}(\rho) = (L^1(\rho), L^\infty(\rho))_{1-\frac{1}{p},q} $
\end{proposition}
Moreover, Theorem 1.3.5 in \cite{lunardi2018interpolation} (see also Lemma 4 in \cite{zhang2024optimality}) gives a refiner characterization as follows:
\begin{proposition}\label{L_p interpolation}
Let $\frac{1}{p_\theta} = \frac{1-\theta}{p_1} +\frac{\theta}{p_2}$. Then,
$
    L^{p_\theta, q} = (L^{p_1}(\rho), L^{p_2}(\rho))_{\theta,q}.
$
\end{proposition}

In addition, the interpolation space associated a separable RKHS given in Definition~\ref{def: Hks-space} can be constructed using $K$-method:
\begin{proposition}[Theorem 4.6 in  \cite{steinwart2012mercer}]
For $s\in (0,1)$,  $\cH^s = (L^2(\rho),\cH)_{s,2}$.
\end{proposition}
Therefore, for any $0 < s_1 < s_2$,
\begin{equation}\label{H interpolation}
    \cH^{s_1} = (\cH^{s_2})^{\frac{s_1}{s_2}} = \left(L^2(\rho),\cH^{s_2}\right)_{\frac{s_1}{s_2},2}.
\end{equation}
We shall  use the Riesz-Thorin interpolation theorem given below.
\begin{theorem}[Proposition 1.1.6 in \cite{lunardi2018interpolation}]\label{thm:interpolation}
    Let $(X_1,X_2)$ and $(Y_1,Y_2)$ be two couples of compatible Banach spaces. Let $T$ be bounded linear operator in $X_1 \mapsto Y_1$ and $X_2 \mapsto Y_2$. Then for any $\theta\in (0,1)$ and $ p \in [1,\infty]$, $T$ is also a bounded operator from $(X_1,X_2)_{\theta,p}$ to $(Y_1,Y_2)_{\theta,p}$ and
    \begin{equation*}
        \|T\|_{(X_1,X_2)_{\theta,p}\mapsto (Y_1,Y_2)_{\theta,p}} \le \|T\|_{X_1 \mapsto Y_1}^{1-\theta} \|T\|_{X_2\mapsto Y_2}^\theta.
    \end{equation*}
\end{theorem}

We refer \cite{tartar2007introduction,lunardi2018interpolation} for more discussion on interpolation spaces and real interpolation method and \cite{steinwart2012mercer,zhang2024optimality} for its application in studying kernel methods.

\subsection{Auxiliary Lemmas}

\begin{lemma}\label{lemma: AB-bound}
Let $A$ be a positive self-adjoint compact operator defined over a separable Hilbert space and $\alpha,\beta,\gamma\in (0,\infty)$. If $\beta\leq \alpha$, we have
$\|(A+\alpha )^\gamma (A+\beta)^{-\gamma}\|\leq (\alpha/\beta)^\gamma$.
\end{lemma}
\begin{proof}
Let $A=\sum_{j} \lambda_j \varphi_j\otimes \varphi_j$ be the spectral decomposition of $A$. Then,
\begin{align*}
    \|(A+\alpha )^\gamma (A+\beta)^{-\gamma}\| &= \left\|\sum_j \left(\frac{\lambda_j+\alpha}{\lambda_j +\beta}\right)^\gamma \varphi_j\otimes \varphi_j\right\| \\ 
    &\leq \sup_{j}\left(\frac{\lambda_j+\alpha}{\lambda_j +\beta}\right)^\gamma\leq \left(\frac{\alpha}{\beta}\right)^\gamma,
\end{align*}
where the last step uses the fact $g(t):=\frac{t+\alpha}{t+\beta}$ is decreasing in $[0,\infty)$ when $\alpha\geq\beta > 0$.
\end{proof}

\begin{theorem}[Lemma 1 in \cite{rosenthal1970subspaces}]\label{lemma: rosenthal-inequality}
Let $X_1,\dots,X_n$ be $n$ independent non-negative random variables with finite $p$-th order moments with $1\leq p<\infty$. Then, we have
\[
    \EE\left(\sum_{i=1}^n X_i\right)^{p}\leq C_p \max\left\{\sum_{i=1}^n \EE[X_i^p], \left(\sum_{i=1}^n \EE[X_i]\right)^p\right\}
\]
\end{theorem}

\begin{lemma}\label{eqn: markov}
Let $X$ be a non-negative random variable and suppose $\EE[X^q]<\infty$ for some $q>0$. Then, for any $\delta\in (0,1)$, with probability at least $1-\delta$, we have 
\[
    X\leq \delta^{-1/q} (\EE[X^q])^{1/q}.
\]
\end{lemma}
\begin{proof}
By Markov's inequality, for any $t\geq 0$, it holds that
\[
    \PP\{X\geq t\} = \PP\{X^q\geq t^q\} \leq \frac{\EE[X^q]}{t^q}
\]
Setting the right hand side to $\delta$ gives $t = \delta^{-1/q}(\EE[X^q])^{1/q}$.
This completes the proof.
\end{proof}

\begin{lemma}\label{lemma: f_lambda}
For any $\alpha\in (0,1)$, we have
$\sup_{t>0}\frac{t^{\alpha}}{t+\lambda}\lesssim_\alpha \lambda^{\alpha-1}$ for any $\lambda>0$.	
\end{lemma}
\begin{proof}
Let $f(t)=\frac{t^{\alpha}}{t+\lambda}$. Then, 
\[
	f'(t) = \frac{\alpha t^{\alpha-1}(t+\lambda)-t^{\alpha}}{(t+\lambda)^2} = \frac{(1-\alpha) t^{\alpha-1}}{(t+\lambda)^2} \left(\frac{\lambda \alpha}{1-\alpha }-t\right).
\]
Since $\alpha\in (0,1)$, $f$ is increasing in $[0, t^*]$ and decreasing in $[t^*,\infty)$ with $t^*=\lambda\alpha/(1-\alpha)=:c_\alpha \lambda$. Thus,
\[
	f(t)\leq f(t^*) = c_\alpha^\alpha\frac{\lambda^\alpha}{c_\alpha \lambda+\lambda}= \frac{c_\alpha^\alpha}{1+c_\alpha}\lambda^{\alpha-1}.
\]
\end{proof}

\section{Properties of the Interpolation Spaces}

Our proofs shall use the following properties of the interpolation space $\cH^s$, given in Definition~\ref{def: Hks-space}.
\subsection{Embeddings into $L^q$ Spaces}
\label{sec:the_l_q_embedding_of_ch_s_}

\begin{theorem}\label{thm:embedding}
    Let $\gamma\in (0,1)$ satisfy $\sum_{j}\mu_j^{\gamma}<\infty$. For any $\lambda\geq 0$, let $m(\lambda) = \max\{j \in \NN^+: \mu_j \ge \lambda\}$. For $g = \sum_{j = m(\lambda)+1}^\infty a_j e_j$, let $g_q(x) = \frac{1}{\sqrt{q(x)}}g(x)$.
    Then, for any $0 \le s < \gamma$, there exists a constant $C_{s,\gamma} > 0$ such that 
    \begin{equation*}
        \left\|g_q\right\|_{L^{q_{s} }(\rho')} \le C_{s,\gamma}\lambda^{\frac{s}{2}}F_\gamma(\lambda)^{\frac{s}{2\gamma}} \|g\|_{\cH^s} \quad \text{ with }\, q_{s}=\frac{2\gamma}{\gamma-s}.
    \end{equation*}
\end{theorem}
This theorem shows that when $s>0$, $\cH^s$  can be continuously embedded into $L^{q_s}$ with $q_s>2$. Consider the power-decay spectrum $\mu_j\propto j^{-\beta}$  with $\beta>1$ and $q$ is chosen such that $F_\gamma(\lambda) = N_\gamma(\lambda)$, then $F_\gamma(\lambda)\propto \lambda^{-1/\beta}$. Then, 
\[
    \left\|g_q\right\|_{L^{q_{s} }(\rho')}\lesssim_{\gamma,s}\lambda^{\frac{s}{2}(1-\frac{1}{\beta \gamma})}.
\]
Note that it always holds that $\gamma\geq 1/\beta$ and thus $1-1/(\beta\gamma)\geq 0$. Thus, the higher-frequency component has smaller $L^{q_{s}}$ norm.

We remark that a similar embedding has been established in  \cite{zhang2024optimality}*{Theorem 5}, which shows that
$
    \cH^s \hookrightarrow L^{2\alpha/(\alpha-s)}(\rho)
$
for any $\alpha$ such that $\cH^\alpha$ can continuously embed into $L^\infty(\rho)$.
Theorem~\ref{thm:embedding} generalizes \cite{zhang2024optimality}*{Theorem 5} in two aspects. First, the introducing of weighted sampling allows to take $\gamma$ smaller than $\alpha_0$. For instance, consider the pow-decay spectrum, we can take $\gamma$ as close as possible to $1/\beta$. However, $\alpha_0$ may be away from $1/\beta$ when the eigenfunction is not bounded or the kernel does not have symmetry. Second, our theorem provide a fine-grained characterization of how the frequency affects the embedding constant, which play a crucial role in our subsequent analysis.

\begin{proof}
   Consider the operator $\QQ$ from $L^2(\rho)$ to $L^2(\rho')$:
    \begin{equation*}
        (\QQ f)(x) = \frac{1}{\sqrt{q(x)}}\sum_{j=m(\lambda_n)+1}^\infty a_j e_j(x),\, \text{ if }f = \sum_{j=1}^\infty a_j e_j.
    \end{equation*}
    Since 
    $
        \|g_q\|_{L^{q_{s,\gamma}}}= \|\QQ  g\|_{L^{q_{s,\gamma}}}\leq \|\QQ \|_{\cH^{s}\mapsto L^{q_{s,\gamma}}(\rho')}\|g\|_{\cH^s},
    $
    it suffices to show
    \begin{equation*}
        \|\QQ \|_{\cH^{s}\mapsto L^{q_{s,\gamma}}(\rho')} \le C_{s,\gamma} \lambda^{\frac{s}{2}}F_\gamma(\lambda)^{\frac{s}{2\gamma}}.
    \end{equation*}

    First, we have $\|\QQ \|_{L^2(\cX, \rho)\mapsto L^2(\cX, \rho')} \le 1$ due to 
    \begin{align*}
        \|\QQ  f\|_{\rho'}^2 &= \int_{\cX} \left(\sum_{j=m(\lambda_n)+1}^\infty a_j e_j(x)\right)^2 \frac{1}{q(x)}\dd \rho'(x) = \int_{\cX} \left(\sum_{j=m(\lambda_n)+1} a_j e_j(x)\right)^2 \dd \rho(x) \\ 
        &= \sum_{j=m(\lambda_n)+1}^\infty a_j^2 \leq\|f\|_{\rho}^2.
    \end{align*}
    Second, we have  $\|\QQ \|_{\cH^{\gamma}\mapsto L^\infty}\leq (2\lambda)^{\gamma/2} F_\gamma(\lambda)^{1/2}$ due to
    \begin{align*}
        \|\QQ f\|_\infty^2 &= \esssup_{x \in \cX}\left|\frac{1}{\sqrt{q(x)}}\sum_{j=m(\lambda_n)+1}^\infty a_j e_j(x)\right|^2 \\
        &\le\esssup_{x \in \cX} \frac{1}{q(x)}\left(\sum_{j=m(\lambda)+1}^\infty \mu_j^{\gamma} |e_j(x)|^2\right) \left(\sum_{j=m(\lambda)+1}^\infty \mu_j^{-\gamma} a_j^2\right)\\
        &\leq \|f\|_{\cH^{\gamma}}^2(2\lambda)^{\gamma}\esssup_{x \in \cX} \frac{1}{q(x)}\left(\sum_{j=m(\lambda)+1}^\infty \frac{\mu_j^{\gamma}}{(2\lambda)^{\gamma}} |e_j(x)|^2\right)\\
        &\le \|f\|_{\cH^{\gamma}}^2 (2\lambda)^{\gamma}\esssup_{x \in \cX} \frac{1}{q(x)}\sum_{j=m(\lambda)+1}^\infty \frac{\mu_j^{\gamma}}{(\mu_j + \lambda)^{\gamma}} |e_j(x)|^2 \\
        &\le \|f\|_{\cH^{\gamma}}^2(2\lambda)^{\gamma}F_{\gamma}(\lambda).
    \end{align*}
  
    Recalling Proposition~\ref{L_p interpolation} and Equation~\eqref{H interpolation}, we have
    \[
     L^{\frac{2\gamma}{\gamma-s}}(\rho')=(L^2(\rho'), L^\infty(\rho'))_{\frac{s}{\gamma},\frac{2\gamma}{\gamma-s}},\quad \cH^{s} = (L^2(\rho),\cH^{\gamma})_{\frac{s}{\gamma},2}
    \]
    Then, by Theorem \ref{thm:interpolation}, we have
    \begin{align*}
        \|\QQ \|_{\cH^{s}\mapsto L^{\frac{2\gamma}{\gamma-s} }(\rho')} &= \|\QQ \|_{(L^2(\rho),\cH^{\gamma})_{\frac{s}{\gamma},2}\mapsto (L^2(\rho'), L^\infty(\rho'))_{\frac{s}{\gamma},\frac{2\gamma}{\gamma-s}}} \\ 
        &\le C_{s,\gamma} \|\QQ\|_{(L^2(\rho),\cH^{\gamma})_{\frac{s}{\gamma},2}\mapsto (L^2(\rho'), L^\infty(\rho'))_{\frac{s}{\gamma},2}}\\
        &\le C_{s,\gamma}\|\QQ\|_{L^2(\rho)\mapsto L^2(\rho')}^{1-\frac{s}{\gamma}} \|\QQ\|_{\cH^{\gamma}\mapsto L^\infty(\rho')}^{\frac{s}{\gamma}}\\ 
        & \le C_{s,\gamma} \lambda^{\frac{s}{2}}F_\gamma(\lambda)^{\frac{s}{2\gamma}},
    \end{align*} 
    where the second step follows from Proposition~\ref{pro: embedding-interpolation-space-Kmethod}.
\end{proof}

\subsection{A Random Feature Characterization}
\label{sec:a_random_feature_perspective}
Our proof adopts the following random feature characterization of the interpolation spaces, which extends an analogous characterization of RKHSs developed in \cite{chen2023duality}.
Let $\phi:\cX\times \Omega\mapsto\RR$ be a parametric feature function with $\cX$ and $\Omega$ denoting the input and weight domain, respectively. For any $\pi\in \cP(\Omega)$, we can define an associated random feature kernel  as follows
\begin{equation}\label{eqn: RF-kernel}
k_\pi(x,x') = \int_{\Omega} \phi(x,\omega)\phi(x',\omega)\dd \pi(\omega).
\end{equation}
Without loss of generality, we  assume that the kernel of interest $k$ admits the above random feature representation. This assumption is justified by \cite{chen2023duality}*{Lemma 4.2}, which shows that for any kernel $k:\cX\times\cX\mapsto\RR$ satisfying $\int_{\cX} k(x,x)\dd\rho(x)<\infty$, 
there exists a probability space $(\Omega,\pi)$ and  $\phi:\cX\times\Omega \rightarrow \RR$ such that $k=k_\pi$ in the sense $L^2(\rho\otimes \rho)$.

Consider the conjugate kernel $\tilde{k}: \Omega \times\Omega \mapsto \RR$ given by
$
    \tilde{k}(\omega,\omega') = \int_{\cX}\phi(x,\omega)\phi(x,\omega')\dd\rho(x).
$
Then, $k$ and $\tk$ are defined over the input domain and weight domain, respectively. 
Let $\tH^s$ denote the interpolation space induced by the kernel $\tk$ and 
define  $\cT: L^2(\Omega,\pi)\mapsto L^2(\Omega,\pi)$ by 
$$
    \cT \alpha= \int_{\Omega}\tilde{k}(\cdot,\omega) \alpha(\omega)\dd \pi(\omega).
$$
To clarify the relationship between $\cT$ and $\cL$, we introduce the  feature operator $\Phi: L^2(\Omega, \pi)\mapsto L^2(\cX, \rho)$ and its adjoint operator as follows:
\[
 \Phi \alpha = \int_{\Omega}\alpha(\omega)\phi(\cdot,\omega)\dd\pi(\omega) \,\text{,  } \,\Phi^* f = \int_{\cX}f(x)\phi(x,\cdot)\dd\rho(x).
 \]
It is easy to verify that 
$
    \cL = \Phi\Phi^*,  \cT = \Phi^*\Phi.
$
Let $\{\tilde{e}_j\}_{j=1}^\infty$ be the orthonormal eigenfunctions of $\cT$. Then, $\Phi$ and $\Phi^*$ has  the following singular value decompositions (SVDs):
\begin{equation}\label{eqn: svd}
 \Phi = \sum_{j=1}^\infty\mu_j^{\half}e_j\otimes \te_j,\quad \Phi^* = \sum_{j=1}^\infty \mu_j^{\half} \te_j\otimes e_j.
\end{equation}
By the above SVDs, it holds that 
$
    \Phi\alpha = \sum_{j=1}^\infty \mu_j^{1/2} a_j e_j\text{ for any } \alpha = \sum_{j=1}^\infty  a_j \te_j.
$
Therefore, for any $s\geq -1$, we have
\begin{align*}
\|\Phi \alpha\|_{\rho}^2 = \sum_{j=1}^\infty \mu_ja_j^2 = \sum_{j=1}^\infty \mu_j^{1+s}\frac{a_j^2}{\mu_j^s}\leq \mu_1^{1+s}\|\alpha\|_{\tH^s}^2.
\end{align*}
This implies that $\Phi$ can be continuously extended to $\tilde{\cH}^s$ with $s\in [-1,0)$. Analogously, $\Phi^*$ can be extended to $\cH^s$ with $s\in [-1,0)$. We will then adopt this extension throughout this paper. 

\begin{proposition}\label{pro: feature-coeff}
 For any $f\in \cH^s$ with $s\geq 0$, 
$\|f\|_{\cH^s}=\inf_{f=\Phi \alpha}\|\alpha\|_{\tH^{s-1}}$ and there exists a unique $a\in \tH^{s-1}$ such that $f=\Phi\alpha$ and $\|f\|_{\cH^s}=\|a\|_{\tH^{s-1}}$.
\end{proposition}
\begin{proof}
Let $\alpha_f = \sum_{j} \langle f, e_j\rangle_\rho \mu_j^{-1/2} \te_j$. It follows from SVD~\eqref{eqn: svd} that $\Phi \alpha_f = f$ and $\|\alpha_f\|_{\tH^{s-1}}^2 = \sum_j \mu_j^{-s}\langle f, e_j\rangle_\rho^2=\|f\|_{\cH^s}^2$. This justifies the existence. 
Suppose $\alpha=\sum_j a_j\te_j\in\tH^{s-1}$ such that $\Phi \alpha=\sum_j \mu_j^{1/2} a_j e_j=f$. 
Since   $\mu_j>0$ for all $j\in \NN$, we must have $a_j = \langle f, e_j\rangle_\rho \mu_j^{-1/2}$ for all $j\in \NN$. Consequently, $\alpha=\alpha_f$, which justifies the uniqueness.
\end{proof}
This lemma characterizes the interpolation space $\cH^s$ using the feature operator $\Phi$. The special case of $s=1$, which corresponds to the RKHS, has already been established in \cite{rahimi2008uniform}. 

\begin{remark}
We stress that the random feature characterization of interpolation spaces introduced above is not essential for our subsequent proofs. Similar results can likely be obtained using the techniques developed in \cite{fischer2020sobolev}, in conjunction with our proposed refined degrees of freedom in Definition~\ref{def: degree-Fp}. Nevertheless, we find this perspective both intuitive and insightful for establishing connections with random feature approximations (see Propositions \ref{pro: feature-coeff} and \ref{pro: equivalence}). This connection may be helpful for a more refined analysis of the efficacy of random features, particularly in the noiseless regime as discussed in Section~\ref{sec:concluding_remark}.
\end{remark}

\section{The Bias-Variance Decomposition for KRR Solutions}
\label{sec:the_krr_solution}
In this section, we shall leverage the feature operator $\Phi$ and its discrete version to express the KRR solution and the associated learning error. To this end, we define the discrete feature operator $\cS_n: L^2(\Omega,\pi)\mapsto\RR^n$ and  its adjoint operator $\cS_n^*: \RR^n \mapsto L^2(\Omega, \pi)$ by 
\[
(\cS_n \alpha)_i = \frac{1}{\sqrt{nq(x_i)}}\int_\Omega \alpha(\omega) \phi(x_i,\omega) \dd\pi(\omega),\quad
\cS_n^* a = \sum_{i=1}^n a_i \frac{1}{\sqrt{nq(x_i)}} \phi(x_i,\cdot).
\]
Then, we have
\begin{equation}\label{eqn: empirical-kernel}
\begin{aligned}
   K_n:=\cS_n\cS_n^* &=  \left(\frac{1}{n\sqrt{q(x_i)q(x_j)}}k(x_i,x_j)\right)_{1\leq i,j\leq n}\\ 
   \hT_n:=\cS_n^*\cS_n &=\fn\sum_{i=1}^n \frac{1}{q(x_i)}\phi(x_i,\cdot)\otimes \phi(x_i,\cdot)
\end{aligned}.
\end{equation}
Note that $K_n\in\RR^{n\times n}$ is the kernel matrix and $\hT_n$ is an empirical approximations of $\cT$. Moreover, 
\begin{equation}\label{eqn: 22}
\cS_n \alpha^* =  \frac{1}{\sqrt{n}}\left(\frac{f^*(x_1)}{\sqrt{q(x_1)}},\cdots, \frac{f^*(x_n)}{\sqrt{q(x_n)}}\right)^\top \in\RR^n
\end{equation}
and for any $a\in\RR^n$, 
\begin{align}\label{eqn: 11}
\notag    \Phi\cS_n^* a &= \Phi \left(\sum_{i=1}^n a_i \frac{1}{\sqrt{nq(x_i)}}\phi(x_i,\cdot)\right) = \sum_{i=1}^n a_i \frac{1}{\sqrt{n q(x_i)}}\int_{\Omega} \phi(\cdot,\omega)\phi(x_i,\omega)\dd\pi(\omega)\\ 
    &= \sum_{i=1}^n a_i \frac{1}{\sqrt{n q(x_i)}}k(x_i,\cdot) = k_n(\cdot)^\top a.
\end{align}

\paragraph*{The bias-variance decomposition.}
Recall that 
$\hf_\lambda  = \hk_n(\cdot)^\top \hat{b}_\lambda$ for $\lambda>0$ 
with 
\begin{equation}\label{eqn: krr-a-1}
\hat{b}_\lambda = (K_n + \lambda I_n )^{-1} \hy. 
\end{equation}
Noting $(\hy)_i = \frac{1}{\sqrt{nq(x_i)}}(f^*(x_i) + \xi_i)$ for $i\in [n]$, we have 
\[
	\hb_\lambda = \underbrace{(K_n + \lambda I_n )^{-1}S_n\alpha^*}_{\hb_{\lambda}^{\text{bias}}} + \underbrace{(K_n + \lambda I_n )^{-1} Q\hat{\xi}}_{\hb_{\lambda}^{\text{var}}},
\]
where
\[
	\hat{\xi}=\frac{1}{\sqrt{n}}(\xi_1,\dots,\xi_n)^\top\in\RR^n, \qquad Q=\diag(q(x_1)^{-1/2},\dots, q(x_n)^{-1/2})\in\RR^{n\times n}.
\]
Then, we have the bias-variance decomposition:
\begin{align}
\hf_\lambda = 	\hf^{\text{bias}}_\lambda + \hf^{\text{var}}_\lambda,
\end{align}
where the two terms are given by 
\begin{itemize}
\item {\bf The bias term}, applying \eqref{eqn: 11}, \eqref{eqn: krr-a-1}, \eqref{matrix_inverse_lemma}, and the fact $\cS_n\cS_n^*=K_n$, is given by
\begin{align}\label{eqn: bias-term}
\notag    \hf^{\text{bias}}_\lambda &=\Phi \cS_n^*(K_n + \lambda I_n )^{-1}\cS_n\alpha^*= \Phi\cS_n^*(\cS_n\cS_n^* + \lambda)^{-1}\cS_n\alpha^* \\ 
&\stackrel{\text{}}{=}\Phi(\cS_n^*\cS_n + \lambda)^{-1}\cS_n^*\cS_n \alpha^* = \Phi (\hT_n + \lambda)^{-1} \hT_n \alpha^*;
\end{align}
\item {\bf The variance term} is determined by the label noise:
\begin{align}\label{eqn: variance-term}
\hf^{\text{var}}_\lambda = \Phi \cS_n^*\left(K_n+\lambda\right)^{-1}Q\hat{\xi}.
\end{align}
\end{itemize}

Then, we have the total {\bf error decomposition}:
\begin{align}\label{eqn: bias-variance decomposition}
\|\hf_\lambda - f^*\|_{\cH^p}^2 \leq 2\left(\|\hf^{\text{bias}}_\lambda-f^*\|_{\cH^p}^2 + \|\hf^{\text{var}}_\lambda\|_{\cH^p}^2\right).
\end{align}

\begin{remark}
Note that the bias term corresponds to the estimation error of noiseless KRR, which is our main focus, except in Section~\ref{sub:implication_for_noisy_krr}.
Consequently, in most of our statements, we simply write
\(\hat{f}_\lambda = \hat{f}_\lambda^{\mathrm{bias}}\)
without introducing any ambiguity.
\end{remark}

We next show that the bias term is equivalent to the error arising from certain random feature approximations.
This perspective, inspired by the abstract dual equivalence  developed in \cite{chen2023duality}, provides useful insights into noiseless KRR.

\begin{lemma}[Random feature approximation]
For any  $\alpha^* \in \tH^{s}$ with $s\geq -1$ and $\lambda>0$, let 
$$
    \hat{\alpha}_\lambda = \cA_{n,\lambda}\alpha^*, \,\,\text{ where }\,  \cA_{n,\lambda}=(\hT_n+\lambda)^{-1}\hT_n.
$$ 
Then, $\hat{\alpha}_\lambda$
is the optimal norm-controlled approximation of $\alpha^*$ in the subspace spanned by the weighted random features $\{q(x_i)^{-1/2}\phi(x_i,\cdot)\}_{i=1}^n$. Specifically,
\begin{equation}\label{eqn: 2}
\hat{\alpha}_\lambda = S_n^* \hat{a}_\lambda,\,\, \text{ where} \,\, \ha_\lambda =\argmin_{a\in\RR^n} \left(\left\|\alpha^* - S_n^*a\right\|_\pi^2 + \lambda\|a\|_2^2\right).
\end{equation}
\end{lemma}
Recalling that when $\alpha^* \in \tH^s$ with $s \geq -1$, then $f^* = \Phi\alpha^* \in L^2(\cX, \rho)$. Hence, $\cS_n\alpha^* = (\frac{f(x_1)}{\sqrt{nq(x_1)}},\dots,\frac{f(x_n)}{\sqrt{nq(x_n)}})^\top$ is well-defined. Hence, $\hat{\alpha}_\lambda$ is well-defined for any $\alpha^* \in \tH^{s}$ with $s\geq -1$. 

\begin{proof}
Noticing $
J(a):=\|\alpha^* - \cS_n^* a\|_\pi^2 + \lambda \|a\|_2^2
$
 is quadratic with respect to $a\in(\RR^n,\|\cdot\|_2)$, a simple calculation gives
\begin{equation}\label{eqn: optimal-a-for-RF}
    \ha_\lambda = (\cS_n\cS_n^*+\lambda)^{-1}\cS_n\alpha^*.
\end{equation}
Thus, the optimal random feature approximation is given by 
\begin{align*}
    \hat{\alpha}_\lambda &= \cS_n^* \ha_\lambda = \cS_n^*(\cS_n\cS_n^*+\lambda)^{-1}\cS_n\alpha^*\\ 
    &\stackrel{\text{(i)}}{=}(\cS_n^*\cS_n+\lambda)^{-1}\cS_n^*\cS_n\alpha\stackrel{\text{(ii)}}{=} (\hT_n+\lambda)^{-1}\hT_n \alpha^* = \cA_{n,\lambda}\alpha^*,
\end{align*}
where (i) and (ii) follow from \eqref{matrix_inverse_lemma} and \eqref{eqn: empirical-kernel}, respectively.
\end{proof}

The next theorem establishes an equivalence between noiseless KRR and random feature approximation.
\begin{proposition}\label{pro: equivalence}
Let $f^*\in\cH^s$ with $s \ge 0$ and $f^* = \Phi \alpha^*$ with $\alpha^*\in\tH^{s-1}$. Then, $\hf_\lambda^{\mathrm{bias}} =\Phi \hat{\alpha}_\lambda$ and for any $p\in[0, s]$, 
\[
\|\hat{f}^{\mathrm{bias}}_\lambda-f^*\|_{\cH^p}^2 = \|\hat{\alpha}_\lambda-\alpha^*\|_{\tH^{p-1}}^2.
\]
\end{proposition}
\begin{proof}
By \eqref{eqn: bias-term}, we have
$
\hat{f}^{\text{bias}}_\lambda - f^* = \Phi(\hat{\alpha}_\lambda - \alpha^*)
$
and consequently,
\begin{align*}
    \|\hat{f}^{\mathrm{bias}}_\lambda - f^*\|^2_{\cH^p} &= \|\cT^{-\frac{p}{2}}\Phi (\hat{\alpha}_\lambda - \alpha^*)\|_{\rho}^2  
     = \left\langle \cT^{-\frac{p}{2}}\Phi (\hat{\alpha}_\lambda - \alpha^*),\cT^{-\frac{p}{2}}\Phi (\hat{\alpha}_\lambda - \alpha^*)\right\rangle_\rho  \\
    &= \left\langle \hat{\alpha}_\lambda - \alpha^*,\Phi^*\cT^{-p}\Phi (\hat{\alpha}_\lambda - \alpha^*)\right\rangle_\pi  
    \stackrel{(i)}{=}\left\langle \hat{\alpha}_\lambda - \alpha^*,\cT^{1-p} (\hat{\alpha}_\lambda - \alpha^*)\right\rangle_\pi  \\
    &= \|\cT^{-\frac{p-1}{2}}(\hat{\alpha}_\lambda - \alpha^*)\|_\pi^2 
    = \|\hat{\alpha}_\lambda - \alpha^*\|_{\tH^{p-1}}^2,
\end{align*}
where $(i)$ is due to 
$
   \Phi^*\cT^{-p}\Phi = \Phi^*(\Phi\Phi^*)^{-p}\Phi = (\Phi^*\Phi)^{-p}\Phi^*\Phi = \cT^{1-p} 
$
\end{proof}

Combining  Propositions \ref{pro: feature-coeff} and  \ref{pro: equivalence}, we see that for any $s \ge 0$, the estimation of noiseless KRR  for the  target function $f^*\in \cH^s$, measured in the $\cH^p$-norm is equivalent to the random feature approximation for the target function $\alpha^* \in \tH^{s-1}$, measured in the $\tH^{p-1}$-norm.

\section{Proofs of Main Theorems}
Our proofs will leverage the random feature perspective of $\cH^s$ and the equivalence between noiseless KRR and random feature approximation established in Theorem~\ref{pro: equivalence}.

\subsection{Proof of Theorem~\ref{thm:lower_bound}}
\begin{proof}
    First, using the Proposition 4 in \cite{chen2023duality}, we have that
    \begin{equation}\label{eqn: i-complexity-lower-bound}
        \inf_{\cQ\in \cM_n}\sup_{\|f\|_{\cH^s} \le 1}\|\cQ f - f\|_{\cH^p} \ge \sup_{\|f\|_{\cH^s \le 1, f(x_1) = \dots = f(x_n) = 0}}\|f\|_{\cH^p}.
    \end{equation}
    We then proceed by considering the subspace spanned by the first $n+1$ eigenfunctions: 
    $f = \sum_{j=1}^{n+1}\mu_j^{\frac{s}{2}} a_j e_j$, with $a = (a_1,\dots,a_{n+1}) \in \RR^{n+1}$.  Within this subspace, 
    \[\|f\|_{\cH^s}^2=\|a\|_2^2,\qquad \|f\|_{\cH^p}^2=\sum_{j=1}^{n+1}\mu_j^{s-p}a_j^2 = a^\top D a,
     \]
    where $D = \mathrm{diag}(\mu_1^{s-p},\dots,\mu_{n+1}^{s-p})$. The constraint $f(x_i)=0$ for all $i\in [n]$, becomes $f(x_i)=\sum_{j=1}^{n+1}\mu_j^{s/2}a_je_j(x_i)=0$ for all $i\in [n]$. Let $E=(E_{i,j}) \in \RR^{n\times(n+1)}$ with $E_{ij} = \mu_j^{\frac{s}{2}} e_j(x_i)$. Then, the constraint becomes $Ea=0$.  Plugging these into \eqref{eqn: i-complexity-lower-bound} gives
    \begin{equation*}
        \sup_{\|f\|_{\cH^s \le 1, f(x_1) = \dots = f(x_n) = 0}}\|f\|_{\cH^p}^2 \ge \sup_{\|a\|_2 \le 1, Ea = 0} a^\top D a.
    \end{equation*}
    Finally, through Courant minimax principle \cite{courant2008methods}, we have that
    \begin{equation*}
        \sup_{\|a\| \le 1, Ea = 0} a^\top D a \ge \inf_{E} \sup_{\|a\| \le 1, Ea = 0} a^\top D a= \mu_{n+1}^{s-p}.
    \end{equation*}
\end{proof}

\subsection{Proof of Theorem \ref{thm:krr_error}}
\label{sec: proof-theorem-krr}

We first need the following lemmas. 
\begin{lemma}\label{lemma: minimum-norm}
Let $p \in [0,1]$ and $s \ge 1$. For any $f^*=\Phi\alpha^* \in \cH^s$ and $\alpha^* \in \cH^{s-1}$, $\lim_{\lambda \rightarrow 0^+} \|\hat{f}_\lambda-\hat{f}_0\|_{\cH^p} = 0$.
\end{lemma}

\begin{proof}
 Noticing that $\hat{y} = K_n\hb_0$, we know that
\begin{align*}
    \| \hat{f}_\lambda-\hat{f}_0 \|_{\cH}^2 &= (\hb_\lambda-\hb_0)^\top K_n (\hb_\lambda-\hb_0)\\ 
    &=((K_n + \lambda I_n)^{-1}K_n \hb_0-\hb_0)^\top K_n ((K_n + \lambda I_n)^{-1}K_n \hb_0-\hb_0) \\
    &=\lambda^2\hb_0^\top(K_n + \lambda I_n)^{-1}K_n(K_n + \lambda I_n)^{-1}\hb_0\\
    &\le \lambda \|\hb_0\|^2\|\lambda(K_n+\lambda I_n)^{-1}K_n(K_n+\lambda I_n)^{-1}\|\\
    &\le\lambda \|\hb_0\|^2 \sup_{z\geq 0}\frac{z\lambda}{(z+\lambda)^2}\leq \frac{1}{4}\lambda \|\hb_0\|^2.
\end{align*}
By Definition~\ref{def: Hks-space}, we have
\[
\|\hf_\lambda - \hf_0\|_{\cH^p} =\|\cT^{\frac{1-p}{2}}(\hf_\lambda - \hf_0)\|_\cH \le \|\cT^{\frac{1-p}{2}}\|\|\hf_\lambda-\hf_0\|_{\cH}\leq \|\cT^{\frac{1-p}{2}}\|\frac{\sqrt{\lambda}}{2}\|\hb_0\|.
\]
Therefore, $\lim_{\lambda\to 0^+}\|\hf_\lambda - \hf_0\|_{\cH^p} =0$. 
\end{proof}

\begin{lemma}\label{lemma: dof-rf}
Let $F_\gamma$ be the  DoF given in Definition \ref{def: degree-Fp}. Then, for any $\lambda>0$, it holds
\begin{align}\label{eqn: dof-feature}
F_\gamma(\lambda) = \esssup_{x \in \cX}\frac{1}{q(x)}\left\|\cT^{\frac{\gamma-1}{2}}(\cT+\lambda)^{-\frac{\gamma}{2}}\phi(x,\cdot)\right\|_\pi^2.
\end{align}
\end{lemma}
\begin{proof}
Using the singular value decomposition \eqref{eqn: svd}, we have 
\begin{align*}
\|\cT^{\frac{\gamma-1}{2}}(\cT+\lambda)^{-\frac{\gamma}{2}}\phi(x,\cdot)\|_\pi^2 &= \|\cT^{\frac{\gamma-1}{2}}(\cT+\lambda )^{-\frac{s}{2}}\sum_j \mu_j^{\half}e_j(x)\te_j \|_\pi^2\\ 
&=\Big\|\sum_j \mu_j^{\half}e_j(x)\frac{\mu_j^{\frac{\gamma-1}{2}}}{(\mu_j+\lambda)^{\frac{\gamma}{2}}} \te_j\Big\|_\pi^2\\ 
&=\sum_j \frac{\mu_j^\gamma}{(\mu_j+\lambda)^\gamma}e_j^2(x).
\end{align*}
Further applying \eqref{eqn: F-def} completes the proof.
\end{proof}

In the sequel, we shall use \eqref{eqn: dof-feature} to studying the maximal $\gamma$-DoF.
The key step for the proof of Theorem \ref{thm:krr_error} is the following lemma:
\begin{lemma}[Concentration of the empirical operator]\label{lem1}
Let $n\in \NN^+$, $0\leq \gamma \le 1$,  $\delta \in (0,1)$. If $\lambda\geq \lambda_n:=\Lambda_\gamma(n,\delta)$.
Then, with probability at least $1-\delta$, it holds that
    \begin{equation}\label{eqn: norm-concentra}
        \left\|\cT^{\frac{\gamma-1}{2}}(\cT+\lambda )^{-\frac{\gamma}{2}}(\hT_n-\cT)(\cT+\lambda )^{-\frac{\gamma}{2}}\cT^{\frac{\gamma-1}{2}}\right\|\le \frac{3}{4}.
    \end{equation}
\end{lemma}

\begin{proof}
Let $\Delta_n = \cT^{\frac{\gamma-1}{2}}(\cT+\lambda )^{-\frac{\gamma}{2}}(\hT_n-\cT)(\cT+\lambda )^{-\frac{\gamma}{2}}\cT^{\frac{\gamma-1}{2}}$. Noticing
\[
	\hT_n =\frac{1}{n}\sum_{i=1}^n \frac{1}{q(x_i)}\phi(x_i,\cdot)\otimes \phi(x_i,\cdot),\quad \cT = \EE_{x\sim \rho'}\left[\frac{1}{q(x)}\phi(x,\cdot)\otimes \phi(x,\cdot)\right].
\]
We shall bound $\|\Delta_n\|$ by applying concentration inequality for random self-adjoint PSD operators over Hilbert spaces (see Theorem~\ref{thm: concentration}).

Let $h_i = \cT^{\frac{\gamma-1}{2}}(\cT+\lambda )^{-\frac{\gamma}{2}}\phi(x_i,\cdot)\in L^2(\Omega, \pi)$ and 
$
    Z_i  = \frac{1}{n q(x_i)}  h_i\otimes h_i - \frac{1}{n}\cT^\gamma(\cT+\lambda )^{-\gamma}.
$
Then, $Z_i:L^2(\Omega, \pi)\mapsto L^2(\Omega,\pi)$ for $i\in [n]$ and $\{Z_i\}_{i=1}^n$ are \iid random self-adjoint operators, satisfying $\EE[Z_i]=0$ and
\begin{align}\label{eqn: 99}
\notag    \sum_{i=1}^n Z_i &=  \cT^{\frac{\gamma-1}{2}}(\cT+\lambda )^{-\frac{\gamma}{2}}\hT_n (\cT+\lambda )^{-\frac{\gamma}{2}}  \cT^{\frac{\gamma-1}{2}}- \cT^\gamma(\cT+\lambda )^{-\gamma} =\Delta_n
\end{align}

To bound the sum $\sumin Z_i$, we need  some moment estimates. By noting $\|h_i\|^2_\pi=\|\cT^{\frac{\gamma-1}{2}}(\cT+\lambda )^{-\frac{\gamma}{2}} \phi(x_i,\cdot)\|_\pi^2$ and Lemma~\ref{lemma: dof-rf}, we have
\begin{align*}
    \|Z_i\| &\le \frac{1}{n}\max\left\{\frac{1}{q(x_i)}\|h_i\|_\pi^2, \|\cT^\gamma(\cT+\lambda )^{-\gamma}\|\right\}\\ 
    &= \frac{1}{n}\max\left\{\frac{1}{q(x_i)}\|h_i\|_\pi^2, \frac{\mu_1^\gamma}{(\mu_1+\lambda)^\gamma}\right\}\le \frac{1}{n} F_\gamma(\lambda)
\end{align*}
and 
\begin{align*}
    \EE Z_i^2&\lem \frac{1}{n^2}\EE\left[\frac{1}{q(x_i)^2}\|h_i\|^2 h_i\otimes h_i\right] 
    \lem \frac{1}{n^2} F_\gamma(\lambda)\EE\left[\frac{1}{q(x_i)}  h_i\otimes h_i\right] \\ 
    &= \frac{1}{n^2} F_\gamma(\lambda)\cT^\gamma(\cT+\lambda)^{-\gamma}, 
\end{align*}
which implies
\begin{align*}
\left\|\sum_{i=1}^n\EE Z_i^2\right\| &\le \frac{F_\gamma(\lambda)}{n}\|\cT^\gamma(\cT+\lambda)^{-\gamma}\|\leq \frac{F_\gamma(\lambda)}{n} \\ 
\tr\left(\sum_{i=1}^n\EE Z_i^2\right) &\le \frac{F_\gamma(\lambda)}{n}\sum_{i=1}^\infty \frac{\mu_j^\gamma}{(\mu_j+\lambda)^\gamma} \stackrel{}{\le} \frac{F_\gamma(\lambda)^2}{n}.
\end{align*}
Then, it follows from Theorem \ref{thm: concentration} that for any $t$ satisfying $3nt^2 \ge 3F_\gamma(\lambda) + F_\gamma(\lambda) t $:
\begin{align*}
    \PP\Big(\|\Delta_n\|\ge t\Big)=\PP\left(\left\|\sum_{i=1}^n Z_i\right\|\ge t\right)\le 14F_\gamma(\lambda)\exp\left(-\frac{3nt^2}{F_\gamma(\lambda)(6+2t)}\right),
\end{align*}
Taking $t = 3/4$, it is easy to verify that when  $\lambda\ge \lambda_n$, it follows that 1) $3nt^2 \ge 3F_\gamma(\lambda) + F_\gamma(\lambda) t $; 2) $ 14F_\gamma(\lambda)\exp(-3nt^2/(F_\gamma(\lambda)(6+2t)) \le \delta$. Therefore,  $\|\Delta_n\|\leq 3/4$ holds with probability at least $1-\delta$.
\end{proof}

\begin{lemma}\label{lemma: 11} 
For any $n\in \NN^+$ and $\gamma\in (0,1)$, if $\lambda$ is sufficiently large  to guarantee that \eqref{eqn: norm-concentra} holds,
then we have for any $r \in [0,\half]$ that
    \begin{equation}\label{eq:norm_bound}
        \|(\hT_n + \lambda )^{-r}\cT^r\| \le \|(\hT_n+\lambda)^{-r}(\cT+\lambda)^r\| \le  4^r.
    \end{equation}
\end{lemma}
\begin{proof}
Noting 
\begin{align*}
\|(\hT_n + \lambda )^{-r}\cT^r\| &\le \|(\hT_n+\lambda)^{-r}(\cT+\lambda)^r (\cT+\lambda)^{-r}\cT^r\| \\ 
&\leq \|(\hT_n+\lambda)^{-r}(\cT+\lambda)^r \|\|(\cT+\lambda)^{-r}\cT^r\|\\ 
&\leq \|(\hT_n+\lambda)^{-r}(\cT+\lambda)^r \|,
\end{align*}
we only need to prove the second inequality in \eqref{eq:norm_bound}:

We first consider  the case of $r=\half$. Noting that  \eqref{eqn: norm-concentra} implies that $\cT^{\frac{\gamma-1}{2}}(\cT+\lambda )^{-\frac{\gamma}{2}}(\cT-\hT_n)(\cT+\lambda )^{-\frac{\gamma}{2}}\cT^{\frac{\gamma-1}{2}} \lem \frac{3}{4} $, we have
\begin{equation*}
    \cT - \hT_n \lem \frac{3}{4} \cT^{1-\gamma}(\cT+\lambda )^{\gamma}.
\end{equation*}
Hence,
\begin{align*}
    \hT_n + \lambda  &= \cT + \lambda  - (\cT - \hT_n)\\ 
    &\gem \cT + \lambda  - \frac{3}{4}\cT^{1-\gamma}(\cT+\lambda)^\gamma \\ 
    &\gem \frac{1}{4}(\cT + \lambda ) +\frac{3}{4}\left((\cT + \lambda )- \cT^{1-\gamma}(\cT+\lambda)^\gamma \right)\\
    &\gem \frac{1}{4}(\cT+\lambda),
\end{align*}
where the last inequality follows from the observation that 
\[
    (\cT + \lambda )- \cT^{1-\gamma}(\cT+\lambda)^\gamma = \sum_{j} \left(\mu_j+\lambda - \mu_j^{1-\gamma}(\mu_j+\lambda)^\gamma\right)\te_j\otimes \te_j\gem 0,
\]
where we use the inequality: $(a+\lambda)-a^{1-\gamma}(a+\lambda)^\gamma\geq 0$ for any $a\geq 0$.
Therefore, 
$$
(\hT_n + \lambda )^{-\frac{1}{2}}(\cT+\lambda )(\hT_n + \lambda )^{-\frac{1}{2}} \lem 4 I
$$
and hence 
\begin{align}\label{eqn: s-half}
\|(\hT_n + \lambda )^{-\frac{1}{2}}(\cT+\lambda )^{\half}\| = \sqrt{\|(\hT_n + \lambda )^{-\frac{1}{2}}(\cT+\lambda )(\hT_n + \lambda )^{-\frac{1}{2}}\|} \le 2.
\end{align}

For the case of $r\in [0,\half)$, we have
\begin{align*}
     \|(\hT_n+\lambda)^{-r}(\cT+\lambda)^r\| &= \|((\hT_n+\lambda )^{-\half})^{2r}((\cT+\lambda)^{\half})^{2r}\| \\ 
     &\stackrel{(i)}{\leq}\|(\hT_n+\lambda )^{-\half} (\cT+\lambda)^{\half}\|^{2r}\\
     &\stackrel{(ii)}{\leq} 4^r,
\end{align*}
where $(i)$ uses the Cordes inequality \eqref{ineq:cordes} and $(ii)$ is due to \eqref{eqn: s-half}.
\end{proof}

The following lemma establishes that controlling the uniform estimation error in noiseless KRR is equivalent to bounding a certain operator norm.
\begin{lemma}[Bias term]\label{lemma: uniform-operaotr-bound}
$
     \sup_{\|f^*\|_{\cH^s}\le 1}\|\hf_{\lambda} - f^*\|_{\cH^p} = \lambda\|\cT^{\frac{1-p}{2}}(\hT_n+\lambda)^{-1}\cT^{\frac{s-1}{2}} \|.
$
\end{lemma}
\begin{proof}
Using the equivalence between KRR and random feature approximation in Proposition \ref{pro: equivalence}, we have
\begin{equation*}
    \sup_{\|f^*\|_{\cH^s}\le 1}\|\hf_{\lambda} - f^*\|_{\cH^p} = \sup_{\|\alpha^*\|_{\cH^{s-1}}\le 1}\|\hat{\alpha}_{\lambda}-\alpha^*\|_{\cH^{p-1}}.
\end{equation*}
 Recalling that $\hat{\alpha}_{\lambda} = \cA_{n,\lambda}\alpha^*$ and 
 $\cA_{n,\lambda}=(\hT_n+\lambda )^{-1}\hT_n =  - \lambda (\hT_n+\lambda )^{-1}$, we have 
\begin{align}\label{eq: closed_form}
\notag	 \sup_{\|\alpha^*\|_{\cH^{s-1}}\le 1}\|\hat{\alpha}_{\lambda}-\alpha^*\|_{\cH^{p-1}} &= \sup_{\|\alpha^*\|_{\tilde{\cH}^{s-1}} \le 1}\|\cT^{\frac{1-p}{2}}[\cA_{n,\lambda} \alpha^* - \alpha^*]\|_\pi\\ 
\notag &= \lambda  \sup_{\|\alpha^*\|_{\tilde{\cH}^{s-1}} \le 1}\|\cT^{\frac{1-p}{2}}(\hT_n + \lambda )^{-1}\alpha^* \|_\pi\\ 
\notag &\stackrel{(i)}{=} \lambda\sup_{\|g\|_\pi \le 1}\|\cT^{\frac{1-p}{2}}(\hT_n+\lambda)^{-1}\cT^{\frac{s-1}{2}} g \|_\pi  \\
& = \lambda\|\cT^{\frac{1-p}{2}}(\hT_n+\lambda)^{-1}\cT^{\frac{s-1}{2}} \|.
\end{align}
\end{proof}

\begin{proof}[Proof of Theorem \ref{thm:krr_error}]

We only need to bound the operator norm in  Lemma~\ref{lemma: uniform-operaotr-bound}.
\paragraph*{\underline{Case I}: $s\in [1,2]$ and $p \in [0,1]$.}
For any $0 < \lambda \le \lambda_n$,
\begin{align*}
&\lambda\|\cT^{\frac{1-p}{2}}(\hT_n+\lambda)^{-1}\cT^{\frac{s-1}{2}} \|\\
     &\le  \lambda \|\cT^{\frac{1-p}{2}}(\hT_n+\lambda_n )^{\frac{p-1}{2}}\|\|(\hT_n+\lambda_n)^{\frac{1-p}{2}}(\hT_n+\lambda)^{\frac{p-1}{2}}\|\|(\hT_n + \lambda )^{\frac{s-p}{2}-1}\| \\
     &\quad\quad \times\|(\hT_n+\lambda)^{\frac{1-s}{2}}(\hT_n+\lambda_n)^{\frac{s-1}{2}}\|\|(\hT_n + \lambda_n )^{\frac{1-s}{2}}\cT^{\frac{s-1}{2}}\|\\
     &\leq \lambda  4^{\frac{1-p}{2}} \left(\frac{\lambda_n}{\lambda}\right)^{\frac{1-p}{2}}\lambda^{\frac{s-p}{2}-1}\left(\frac{\lambda_n}{\lambda}\right)^{\frac{s-1}{2}}4^{\frac{s-1}{2}}\\
     &\le 4\lambda_n^{\frac{s-p}{2}},
\end{align*} 
where the second step uses Lemmas~\ref{lemma: 11} and~\ref{lemma: AB-bound}.

When $\lambda=0$, we can apply Lemma \ref{lemma: minimum-norm}  to obtain the same conclusion as follows:
\begin{align*}
    \sup_{\|f^*\|_{\cH^s}\le 1}\|\hf_0 - f^*\|_{\cH^p} &= \sup_{\|f^*\|_{\cH^s}\le 1}\lim_{\lambda \rightarrow 0^+}\|\hf_{\lambda} - f^*\|_{\cH^p} \leq \limsup_{\lambda \rightarrow 0^+} \sup_{\|f^*\|_{\cH^s}\le 1}\|\hf_{\lambda} - f^*\|_{\cH^p} \\ 
    &\le 4\lambda_n^{\frac{s-p}{2}},
\end{align*}

\paragraph*{\underline{Case II}: $0 \le p \le s < 1$.} We only consider the case where $\lambda \geq \lambda_n$. Notice that
\begin{equation*}
      \lambda\|\cT^{\frac{1-p}{2}}(\hT_n+\lambda )^{-1}\cT^{\frac{s-1}{2}} \| \le \bM_1 + \bM_2, 
\end{equation*}
where $\bM_1 = \lambda\|\cT^{\frac{1-p}{2}}(\cT+\lambda )^{-1}\cT^{\frac{s-1}{2}} \|$ and $\bM_2 =  \lambda\|\cT^{\frac{1-p}{2}}[(\hT_n + \lambda)^{-1}-(\cT+\lambda )^{-1}]\cT^{\frac{s-1}{2}} \|$. 
\begin{itemize}
\item 
For $\bM_1$, we have that
\begin{align}\label{eqn: Q09}
 \notag  \bM_1 &= \lambda \|(\cT+\lambda)^{-1}\cT^{\frac{s-p}{2}}\| \le \lambda \|(\cT+\lambda)^{-1+\frac{s-p}{2}}\|\|(\cT+\lambda)^{-\frac{s-p}{2}}\cT^{\frac{s-p}{2}}\|\\ 
    & \le \lambda \|(\cT+\lambda)^{-1+\frac{s-p}{2}}\|\leq \lambda^{\frac{s-p}{2}}.
\end{align}

\item For $\bM_2$, applying $A^{-1} - B^{-1} = A^{-1}(B-A)B^{-1}$  gives
\begin{align}\label{eqn: Q10}
  \notag    \bM_2 &= \lambda\|\cT^{\frac{1-p}{2}}(\hT_n + \lambda)^{-1}[\hT_n-\cT](\cT+\lambda_n )^{-1}\cT^{\frac{s-1}{2}} \| \\
  \notag    &\le \lambda\|\cT^{\frac{1-p}{2}}(\hT_n+\lambda)^{-1}(\cT+\lambda )^{\frac{s}{2}}\cT^{\frac{1-s}{2}}\| \\
   \notag   &\quad \times\|\cT^{\frac{s-1}{2}}(\cT+\lambda )^{-\frac{s}{2}}(\hT_n-\cT)(\cT+\lambda )^{-\frac{s}{2}}\cT^{\frac{s-1}{2}}\|\\ 
   \notag   &\quad \times \|\cT^{-\frac{s-1}{2}}(\cT+\lambda)^{-1+\frac{s}{2}}\cT^{\frac{s-1}{2}}\| \\
  \notag    &\stackrel{(i)}{\le}\frac{3}{4}\lambda^{\frac{s}{2}}\|\cT^{\frac{1-p}{2}}(\hT_n+\lambda)^{-\frac{1-p}{2}}\|\|(\hT_n+\lambda)^{-\frac{p}{2}}\|\\
  \notag    &\quad \times\|(\hT_n+\lambda)^{-\half}(\cT+\lambda)^{\half}\|\|(\cT+\lambda)^{-\half}(\cT+\lambda )^{\frac{s}{2}}\cT^{\frac{1-s}{2}}\|\\
    &\stackrel{(ii)}{\le} 3\lambda^{\frac{s-p}{2}},
\end{align}
where  $(i)$  follows from Lemma~\ref{lem1} (using the condition that $\lambda\geq \Lambda_s(n,\delta)$) and $(ii)$ follows from Lemma~\ref{lemma: 11}. Combining \eqref{eqn: Q09} and \eqref{eqn: Q10}, it holds for  $0\leq p\leq s< 1$ that 
\begin{equation}\label{eq: small_s_1}
     \lambda\|\cT^{\frac{1-p}{2}}(\hT_n+\lambda )^{-1}\cT^{\frac{s-1}{2}} \|  \le 4\lambda^{\frac{s-p}{2}}.
\end{equation}
\end{itemize}

\paragraph*{\underline{Case III}: $1< p \le s \le 2$.} Take $s' = 2-p$ and $p' = 2 - s $. Then $0 \le p' \le s' < 1$ and the condition $\lambda\geq \Lambda_{2-p}(n,\delta)$ becomes $\lambda\geq \Lambda_{s'}(n,\delta)$. Therefore, we can apply the result in Case II to obtain
\begin{align*}
    \lambda\|\cT^{\frac{1-p}{2}}(\hT_n+\lambda )^{-1}\cT^{\frac{s-1}{2}} \| &= \lambda\|\cT^{\frac{s'-1}{2}}(\hT_n+\lambda )^{-1}\cT^{\frac{1-p'}{2}} \| = \lambda\|\cT^{\frac{1-p'}{2}}(\hT_n+\lambda)^{-1}\cT^{\frac{s'-1}{2}} \| \\
    &\le 4\lambda^{\frac{s'-p'}{2}} = 4\lambda^{\frac{s-p}{2}}.
\end{align*}
\end{proof}

\subsection{Proof of Theorem~\ref{thm:krr_error-one}}
    Let $f^* = \sum_{j=1}^\infty \mu_j^{\frac{s}{2}} a_j e_j$ with $\sum_{j=1}^\infty a_j^2  \le 1$.
     Let $m(\lambda) = \max\{ i \in \NN^+, \mu_i \ge \lambda\}$ and consider the  following {\bf frequency decomposition}: 
    \begin{equation}\label{eqn: target-decomposition}
        f^* = \underbrace{\sum_{j=1}^{m(\lambda)} \mu_j^{\frac{s}{2}} a_j e_j}_{h^*} \quad +\quad \underbrace{\sum_{j=m(\lambda)+1}^{\infty} \mu_j^{\frac{s}{2}} a_j e_j}_{g^*},
    \end{equation}
    where $h^*$ and $g^*$ represent the low- and high-frequency components, respectively. 
    We use $\hh_\lambda$ and $\hg_\lambda$ denote the KRR estimates of $h^*$ and $g^*$, respectively. Using the linearity of KRR (see Eq.~\eqref{eqn: krr-a}), we have 
    \[
        \hf_\lambda = \hh_\lambda + \hg_\lambda.
    \]
    Then, the error can be bounded as follows:
     \begin{align}\label{eqn: error-decomposition}
     \|\hat{f}_{\lambda} - f^*\|_{\cH^p} &\le \|\hh_{\lambda} - h^*\|_{\cH^p} + \|\hg_{\lambda}-g^*\|_{\cH^p}.
    \end{align}

    Typically, the low-frequency component is often sufficiently smooth, while the high-frequency component  is rough but has low energy. Thus, we can bound the two terms on the right hand side of \eqref{eqn: error-decomposition} using distinct strategies that leverage each component's unique property.

\paragraph*{Definition of Events $E_1$ and $E_2$.}
To simplify the statement, we define the events:
\begin{itemize}
\item {\bf Event $E_1$: Concentration of empirical operator.}
\begin{align}
    E_1=\left\{(x_1,\dots,x_n): \left\|\cT^{\frac{\gamma-1}{2}}(\cT+\lambda )^{-\frac{\gamma}{2}}(\hT_n-\cT)(\cT+\lambda )^{-\frac{\gamma}{2}}\cT^{\frac{\gamma-1}{2}}\right\|\le \frac{3}{4}\right\}
\end{align}
Theorem~\ref{thm: concentration} along with the condition $\lambda\geq \Lambda_\gamma(n)$ guarantees that 
\[
    \PP\{E_1\}\geq 1-\delta_1.
\]

\item {\bf Event $E_2$: Bound on the labels of the high-frequency component.}
The following lemma ensures that the energy of $g^*$ is sufficiently small:
\begin{lemma}\label{lemma: d2}
Let $Y_g=(Y_1,\dots,Y_n)^\top \in \RR^n$ with $Y_i = \frac{1}{\sqrt{n}q(X_i)} g^*(X_i)$ and $X_i\sim \rho'$. Let 
\[
    E_{2}=\left\{(x_1,\dots,x_n): \|Y_g\|_2 \le C_{s,\gamma} \delta_2^{-\frac{\gamma-s}{2\gamma}} \lambda^{\frac{s}{2}}\right\}.
\]
Then, under the assumption of Theorem~\ref{thm:krr_error-one}, it holds
\[
    \PP\{E_2\}\geq 1-\delta_2.
\]
\end{lemma}
\end{itemize}
We also need the following lemma, which characterizes how the $\cH^p$ norm of a KRR estimate depends on the label's $\ell^2$ norm.
\begin{lemma}\label{lemma: Hp-norm-KRR2}
Let $\{(x_i,y_i)\}_{i=1}^n$ be the training data, $\hf_\lambda$ be the KRR estimate, and  
$$
\hy = \left(\frac{y_1}{\sqrt{nq(x_1)}}, \frac{y_2}{\sqrt{nq(x_2)}}, \cdots, \frac{y_n}{\sqrt{nq(x_n)}}\right)^\top \in\RR^n.
$$
Suppose  
$\sup_{r\in [0,\half]}\|\cT^{r}(\hT_n+\lambda )^{-r}\|\leq B$ holds for some $\lambda, B>0$.
Then for any $p\in [0,1]$, we have 
\[
\|\hf_\lambda\|_{\cH^p}\leq B\lambda^{-p/2}\|\hy\|_2.
\]
\end{lemma}

\paragraph*{The low-frequency component ($h^*$)}
The main observation is that the low-frequency component is sufficiently smooth, as its $\cH^\gamma$ norm is well-controlled:
    $$
        \|h^*\|_{\cH^{\gamma}}^2 = \sum_{j=1}^{m(\lambda)} \mu_j^{s-\gamma} a_j^2 \le \lambda^{s-\gamma} \sum_{j=1}^{m(\lambda)} a_j^2 \le \lambda^{s-\gamma}.
    $$
    The proof of Theorem~\ref{thm:krr_error} implies that under the event $E_1$, it follows that: 
    \begin{equation}\label{eq:thm45_3}
        \|\hat{h}_{\lambda}-h^*\|_{\cH^p} \le \|h^*\|_{\cH^{\gamma}}\sup_{\|f^*\|_{\cH^{\gamma}}\le 1}\|f^* - \hat{f}_{\lambda}\|_{\cH^p} \leq \lambda^{\frac{s-\gamma}{2} }\times 4 \lambda^{\frac{\gamma-p}{2}} \le 4 \lambda^{\frac{s-p}{2}}.
    \end{equation}

\paragraph*{The high-frequency component ($g^*$)}
To bound $\|\hg_\lambda-g^*\|_{\cH^p}$, we use 
\[
   \|\hg_\lambda-g^*\|_{\cH^p}\leq \|\hg_\lambda\|_{\cH^p} + \|g^*\|_{\cH^p}.
\]
\begin{itemize}
\item First, 
\begin{align}\label{eqn: 123}
        \|g^*\|_{\cH^p}^2 = \sum_{j=m(\lambda)+1}^\infty \mu_j^{s-p} a_j^2 \leq \lambda^{s-p}\sum_{j= m(\lambda)+1}^\infty a_j^2\leq \lambda^{s-p}.
\end{align}
\item By Lemma~\ref{lemma: 11}, the event $E_1$ implies that $\sup_{0\leq r\leq 1/2}\|\cT^r (\hT_n+\lambda)^{-r}\|\leq 2$. Thus, Lemma~\ref{lemma: Hp-norm-KRR2} implies that we have under $E_1\cap E_2$:
\begin{align}\label{eqn: 234}
    \|g_\lambda\|_{\cH^p}\leq 2\lambda^{-\frac{p}{2}} \|Y_g\|_2\leq C_{s,\gamma}\delta_2^{-\frac{\gamma-s}{2\gamma}}\lambda^{\frac{s-p}{2}}.
\end{align}
\end{itemize}

\paragraph*{Final error bound.}
Plugging \eqref{eq:thm45_3}, \eqref{eqn: 123}, and \eqref{eqn: 234} into \eqref{eqn: error-decomposition}, we have  under $E_1\cap E_2$:
    \begin{align*}
        \|\hat{f}_{\lambda} - f^*\|_{\cH^p} 
        &\leq \|\hh_{\lambda} - h^*\|_{\cH^p}+\|\hg_{\lambda}\|_{\cH^p} + \|g^*\|_{\cH^p}\\ 
        &\le 4\lambda^{\frac{s-p}{2}}+ C_{s,\gamma} \delta_2^{-\frac{\gamma-s}{2\gamma}}\lambda^{\frac{s-p}{2}} + \lambda^{\frac{s-p}{2}}=(5+C_{s,\gamma}\delta_2^{-\frac{\gamma-s}{2\gamma}})\lambda^{\frac{s-p}{2}}.
    \end{align*}
Moreover, applying union bound gives
\[
    \PP\{E_1\cap E_2\} =1 - \PP\{E_1^c \cup E_2^c\}\geq 1- (\PP\{E_1^c\} + \PP\{E_2^c\}) \geq 1- \delta_1-\delta_2.
\]
This completes the proof.
\qed

\subsection{Proof of Theorem~\ref{thm: noisy-krr}}
\label{sec:proof_of_theorem_ref_thm_noisy_krr}

By the error decomposition~\eqref{eqn: error-decomposition}, we have
\begin{align*}
\|\hf_\lambda - f^*\|_{\cH^p}^2 \lesssim \|\hf^{\text{bias}}_\lambda-f^*\|_{\cH^p}^2 + \|\hf^{\text{var}}_\lambda\|_{\cH^p}^2.
\end{align*}
Note that the bias term has already been bounded in Theorem~\ref{thm:krr_error} and Theorem~\ref{thm:krr_error-one} under various conditions. The proof is then completed by employing the following bound on the variance term:
\begin{proposition}\label{pro:variance_estiamte}
Suppose Assumptions~\ref{assumption: noise} and \ref{assumption: weighted sampling} hold. 
  Then for any $p \in [1,2)$, for any $\delta\in (0,1)$, \wp at least $1-\delta$ we have
\begin{equation*}
    \left\|\hf^{\mathrm{var}}_\lambda\right\|_{\cH^p}^2 \lesssim \sigma^2\frac{1+\log(1/\delta)}{n}\lambda^{-p}F_{2-p}(\lambda).
\end{equation*}
\end{proposition}

We begin by examining the structure of the variance term. Let
$
	\xi=(\xi_1,\dots,\xi_n)^\top\in\RR^n.
$
Then, by Eq.~\eqref{eqn: variance-term}, we have
\begin{align}\label{eqn: xyz13}
\notag \left\|\hf^{\mathrm{var}}_\lambda\right\|_{\cH^p}^2 &=\left\|\cT^{-p/2}\Phi\cS_n^*(K_n+\lambda)^{-1}Q\hat{\xi}\right\|_\pi^2\\ 
&=\left\|\cT^{-p/2}\Phi(\hT+\lambda)^{-1}S_n^*Q\hat{\xi}\right\|_\pi^2 =\fn\xi^\top A_n \xi
\end{align}
with $A_n\in\RR^{n\times n}$ given by
\begin{align*}
A_n &= Q S_n(\hT_n + \lambda)^{-1}\Phi^* \cT^{-p}\Phi(\hT_n + \lambda)^{-1}\cS_n^*Q \\ 
&= QS_n(\hT_n + \lambda)^{-1}\cT^{1-p}(\hT_n + \lambda)^{-1}\cS_n^*Q.
\end{align*}
Observe that $\|\hf^{\mathrm{var}}_\lambda\|_{\cH^p}^2$ is a quadratic form with respect to $\xi$. Hence, we can apply the Hanson-Wright inequality \cite{vershynin2018high}*{Theorem~6.2.1} to bound the concentration. To this end, we first need to bound the trace of $A_n$.
\begin{lemma}
Given any $p$ such that $\sum_j\mu_j^p <\infty$, let  $\lambda\geq \Lambda_{2-p}(n,\delta)$. Then, \wp at least $1-\delta$, it holds that $\tr(A_n)\lesssim \lambda^{-p}F_{2-p}(\lambda)$.
\end{lemma}
\begin{proof}
Our proof relies on the following observation:
\[
    \tr(A_n) = \left\|\cT^{\frac{1-p}{2}}(\hT_n + \lambda)^{-1}\cS_n^* Q\right\|_{\HS}^2,
\]
where $\|\cdot\|_{\HS}$ stands for the Hilbert-Schmidt norm.
Due to $\|AB\|_{\HS} \le \|A\|_{\HS}\|B\|$, we have 
\begin{align*}
    \tr(A_n) &\le \|\cT^{\frac{1-p}{2}}(\hT_n + \lambda)^{-1}\cS_n^*\|_{\HS}^2\|Q\|^2 \\
    &\le 2 \tr[\cT^{\frac{1-p}{2}}(\hT_n + \lambda)^{-1}S_n^*S_n(\hT_n + \lambda)^{-1}\cT^{\frac{1-p}{2}}] \\
    &= 2 \|\cT^{\frac{1-p}{2}}(\hT_n + \lambda)^{-1}\hT_n^{\half}\|_{\HS}^2,
\end{align*}
where the first step is due to $q(x)\geq 1/2$.
Using  triangle inequality, we obtain
\begin{equation}\label{eqn:lem_D3_1}
    \begin{aligned}
    \sqrt{\half\tr(A_n)}& \leq \|\cT^{\frac{1-p}{2}}(\cT+\lambda)^{-1}\hT_n^\half\|_{\HS} + \|\cT^{\frac{1-p}{2}}[(\cT+\lambda)^{-1}-(\hT_n + \lambda)^{-1}]\hT_n^\half\|_{\HS}\\
    &= \|\cT^{\frac{1-p}{2}}(\cT+\lambda)^{-1}\hT_n^\half\|_{\HS} + \|\cT^{\frac{1-p}{2}}(\cT+\lambda)^{-1}(\hT_n-\cT)(\hT_n + \lambda)^{-1}\hT_n^\half\|_{\HS}, \\
    &=:\bM_1 + \bM_2,
\end{aligned}
\end{equation}
where the second steps use the equality: $A^{-1} - B^{-1} = A^{-1}(B-A)B^{-1}$.
\begin{itemize}
\item {\bf Bounding $\bM_1$.} Let $h_i = \cT^{\frac{1-p}{2}}(\cT+\lambda)^{\frac{p}{2}-1} \phi(x_i,\cdot) $ and note  $\hT_n=\frac{1}{n}\sum_{i=1}^n \frac{1}{q(x_i)}\phi(x_i,\cdot)\otimes \phi(x_i,\cdot)$. Then, we have
 \begin{align}\label{eqn: xyz15}
  \notag  \bM_1^2 &=   \tr((\cT + \lambda)^{-\frac{p}{2}}\cT^{\frac{1-p}{2}}(\cT+\lambda)^{\frac{p}{2}-1}\hT_n(\cT+\lambda)^{\frac{p}{2}-1}\cT^{\frac{1-p}{2}}(\cT + \lambda)^{-\frac{p}{2}})\\ 
   \notag     &\le \lambda^{-p} \tr\left(\frac{1}{n}\sum_{i=1}^n \frac{1}{q(x_i)} h_i\otimes h_i\right) =\frac{\lambda^{-p}}{n}\sum_{i=1}^n \frac{1}{q(x_i)}\|h_i\|_\pi^2 \\ 
    &\leq \lambda^{-p}\sup_{x}\frac{1}{q(x)}\|h(x)\|_\pi^2 = \lambda^{-p}F_{2-p}(\lambda),
 \end{align}
 where the last steps follows from Definition~\ref{def: degree-Fp} and Lemma~\ref{lemma: dof-rf}.

\item {\bf Bounding $\bM_2$.} Using the triangle inequality, we have
\begin{align}\label{eqn: xyz10}
\notag   \bM_2 &= \|\cT^{\frac{1-p}{2}}(\cT+\lambda)^{-1}(\hT_n-\cT)(\hT_n + \lambda)^{-1}\hT_n^\half\|_{\HS}\\ 
 \notag    &\leq  \|\cT^{\frac{1-p}{2}}(\cT+\lambda)^{-1}\hT_n(\hT_n + \lambda)^{-1}\hT_n^\half\|_{\HS} +  \|\cT^{\frac{1-p}{2}}(\cT+\lambda)^{-1}\cT(\hT_n + \lambda)^{-1}\hT_n^\half\|_{\HS}\\ 
  &= \|\cT^{\frac{1-p}{2}}(\cT+\lambda)^{-1}(\hT_n + \lambda)^{-1}\hT_n^{\frac{3}{2}}\|_{\HS} +  \|\cT^{\frac{3-p}{2}}(\cT+\lambda)^{-1}(\hT_n + \lambda)^{-1}\hT_n^\half\|_{\HS},
\end{align}
for which we shall bound the two terms on the right hand side separately. 
For the first term, using $\|ABC\|_{\HS} \le \|A\| \|B\|_{\HS}\|C\|$,  we obtain
\begin{align}\label{eqn: xyz11}
    \notag &\|\cT^{\frac{1-p}{2}}(\cT+\lambda)^{-1}(\hT_n + \lambda)^{-1}\hT_n^{\frac{3}{2}}\|_{\HS}\leq \|\cT^{\frac{1-p}{2}}(\cT+\lambda)^{-1}\hT_n^{\frac{1}{2}}\|_{\HS}\\ 
    \notag & \qquad \qquad\quad = \|(\cT+\lambda)^{-\frac{p}{2}}\cT^\frac{1-p}{2}(\cT+\lambda)^{\frac{p}{2}-1}\hT_n^\half\|_{\HS}\\ 
    \notag &\qquad \qquad\quad \le  \|(\cT+\lambda)^{-\frac{p}{2}}\|\|\cT^\frac{1-p}{2}(\cT+\lambda)^{\frac{p}{2}-1}\hT_n^\half\|_{\HS}\\ 
    &\qquad \qquad\quad = \|(\cT+\lambda)^{-\frac{p}{2}}\| \bM_1\le \lambda^{-\frac{p}{2}} F_{2-p}^{\half}(\lambda), 
\end{align}
where the last step follows from Eq.~\eqref{eqn: xyz15}.
For the second term, analogously, we obtain \wp at least $1-\delta$ that
\begin{align}\label{eqn: xyz12}
 \notag    &\|\cT^{\frac{3-p}{2}}(\cT+\lambda)^{-1}(\hT_n + \lambda)^{-1}\hT_n^\half\|_{\HS} \\ 
\notag   &\qquad\qquad\quad \leq \|\cT^{\frac{3-p}{2}}(\cT+\lambda)^{-\frac{3}{2}}\|_{\HS}\|(\cT+\lambda)^{\half}(\hT_n+\lambda)^{-\half}\|\|(\hT_n + \lambda)^{-\half}\hT_n^{\half}\|\\ 
 \notag    &\qquad\qquad\quad \leq \|\cT^{\frac{3-p}{2}}(\cT+\lambda)^{-\frac{3}{2}}\|_{\HS}\\
 \notag    &\qquad\qquad\quad \leq  \|\cT^{\half}(\cT+\lambda)^{-\half}\|\|\cT^{\frac{2-p}{2}}(\cT+\lambda)^{-\frac{2-p}{2}}\|_{\HS}\|(\cT+\lambda)^{-\frac{p}{2}}\|\\ 
 \notag    &\qquad\qquad\quad \leq \lambda^{-\frac{p}{2}}\sqrt{\tr(\cT^{2-p}(\cT+\lambda)^{p-2})} \\ 
    &\qquad\qquad\quad = \lambda^{-\frac{p}{2}}\sqrt{N_{2-p}(\lambda)} \leq \sqrt{\lambda^{-p} F_{2-p}(\lambda)},
\end{align}
where the second step uses Lemma~\ref{lemma: 11}.
Combining \eqref{eqn: xyz11} and \eqref{eqn: xyz12}, we obtain 
\[
    \bM_2 \leq 2\lambda^{-\frac{p}{2}} F_{2-p}^{\half}(\lambda).
\]
\end{itemize}
Combining the above bounds of $\bM_1$ and $\bM_2$ gives 
\begin{align}
\tr(A_n)\leq 4(M_1^2+M_2^2)\leq  20\lambda^{-p}F_{2-p}(\lambda).
\end{align}
\end{proof}

\paragraph*{The proof of Proposition~\ref{pro:variance_estiamte}.}
 Noticing that $\xi_1,\dots,\xi_n$ are i.i.d. $\sigma^2$-sub-Gaussion mean-zero random variables, we can use Hanson-Wright inequality \cite[Theorem~6.2.1]{vershynin2018high} to obtain that
\begin{equation*}
    \PP\left(\xi^\top A_n \xi \ge  \EE \xi^\top A_n \xi + t\right)\le 2\exp\left(-c\min\left\{\frac{t^2}{\sigma^4\|A_n\|_F^2},\frac{t}{\sigma^2\|A_n\|}\right\}\right).
\end{equation*}
Noticing  
\begin{align*}
 \EE \xi^\top A_n \xi &=\tr(A_n\EE[\xi\xi^\top])\le \sigma^2 \tr(A_n) \\ 
  \|A_n\| &\le \tr(A_n) \\ 
  \|A_n\|_F^2 &= \tr(A_n^2) \le \|A_n\| \tr(A_n) \le [\tr(A_n)]^2, 
\end{align*}
 we can choose $t \sim \sigma^2\tr(A_n) \log(1/\delta)$ to obtain that with probability $1-\delta$,
\begin{equation*}
    \xi^\top A_n \xi \lesssim \EE[\xi^\top A_n \xi]+t\leq (1+\log(1/\delta))\sigma^2 \tr(A_n) \lesssim  (1+\log(1/\delta))\sigma^2 \lambda^{-p} F_{2-p}(\lambda).
\end{equation*}
We conclude the proof by plugging the above estimate into \eqref{eqn: xyz13}.

\subsection{Proof of Theorem~\ref{thm: saturation-1}}
\begin{proof}
To prove the theorem, we need only to show that with probability $1$, $\|\sum_{i=1}^n a_i k(\cdot,x_i)\|_{\cH^p} < \infty$ implies that $a= 0$.
For  $m\in \NN$, let
\[
    \alpha_m=\inf_{x \in \cX}(\sum_{j=1}^{m}\mu_j^{2-p}|e_j(x)|^2)/(\sum_{j = 1}^{m}\mu_j^{2-p}).
\]
Then, the assumption~\eqref{eqn: condition-eigenfunction} becomes 
\[
    \limsup_{m\to \infty}\alpha_m>0.
\]
Noticing that
\begin{align*}
    \left\|\sum_{i=1}^n a_i k(\cdot,x_i)\right\|_{\cH^p}^2 &= \sum_{k =1}^\infty \mu_k^{2-p}\left(\sum_{i=1}^n a_i e_k(x_i)\right)^2\geq \sum_{j=1}^{m}\mu_j^{2-p} \left(\sum_{i=1}^n a_i e_k(x_i)\right)^2\\ 
    &=\sum_{i,i'=1}^n a_i a_{i'}\sum_{j=1}^m \mu_j e_j(x_i)e_j(x_{i'}) \\ 
    &= a^\top \bK_{\leq m} a \ge \lambda_{\min}(\bK_{\leq m}) \|a\|_2^2,
\end{align*}
where  $\bK_{\leq m} \in \RR^{n\times n}$ with $(\bK_{\leq m})_{i,i'} = \sum_{j = 1}^{m}\mu_j^{2-p} e_j(x_i) e_j(x_{i'})$. By \cite[Corollary 1]{barzilai2023generalization}, we have that for any $\delta \in (0,1)$, with probability  at least $1-\delta$, 
\begin{align*}
    \lambda_{\min}(\bK_{\leq m}) &\ge \alpha_{m}\left[\sum_{j=1}^{m}\mu_j^{2-p} - \frac{n}{\delta}\sqrt{\sum_{j=1}^{m}\mu_j^{2(2-p)}}\right] \ge \alpha_{m}\left[\sum_{j=1}^{m}\mu_j^{2-p} - \frac{n}{\delta}\sqrt{\mu_1^{(2-p)}\sum_{j=1}^{m}\mu_j^{(2-p)}}\right]\\
    &= \alpha_m \left[ \Big(\sqrt{\sum_{j=1}^{m}\mu_j^{(2-p)}} - \frac{n}{2\delta}\mu_1^{1-\frac{p}{2}}\Big)^2 - \frac{n^2}{4\delta^2}\mu_1^{2-p}\right]
\end{align*}
By the assumption $\sum_{j=1}^\infty \mu_j^{2-p} = \infty$ and $\limsup_{m \rightarrow \infty} \alpha_m > 0$, we have $\limsup_{ m \rightarrow \infty} \lambda_{\min}(\bK_{\leq m}) = \infty $. Hence, for any $\delta \in (0,1)$, with probability $1-\delta$, $ \|\sum_{i=1}^n a_i k(\cdot,x_i)\|_{\cH^p}^2 = \infty$ unless $a = 0$. Taking $\delta \rightarrow 0$, we complete the proof.
\end{proof}

\subsection{Proof of Theorem~\ref{thm: target-saturation}}
\label{sec:proof_of_theorem_ref_thm_target_saturation}

Our proof builds upon the approach developed in \cite{hinrichs2022lower}, which establishes a lower bound on the estimation error by leveraging the corresponding lower bound for integration error of functions in an RKHS. However, it is important to emphasize that the lower bound in \cite{hinrichs2022lower} applies to the entire RKHS $\cH$, which is insufficient to  justify the saturation effect. To address this, we establish a lower bound for the specific function $f^*=\mu_0e_0$, which exhibits smoothness of arbitrary order.

We first need the following generalization of classical Schur product theorem.
\begin{theorem}[\cite{vybiral2020variant}]\label{thm: schur-product}
Let $M\in\RR^{n\times n}$ be a symmetric positive semidefinite matrix. Then, 
\[
    M\circ M \succeq \frac{1}{n}(\mathrm{diag} M) (\mathrm{diag} M)^\top.
\]
\end{theorem}

\subsubsection{Periodic Kernels}\label{sec: periodic}
Recall that for the eigenfunctions of a periodic kernel $k:\TT\times \TT\mapsto\RR$ are the Fourier modes. Specifically,  its spectral decomposition is given by 
\[
 k(x,x') = \sum_{j\in \ZZ}\mu_j e_j(x)\overline{e_j(x')}= \sum_{j\in \ZZ} e_j(x-y), 
\]
where  $e_j(x) := \exp(2\pi \ii j x)$  with $\ii$ denoting the imaginary unit. To streamline our presentation, we let 
\[
    \he_j=(e_j(x_1),\dots,e_j(x_n))^\top \in\RR^n. 
\]

To use Theorem~\ref{thm: schur-product}, the key  is the following observation, developed in \cite{hinrichs2022lower}: squaring the periodic kernel is still a periodic kernel.  
Given a periodic kernel $m_1: [0,1]^2 \to \mathbb{C}$ given by $m_1(x, y) = \sum_{j \in \mathbb{Z}} \alpha_j e_j(x - y)$. Then, $m_2(x, y) := |m(x, y)|^2$ is still a periodic kernel and satisfies
\begin{align}\label{eqn: squaring}
m_2(x, y) = \sum_{i,j \in \mathbb{Z}} \alpha_i \alpha_j e_{i-j}(x - y) = \sum_{\ell \in \mathbb{Z}} \mu_\ell e_\ell(x - y),
\end{align}
where $\mu_\ell=\sum_{j}\alpha_j\alpha_{j-t}$. This means that  $\mu = \alpha * \alpha$, which provide a simple relationship between the eigenvalues of $m_2$ and $m_1$.

\begin{lemma}\label{lemma: lower-bound-Dirichlet}
Let $h(x,y)=\sum_{|j|\leq m}e_j(x-y)$ be the Dirichlet kernel, where  $m$ is even, and  $H_n=(h(x_i,x_j))_{i,j}\in\RR^{n\times n}$ denote the corresponding kernel matrix. Then,  for any $x_1,\dots,x_n\in\TT$, the following holds:
\begin{equation}
H_n\succeq \frac{m+1}{n}\bm{1}\bm{1}^\top.
\end{equation}
\end{lemma}
\begin{proof}
We shall construct another kernel such that $h(x,y)$ is its square and then apply Theorem~\ref{thm: schur-product}. Specifically,
let $p(x,y)=\sum_{|j|\leq m/2} e_j(x-y)$ and denote by $P_n$ the corresponding kernel matrix. By \eqref{eqn: squaring}, it is easy to verify 
\begin{equation}\label{eqn: xy3}
|p(x,y)|^2 = \sum_{|j|\leq m}(m+1-j)e_j(x-y).
\end{equation}
Thus, we have
\begin{align}
H_n = \sum_{|j|\leq m} \he_j\he_j^\top \succeq \frac{1}{m+1}\sum_{|j|\leq m} (m+1-j) \he_j\he_j^\top =\frac{1}{m+1} P_n\circ P_n,
\end{align}
where the last step is due to \eqref{eqn: xy3}.
Noting that $p(x,x)=m+1$ for any $x\in \TT$ and applying Theorem~\ref{thm: schur-product}, we have 
\[
    H_n\succeq \frac{1}{m+1} P_n\circ P_n \succeq \frac{1}{(m+1)n} (\diag P_n) (\diag P_n)^\top = \frac{m+1}{n}\bm{1}\bm{1}^\top.
\]
\end{proof}

We are now ready to prove the Theorem~\ref{thm: target-saturation}.
Noting $f^*=\mu_0 e_0\in \cH^{\infty}$, we have
\begin{align}\label{eqn: xy1}
\notag \left\|\sumin a_i k(x_i,\cdot) - \mu_0 e_0\right\|^2_{\cH^p}&= \left\|\sumin a_i \left(\sum_j \mu_j e_j(x_i)e_j\right) - \mu_0e_0\right\|^2_{\cH^p}\\ 
\notag  &= \left\|\sum_j \mu_j \left( \sumin a_i  e_j(x_i)\right) e_j - \mu_0 e_0\right\|^2_{\cH^p}\\ 
\notag  &= \sum_{j\neq 0}\mu_j^{2-p} (a^\top \he_j)^2  + \mu_0^{2-p} (a^\top \he_0-1)^2\\ 
&\geq \mu_m^{2-p} \left((a^\top \he_0-1)^2 + \sum_{j\neq 0,|j|\leq m} (a^\top \he_j)^2\right)
=: \mu_m^{2-p} F(a).
\end{align}
Due to $\he_0 = \bm{1}$, we  have
\begin{align}\label{eqn: xy2}
\notag F(a) &= 1 - 2a^\top \bm{1} + a^\top \sum_{|j|\leq m} \he_j\he_j^\top a = 1 - 2 a^\top \bm +  a^\top H_n a \\ 
\notag &\geq 1 - 2 a^\top \bm{1} +  a^\top \left(\frac{m+1}{n}\bm{1}\bm{1}^\top\right) a  \\ 
&\geq \min_{t} \left(\frac{m+1}{n}t^2 - 2t +1\right) = 1 - \frac{n}{m+1},
\end{align}
where the third step follows from Lemma~\ref{lemma: lower-bound-Dirichlet}.
Plugging \eqref{eqn: xy2} into \eqref{eqn: xy1}, we obtain
\[
    \left\|\sumin a_i k(x_i,\cdot) - \mu_0 e_0\right\|^2_{\cH^p}\geq \left(1-\frac{n}{m+1}\right)\mu_m^{2-p}.
\]
Taking $m=2n$ and plugging the above estimate into \eqref{eqn: app-lower-bound-saturation} completes the proof.

\subsubsection{Dot-Product Kernels}
We consider a dot-product kernel $k(x,x'):=\kappa(x^\top x')$ on $\SS^{d-1}$ with some $\kappa: [-1,1]\mapsto\RR$, $\cX=\SS^{d-1}$ and  $\rho=\mathrm{Unif}(\SS^{d-1})$. 
In this part, we restrict our attention to the scenario where $d \ge 3$ because dot-product kernels in $\SS^1$ coincide with the periodic kernels in $\TT$ discussed in Section \ref{sec: periodic}. 
By \cite{dot-product-kernel}, the spectral decomposition of $k$ is given by:
\begin{equation}\label{eqn: def_mu}
   k(x,x') = \sum_{k=0}^\infty\sum_{j=1}^{N(d,k)} \lambda_k Y_{k,j}(x)Y_{k,j}(x'),
\end{equation}
where $\lambda_k$ is the eigenvalue, the spherical harmonics $Y_{k,j}$ is the corresponding eigenfunction that satisfies  
$\EE_{x'\sim\tau_{d-1}}[\kappa(x^Tx')Y_{k,j}(x')]=\lambda_k Y_{j,k}(x)$ and
\begin{equation*}
    N(d,k)= \begin{cases}
\frac{2k+d-2}{k}\binom{k+d-3}{d-2} & \text{ if } k \ge 1,\\
1 & \text{ if } k = 0.
\end{cases}
\end{equation*}
Note that $\{\mu_j\}$ are the eigenvalues counted with multiplicity, while $\{\lambda_k\}$ are the eigenvalues counted without multiplicity.  We refer to \cite{schoenberg1988positive,dot-product-kernel} for more details on the spectral decomposition of a dot product kernel on $\SS^{d-1}$.  

Consider the Dirichlet kernel 
\begin{equation*}
    D_m(x,x') = \sum_{k=0}^m \sum_{j=1}^{N(d,k)} Y_{k,j}(x) Y_{k,j}(x').
\end{equation*}
By \cite{clutton1987generalized}, we have that
\begin{equation}\label{eq:dirichlet}
     D_m(x,x') = \sum_{k=0}^m T_k(x^\top x'),\, T_k(s) = \left(1+\frac{2k}{d-2}\right) C_k^{\frac{(d-2)}{2}}(s),
\end{equation}
where $C_k^\lambda(\cdot)$ represents the Gegenbauer polynomials, which are defined using the generating function:
$$
    (1-2st + t^2)^{-\lambda} = \sum_{k=0}^\infty C_k^\lambda(s) t^k,\, \forall \lambda > 0, \, s \in [-1,1]\, , t \in (-1,1).
$$
Noticing that the spherical harmonics \( Y_{k,j} \) are normalized with respect to the probability distribution \( \mathrm{Unif}(\mathbb{S}^{d-1}) \) rather than the unnormalized measure on \( \mathbb{S}^{d-1} \), Eq.~\eqref{eq:dirichlet} differs from the formula in \cite{clutton1987generalized} by a factor corresponding to the surface area of \( \mathbb{S}^{d-1} \). We refer to \cite{andrews1999special} for more details on sphere harmonics
and Gegenbauer polynomials.

Analogous to the proof in Section \ref{sec: periodic}, we consider the square of the Dirichlet kernel. The following lemma provides the key estimate for this kernel to prove Theorem~\ref{thm: target-saturation}:
\begin{lemma}\label{lem:square_dirichlet}
        $[D_m(x,x')]^2 = \sum_{k=0}^{2m} A_{k,m} T_{k}(x^\top x')$. Moreover, $A_{k,m} \ge 0$ and $\sup_{0 \le k \le 2m}A_{k,m} \lesssim (m+1)^{d-1}$.
\end{lemma}

\begin{proof}
By (3.11) in \cite{clutton1987generalized}, we have
\begin{equation*}
    [D_m(x,x')]^2 = \sum_{k=0}^{2m} A_{k,m} T_k(x^\top x'),
\end{equation*}
where
\begin{equation*}
    A_{k,m} = \frac{d-2}{2k+d-2} \sum_{\substack{0 \leq l,l' \leq m \\
    |l - l'| \leq k \leq l + l' \\
    l + l' + k \text{ even}}}
    \left(1+\frac{2l}{d-2}\right)\left(1+\frac{2l'}{d-2}\right) \Pi(l,l';k),
\end{equation*}
with, for any $0 \le r \le \min(l,l')$,
\begin{equation*}
    \Pi(l,l'; l+l'-2r) = \frac{l+l'+\frac{d}{2}-1-2r}{l+l'+\frac{d}{2}-1-r}\frac{\left(\frac{d}{2}-1\right)_r\left(\frac{d}{2}-1\right)_{l-r}\left(\frac{d}{2}-1\right)_{l'-r}(d-2)_{l+l'-r}(l+l'-2r)!}{r!(l-r)!(l'-r)!\left(\frac{d}{2}-1\right)_{l+l'-r}(d-2)_{l+l'-2r}}.
\end{equation*}

Since $(x)_r = \frac{\Gamma(x+r)}{\Gamma(x)}$, $ r! = \Gamma(r+1)$, we can apply Stirling's formula $\Gamma(x) \sim \sqrt{2\pi} x^{x-\half} e^{-x}$ to estimate
\begin{equation*}
    \frac{\left(\frac{d}{2}-1\right)_r}{r!} = \frac{\Gamma(\frac{d}{2}-1+r)}{\Gamma(r+1)\Gamma(\frac{d}{2}-1)} \asymp (r+1)^{\frac{d}{2}-2}.
\end{equation*}
Similarly,
\begin{align*}
\frac{\left(\frac{d}{2}-1\right)_{l-r}}{(l-r)!} &\asymp (l-r+1)^{\frac{d}{2}-2},\\
\frac{\left(\frac{d}{2}-1\right)_{l'-r}}{(l'-r)!} &\asymp (l'-r+1)^{\frac{d}{2}-2},\\
\frac{(d-2)_{l+l'-2r}}{(l+l'-2r)!} &\asymp (l+l'-2r+1)^{d-3}.
\end{align*}
Moreover, applying Stirling's formula again, we have
\begin{align*}
    \frac{(d-2)_{l+l'-r}}{(\frac{d}{2}-1)_{l+l'-r}} &= \frac{\Gamma(l+l'-r + d-2)\Gamma(\frac{d}{2}-1)}{\Gamma(l+l'-r+\frac{d}{2}-1)\Gamma(d-2)}\\&\asymp \frac{(l+l'-r+d-2)^{l+l'-r+d-5/2}}{(l+l'-r+\frac{d}{2}-1)^{l+l'-r+\frac{d}{2}-3/2}}\asymp(l+l'-r+1)^{\frac{d}{2}-1}.
\end{align*}
Combining with the fact that
\begin{equation*}
    \frac{l+l'+\frac{d}{2}-1-2r}{l+l'+\frac{d}{2}-1-r} \asymp \frac{l + l'- 2r + 1}{l + l' -r + 1},
\end{equation*}
we have that
\begin{align*}
&\left(1+\frac{2l}{d-2}\right)\left(1+\frac{2l'}{d-2}\right)\Pi(l,l'; l+l'-2r)\\
&\quad \lesssim (l+1)(l'+1)(r+1)^{\frac{d}{2}-2}(l-r+1)^{\frac{d}{2}-2}(l'-r+1)^{\frac{d}{2}-2}(l+l'-r+1)^{\frac{d}{2}-2}(l+l'-2r+1)^{4-d}.
\end{align*}

For $d \ge 4$, by the fact that $l,l' \in [r,m]$, we have:
\begin{align*}
&\left(1+\frac{2l}{d-2}\right)\left(1+\frac{2l'}{d-2}\right)\Pi(l,l'; l+l'-2r)\\
&\quad \lesssim (l+1)(l'+1)(r+1)^{\frac{d}{2}-2}(l+l'-r+1)^{\frac{d}{2}-2}\left[\frac{(l-r+1)(l'-r+1)}{(l+l'-2r+1)^2}\right]^{\frac{d-4}{2}}\\
&\quad \lesssim (m+1)^{d-2}.
\end{align*}
Hence, we have
\begin{equation*}
A_{k,m} \lesssim \frac{(m+1)^{d-2}}{k+1}\sum_{\substack{0\leq l,l' \leq m\\|l-l'|\leq k \leq l+l'\\l+l'+k\text{ even}}}1 \lesssim \frac{(m+1)^{d-2}(m+1)(k+1)}{k+1} = (m+1)^{d-1}.
\end{equation*}

When $d = 3$, recalling that $k = l + l' - 2r$, we have that
\begin{align*}
&\left(1+\frac{2l}{d-2}\right)\left(1+\frac{2l'}{d-2}\right)\Pi(l,l'; l+l'-2r)\\
&\quad \lesssim (l+1)(l'+1)(r+1)^{-\half}(l-r+1)^{-\half}(l'-r+1)^{-\half}(l+l'-r+1)^{-\half}(l + l' -2r + 1) \\
&\quad \lesssim (m+1)^{\frac{3}{2}}(r+1)^{-\half}(l-r+1)^{-\half}(l'-r + 1)^{-\half}(l + l' - 2r+1) \\
&\quad \lesssim (m+1)^{\frac{3}{2}}(k+1)(l + l' - k + 1)^{-\half}(l - l' + k + 1)^{-\half}(l' - l + k + 1)^{-\half}
\end{align*}
Hence, we have that
\[
\begin{aligned}
  A_{k,m} &\lesssim (m+1)^{\frac{3}{2}} \sum_{\substack{0\le l,l'\le m \\ |l-l'|\le k \le l+l' \\ l+l'+k\text{ even}}} (l+l'-k+1)^{-\frac{1}{2}} (l-l'+k+1)^{-\frac{1}{2}} (l'-l+k+1)^{-\frac{1}{2}}\\[1mm]
  &\lesssim (m+1)^{\frac{3}{2}} \sum_{L_1=k}^{2m} (L_1-k+1)^{-\frac{1}{2}} \sum_{L_2=-k}^{k} \left[(k+1)^2 - L_2^2\right]^{-\frac{1}{2}}.
\end{aligned}
\]
Recall that
\[
(n+1)^{-\frac{1}{2}} \le 2\Big[(n+1)^{\frac{1}{2}} - n^{\frac{1}{2}}\Big],
\]
so that
\[
\sum_{L_1=k}^{2m} (L_1-k+1)^{-\frac{1}{2}} \le 2(2m-k+1)^{\frac{1}{2}} \lesssim (m+1)^{\frac{1}{2}}.
\]
Moreover, 
\[
\begin{aligned}
\sum_{L_2=-k}^{k}\left[(k+1)^2-L_2^2\right]^{-\frac{1}{2}} 
&\le 2\sum_{L_2 = 0}^K [(k+1) + L_2]^{-\half}[(k+1) - L_2]^{-\half} \\
&\lesssim (k+1)^{-\half}\sum_{L_2 = 0}^K (k+1-L_2)^{-\half} \lesssim 1.
\end{aligned}
\]
Therefore,
\[
A_{k,m} \lesssim (m+1)^{\frac{3}{2}} (m+1)^{\frac{1}{2}} = (m+1)^2 = (m+1)^{d-1},
\]
which completes the proof.

\end{proof}

Analogous to the method shown in Section \ref{sec: periodic}, we will fix $x_1,\dots,x_n\in\SS^{d-1}$ and let
\[
   \he_{k,j}=(Y_{k,j}(x_1),\dots,Y_{k,j}(x_n))^\top \in\RR^n. 
\]

\begin{lemma}\label{lemma: lower-bound-Dirichlet-dot}
Let $H_{n,m} = (D_{m}(x_i,x_j))_{1\le i, j\le n}$ be the kernel matrix corresponding to the Dirichlet kernel.
Then, there exists constant $C > 0$ which only depending on $d$ such that we have that:
\begin{equation}
H_{n,2m}\succeq C\frac{(m+1)^{d-1}}{n}\bm{1}\bm{1}^\top.
\end{equation}
\end{lemma}
\begin{proof}
By Lemma \ref{lem:square_dirichlet}, we have that there exists positive constant $B_{k,m}$ such that
\begin{align}
H_{n,2m} - \frac{C}{(m+1)^d} H_{n,m}\circ H_{n,m}  = \sum_{k=0}^{2m}B_{k,m}\sum_{j=1}^{N(d,k)} \he_{k,j}\he_{k.j}^\top \succeq 0.
\end{align}
By Theorem 9.6.3 in \cite{andrews1999special}, we know that for any $x \in \SS^{d-1}$
$$
D_m(x,x) = \sum_{k = 0}^m \sum_{j=1}^{N(d,j)} |Y_{k,j}(x)|^2 = \sum_{k=0}^m N(d,j),
$$
Noticing that $N(d,k+1) + N(d+1,k) = N(d+1,k+1)$ and Stirling's formula, we know that 
\begin{equation}\label{eqn:estimate_N_d_k}
    \sum_{k=0}^m N(d,j) = N(d+1,m) \asymp \frac{\Gamma(m+d-1)}{\Gamma(d)\Gamma(m)} \asymp \frac{(m+d-1)^{m+d-3/2}}{m^{m-1/2}} \asymp (m+1)^{d-1}
\end{equation}
Hence, by applying Theorem~\ref{thm: schur-product}, we have that
\[
H_{n,2m} \succeq \frac{C}{(m+1)^{d-1}} H_{n,m}\circ H_{n,m} \succeq \frac{C}{(m+1)^{d-1} n} (\diag H_{n,m})(\diag H_{n,m})^\top \succeq \frac{C (m+1)^{d-1}}{n} \bm{1}\bm{1}^\top.
\]
\end{proof}

The remaining part of Theorem~\ref{thm: target-saturation} concerning the dot-product kernel closely mirrors the proof for the periodic kernel presented in Section \ref{sec: periodic}.
Noting $f^*=\lambda_0 Y_{0,1}\in \cH^{\infty}$, we have
\begin{align}\label{eqn: xy1-dot}
\notag \left\|\sumin a_i k(x_i,\cdot) - \lambda_0 Y_{0,1}\right\|^2_{\cH^p}&= \left\|\sumin a_i \left(\sum_{k=0}^\infty\sum_{j=1}^{N(d,j)} \lambda_k Y_{k,j}(x_i)Y_{k,j}\right) - \lambda_0Y_{0,1}\right\|^2_{\cH^p}\\ 
\notag  &= \left\|\sum_{k=0}^\infty \lambda_k\sum_{j=1}^{N(d,k)}  \left( \sumin a_i  Y_{k,j}(x_i)\right) Y_{k.j} - \lambda_0 Y_{0,1}\right\|^2_{\cH^p}\\ 
\notag  &= \sum_{k\neq 0}\lambda_k^{2-p} \sum_{j=1}^{N(d,k)}(a^\top \he_{k,j})^2  + \lambda_0^{2-p} (a^\top \he_{0,1}-1)^2\\ 
&\geq \lambda_{2m}^{2-p} \left((a^\top \he_{0,1}-1)^2 + \sum_{k=1}^{2m} \sum_{j=1}^{N(d,k)} (a^\top \he_{k,j})^2\right)
=: \lambda_{2m}^{2-p} F(a).
\end{align}
Due to $\he_{0,1} = \bm{1}$, we  have
\begin{align}\label{eqn: xy2-dot}
\notag F(a) &= 1 - 2a^\top \bm{1} + a^\top \sum_{k=0}^{2m} \sum_{j=1}^{N(d,j)} \he_{k,j}\he_{k,j}^\top a = 1 - 2 a^\top \bm{1} +  a^\top H_{n,2m} a \\ 
\notag &\geq 1 - 2 a^\top \bm{1} +  a^\top \left(\frac{C(m+1)^{d-1}}{n}\bm{1}\bm{1}^\top\right) a  \\ 
&\geq \min_{t} \left(\frac{C(m+1)^d}{n}t^2 - 2t +1\right) = 1 - \frac{n}{C(m+1)^{d-1}},
\end{align}
where the third step follows from Lemma~\ref{lemma: lower-bound-Dirichlet-dot}.
Plugging \eqref{eqn: xy2-dot} into \eqref{eqn: xy1-dot}, we obtain
\[
    \left\|\sumin a_i k(x_i,\cdot) - \mu_0 e_0\right\|^2_{\cH^p}\geq \left(1-\frac{n}{C(m+1)^{d-1}}\right)\lambda_{2m}^{2-p}.
\]
Choosing $m$ such that $n \asymp \frac{C}{2}(m+1)^d$ or $m \asymp n^{1/(d-1)}$, we have that
\[
    \left\|\sumin a_i k(x_i,\cdot) - \mu_0 e_0\right\|^2_{\cH^p}\geq \frac{\lambda_{2m}^{2-p}}{2}.
\]
Noticing that $\lambda_{2m} = \mu_{\sum_{k = 0}^{2m} N(d,k)}$ and recalling that $\sum_{k = 0}^{2m} N(d,k)\asymp m^{d-1}$ in \eqref{eqn:estimate_N_d_k}, we know that $N(d+1,2m)\asymp n$.
Plugging the above estimate into \eqref{eqn: app-lower-bound-saturation} completes the proof.

\section{Proofs of Important Lemmas}

\subsection{Proof of Lemma \ref{lem:symmetry}}
\label{sec:proof_of_lemma_ref_lem_symmetry}

Let $k_x=k(x,\cdot)$ for brevity. 
Our proof needs the following well-known result: By the reproducing property of $\cH$,  it holds for any $x,x'\in\cH$ that
\begin{equation}\label{eqn: kernel-norm}
    \langle k_x, k_{x'}\rangle_{\cH}  = k(x,x').
\end{equation}

\begin{definition}
For any $A \in \cG$, we define a linear operator $\bar{A}:\cH\mapsto\cH$ by $(\bar{A} f)(x) = f(Ax)$ for any $f \in \cH$ and $x\in\cX$.
\end{definition}
\begin{lemma}
For any $x\in\cX,A\in \cG$, $\bar{A} k_x = k_{A^{-1}x}$.
\end{lemma}
\begin{proof}
By the symmetry condition,  it holds for any $x,x'\in\cX$ that $k(A^{-1}x, x')=k(AA^{-1}x, Ax')=k(x,Ax')$. This means $\bar{A} k(x,\cdot) = k(A^{-1}x,\cdot)$.
\end{proof}

Moreover, we show it possesses the following properties. 
\begin{lemma}\label{lemma: commutativity}
For any $A\in\cG$, $\bar{A}$ and $\cL$ are commutable.
\end{lemma}
\begin{proof}
For  any $A \in \cG,f \in \cH$, and $x\in\cX$, we have
\begin{align*}
    (\cL \bar{A} f)(x) &= \int_{\cX}k(x,x')f(Ax')\dd\rho(x') \stackrel{\text{let } z=Ax'}{=} \int_{\cX}k(x,A^{-1}z)f(z)\dd \rho(z) \\
    &= \int_{\cX} k(Ax,z)f(z)\dd \rho(z) = (\bar{A}\cL f)(x),
\end{align*}
where the second step uses the measure-preserving property of the group action and third steps  is due to $k(x,A^{-1}z)=k(Ax,AA^{-1}z)=k(Ax,z)$.
\end{proof}

\begin{lemma}\label{lemma: norm-presevation}
For any $A\in\cG$, $\|\bar{A} f\|_{\cH} =\|f\|_{\cH}$ for any $f\in\cH$.
\end{lemma}
\begin{proof}
For any $\alpha_1,\dots, \alpha_m\in\RR$ and $x_1,\dots,x_m\in\cX$, we have 
\begin{align*}
    \left\|\bar{A}\left(\sum_{j=1}^m \alpha_j k_{x_j}\right)\right\|_{\cH}^2&= \left\|\sum_{j=1}^m \alpha_j \bar{A} k_{x_j}\right\|_{\cH}^2=  \left\|\sum_{j=1}^m \alpha_j  k_{A^{-1}x_j}\right\|_{\cH}^2\\ 
    &=\sum_{i,j=1}^m \alpha_i\alpha_j \left\langle k_{A^{-1}x_i}, k_{A^{-1}x_j}\right\rangle_{\cH} = \sum_{i,j=1}^m \alpha_i\alpha_j k(A^{-1}x_i, A^{-1}x_j)\\ 
    &= \sum_{i,j=1}^m \alpha_i\alpha_j k\left(x_i, x_j\right)= \sum_{i,j=1}^m \alpha_i\alpha_j \langle k_{x_i}, k_{x_j}\rangle_{\cH}\\ 
    &=     \left\|\sum_{j=1}^m \alpha_j k_{x_j}\right\|_{\cH}^2,
\end{align*}
where the forth and sixth steps use \eqref{eqn: kernel-norm}. The above equality shows that $\bar{A}$ preserves the norm on the subspace $\cH_0:=\spn\{k_x\}_{x\in\cH}\subset \cH$. Since $\cH_0$ is dense in $\cH$, the continuous extension theorem ensures that $\bar{A}$ also preserves the norm on $\cH$.
\end{proof}

\paragraph*{\underline{Proof of Lemma~\ref{lem:symmetry}.}}
For any $x, x' \in \cX$, let $A \in \cG$ satisfy $A^{-1} x = x'$. Then, we have
\begin{align*}
    \|\cL^{\frac{\gamma-1}{2}}(\cL + \lambda )^{-\frac{\gamma}{2}}k_{x'}\|_\cH &=\|\cL^{\frac{\gamma-1}{2}}(\cL + \lambda )^{-\frac{\gamma}{2}}k_{A^{-1}x}\|_\cH\\ 
    &=  \|\cL^{\frac{\gamma-1}{2}}(\cL + \lambda )^{-\frac{\gamma}{2}}\bar{A} k(x,\cdot)\|_\cH \\
    &=  \|\bar{A}\cL^{\frac{\gamma-1}{2}}(\cL + \lambda )^{-\frac{\gamma}{2}}k(x,\cdot)\|_\cH =  \|\cL^{\frac{\gamma-1}{2}}(\cL + \lambda )^{-\frac{\gamma}{2}}k(x,\cdot)\|_\cH,
\end{align*}
where the third and forth steps follow from Lemmas~\ref{lemma: commutativity} and \ref{lemma: norm-presevation}, respectively.
Hence, $ \|\cL^{\frac{\gamma-1}{2}}(\cL + \lambda )^{-\frac{\gamma}{2}}k_x\|_\cH$ is  constant in $x$. Therefore, when $q(x)\equiv 1$, we have  $F_\gamma(\lambda) = N_\gamma(\lambda)$.

\subsection{Proof of Lemma \ref{lemma: dof}}
\label{sec: proof-dof}

Let $m(\lambda)=\max\{j: \mu_j\ge \lambda, j\in \NN^+\}$. First, we have
\[
	N_\gamma(\lambda) \ge \sum_{j\leq m(\lambda)}\frac{\mu_j^\gamma}{(\mu_j+\lambda)^\gamma}\ge \frac{m(\lambda)}{2^\gamma}.
\]
Second,
\begin{align}
\notag	N_\gamma(\lambda) &= \sum_{j\leq m(\lambda)} \frac{\mu_j^\gamma}{(\mu_j+\lambda)^\gamma} +  \sum_{j\ge m(\lambda)+1} \frac{\mu_j^\gamma}{(\mu_j+\lambda)^\gamma} \\ 
&\leq m(\lambda) + \lambda^{-\gamma} \sum_{j\ge m(\lambda)+1} \mu_j^\gamma=:m(\lambda) + q(\gamma,\lambda),
\end{align}
where $q(\gamma,\lambda)=\lambda^{-\gamma} \sum_{j\ge m(\lambda)+1} \mu_j^\gamma$. 

{\bf The case of polynomial decay.} When $\mu_j\asymp j^{-\beta}$, we have $m(\lambda)\asymp \lambda^{-\frac{1}{\beta}}$ and
\[
q(\gamma,\lambda) \leq \lambda^{-\gamma} \sum_{j\ge m(\lambda)+1}j^{-\beta\gamma}\leq \lambda^{-\gamma}\frac{1}{\beta \gamma-1} (m(\lambda)+1)^{1-\beta s}\lesssim \lambda^{-\gamma} (\lambda^{-\frac{1}{\beta}})^{1-\beta\gamma} = \lambda^{-\frac{1}{\beta}}.
\]
Together, we have $N_\gamma(\lambda)\asymp\lambda^{-\frac{1}{\beta}}$.

{\bf The case of exponential decay.} When $\mu_j \asymp c^{-j}$, we have $m(\lambda)\asymp \log(1/\lambda)$ and 
\[
q(\gamma,\lambda) \le \lambda^{-\gamma} \sum_{j\ge m(\lambda)+1} c^{-\gamma j} \lesssim  \lambda^{-s}\frac{\lambda^{\gamma}}{1-c^{-\gamma}}\lesssim 1.
\]
Thus it follows that $N_\gamma(\lambda)\asymp_\gamma \log(1/\lambda)$.

\subsection{Proof of Lemma~\ref{lemma: uniform-necessity}}
\label{sec:proof_of_lemma_ref_lemma_uniform_necessity}

\begin{proof}
By the condition,  $\sup
_{\|f\|_{\cH^s} \le 1}\|\hat{f}_\lambda\|_{\cH^p} < \infty$.
 Noticing that $\hat{f}_\lambda = \hat{k}_n(\cdot)^\top \hat{a}_\lambda$,   we have 
\begin{equation*}
    \|\hat{f}_\lambda\|_{\cH^p}^2 = \hat{a}_\lambda^\top G_n \hat{a}_\lambda,
\end{equation*}
where $G_n \in \RR^{n\times n}$ and
$
    (G_n)_{i,i'} = \frac{1}{n}\sum_{j=1}^\infty \mu_j^{2-p} e_j(x_i)e_j(x_{i'}).
$
Noticing that $k(\cdot,x_1),\dots,k(\cdot,x_n)$ are linearly independent, we have that the smallest eigenvalue of $\kappa_n:=\lambda_{\min}(G_n)>0$. Hence,
\begin{align*}
    \sup_{\|f^*\|_{\cH^s}\le 1}\|\hat{f}_\lambda\|^2_{\cH^p} & \ge \sup_{\|f^*\|_{\cH^s} \le 1}\kappa_n\|\hat{a}_\lambda\|^2 \ge \lambda^{-2}\kappa_n\sup_{\|f^*\|_{\cH^s} \le 1}\|\hat{y}\|^2= \lambda^{-2}\kappa_n\sup_{\|f^*\|_{\cH^s} \le 1} \fn\sum_{i=1}^n|f^*(x_i)|^2.
\end{align*}
This implies that if $\sup
_{\|f\|_{\cH^s} \le 1}\|\hat{f}_\lambda\|_{\cH^p} < \infty$ holds almost surely, then  $\sup_{\|f\|_{\cH^s} \le 1}|f(x)|^2 < \infty$ must hold almost surely. 

Notice that
\begin{equation}\label{eqn: 000}
     \sup_{\|f\|_{\cH^s} \le 1}|f(x)|^2= \sup_{\sum_{j=1}^\infty \mu_j^{-s} a_j^2 \le 1}|\sum_{j=1}^\infty a_j e_j(x)|^2 = \sum_{j=1}^\infty \mu_j^s |e_j(x)|^2.
\end{equation}
Analogous to the proof of Lemma~\ref{lem:symmetry}, we can show that $\sum_{j=1}^\infty \mu_j^s |e_j(x)|^2$ is independent of $x$. Therefore, the condition  $\sup_{\|f\|_{\cH^s} \le 1}|f(x)|^2 < \infty$ holding a.s.~implies that $\sum_{j=1}^\infty\mu_j^s<\infty$.
\end{proof}

\subsection{Proof of Lemma~\ref{lemma: d2}}
Let $g_q^*(x) = \frac{g^*(x)}{\sqrt{q(x)}}$. By the definition of $m(\lambda)$, we have
\begin{equation}\label{eqn: 0001}
        \|g_q^*\|_{\rho'}^2 = \|g^*\|_{\rho}^2 = \sum_{j= m(\lambda)+1}^\infty \mu_j^{s} a_j^2 \leq \lambda^s\sum_{j= m(\lambda)+1}^\infty a_j^2\leq \lambda^s.
\end{equation}
Applying Theorem \ref{thm:embedding} gives
    \begin{equation}\label{eqn: 0002}
        \|g_q^*\|_{L^{\frac{2\gamma}{\gamma-s}}(\rho')}  \le C_{s,\gamma}\lambda^{\frac{s}{2}}F^{\frac{s}{2\gamma}}_\gamma(\lambda)\|g^*\|_{\cH^s} \leq C_{s,\gamma}\lambda^{\frac{s}{2}}F^{\frac{s}{2\gamma}}_\gamma(\lambda).
    \end{equation}
    Let $Y=(Y_1,\dots,Y_n)^\top \in \RR^n$ with $Y_i = \frac{1}{\sqrt{n}} g_q^*(X_i)$ and $X_i\sim \rho'$. By the Rosenthal-type inequality (Lemma~\ref{lemma: rosenthal-inequality}), we have
    \begin{align*}
        \left(\EE \|Y\|_2^\frac{2\gamma}{\gamma-s}\right)^{\frac{\gamma-s}{\gamma}} &= \left(\EE\left[\left(\sum_{i=1}^n |Y_i|^2\right)^{\frac{\gamma}{\gamma-s}}\right]\right)^{\frac{\gamma-s}{\gamma}} \le C_{s,\gamma}\max\left\{\left(n \EE|Y_1|^\frac{2\gamma}{\gamma-s}\right)^\frac{\gamma-s}{\gamma}, n \EE|Y_1|^2\right\}\\
        &=C_{s,\gamma}\max\left\{n^{-\frac{s}{\gamma}}\|g_q^*\|^2_{L^\frac{2\gamma}{\gamma-s}(\rho')},  \|g_{q}^*\|^2_{\rho'} \right\} \\ 
        &\stackrel{(i)}{\leq}  C_{s,\gamma} \max\left\{\left(\frac{F_\gamma(\lambda)}{n}\right)^{s/\gamma}, 1\right\}\lambda^s \\ 
        &\stackrel{(ii)}{\leq} C_{s,\gamma} \lambda^s.
    \end{align*}
    where $(i)$ uses \eqref{eqn: 0001} and \eqref{eqn: 0002}; $(ii)$ follows from $F_\gamma(\lambda)\leq F(\lambda_n)\leq n$.

    By  Markov-type concentration inequality (Lemma~\ref{eqn: markov}), for any $\delta_2\in (0,1)$, we have with probability at least $1-\delta_2$ that
    \begin{equation}\label{eq:thm45_5}
        \|Y\|_2 \le C_{s,\gamma} \delta_2^{-\frac{\gamma-s}{2\gamma}} \lambda^{s/2}.
    \end{equation}
\qed

\subsection{Proof of Lemma~\ref{lemma: Hp-norm-KRR2}}
Recalling 
\[
    \hf_\lambda = \Phi \cS_n^*(\cS_n\cS_n^*+\lambda I_n)^{-1} \hy = \Phi(\cS_n^*\cS_n + \lambda )^{-1}\cS_n^* \hy = \Phi(\hT_n+\lambda)^{-1}\cS_n^* \hy,
\] 
we have that
    \begin{align}\label{eq:thm45_6}
     \notag   \|\hf_\lambda\|_{\cH^p} &= \|\cT^{-\frac{p}{2}}\Phi(\hT_n + \lambda_n )^{-1} \cS_n^* \hy\|_\rho \\ 
        &\le \|\cT^{-\frac{p}{2}}\Phi(\hT_n + \lambda )^{-\frac{1}{2}}\|\|(\hT_n + \lambda )^{-\half}\cS_n^*\|\|\hy\|_2.
    \end{align}
    First,  we have
    \begin{equation*}\label{eq:thm45_8}
        \|(\hT_n + \lambda )^{-1}\cS_n^*\|^2 = \|(\hT_n + \lambda )^{-\frac{1}{2}}\cS_n^*\cS_n (\hT_n + \lambda )^{-\frac{1}{2}}\| = \|\hT_n(\hT_n + \lambda)^{-1}\| \le 1.
    \end{equation*}
    Second, noticing $\Phi^* \cT^{-p}\Phi = \cT^{1-p}$, we have
    \begin{align*}
        \|\cT^{-\frac{p}{2}}\Phi(\hT_n + \lambda )^{-\frac{1}{2}}\|^2 &= \|(\hT_n + \lambda )^{-\frac{1}{2}}\Phi^*\cT^{-p}\Phi(\hT_n + \lambda )^{-\frac{1}{2}}\|\\ 
        &=\|(\hT_n + \lambda )^{-\frac{1}{2}}\cT^{1-p}(\hT_n + \lambda )^{-\frac{1}{2}}\|\\ 
        & = \|\cT^{\frac{1-p}{2}}(\hT_n + \lambda )^{-\half}\|^2\\ 
        & \le \lambda^{-p}\|\cT^{\frac{1-p}{2}}(\hT_n + \lambda )^{\frac{p-1}{2}}\|^2 \le B^2 \lambda^{-p}.
    \end{align*}   
\qed

\end{document}